\newtheorem{theorem}{Theorem}
\newtheorem{lemma}{Lemma}
\newtheorem{definition}{Definition}
\newtheorem{example}{Example}
\newtheorem{corollary}{Corollary}
\newenvironment{tableenv}
    {\begin{center}
	\begin{scriptsize}

    }
    { 

	\end{scriptsize}
    \end{center}
    }
\newcommand{\qed} {\hfill{$\Box$}}
\newenvironment{proof}[1]{\noindent{\bf Proof#1: }}{\qed\medskip}
\newcommand{\eat}[1]{}
\newcommand{\ind}{\mathit{indegree}}
\newcommand{\supps}{\mathit{supports}}
\newcommand{\atts}{\mathit{attacks}}
\newcommand{\sgn}{\mathit{sgn}}
\newcommand{\abs}{\mathit{abs}}
\newcommand{\arggraph}{\ensuremath{\mathds{A}}\xspace}
\newcommand{\argset}{\ensuremath{\mathcal{A}}\xspace}
\newcommand{\bs}{\ensuremath{\texttt{S}}\xspace}
\newcommand{\D}{\mathbb{D}}
\newcommand{\R}{\mathbb{R}}
\newcommand{\N}{\mathbb{N}}
\newcommand{\accdegr}[3]%
	{\ensuremath{\texttt{Deg}
	^{\ifthenelse{\equal{#1}{}}{\bs}{#1}}%
	_{\ifthenelse{\equal{#2}{}}{\arggraph}{#2}}
	{\ifthenelse{\equal{#3}{}}{ }{(#3)}}
	\xspace}}
\newcommand{\accdegrvec}[2]%
	{\ensuremath{\texttt{Deg}
	^{\ifthenelse{\equal{#1}{}}{\bs}{#1}}%
	_{\ifthenelse{\equal{#2}{}}{\arggraph}{#2}}
	\xspace}}
\newcommand{\accdegrvecnew}[2]%
	{\ensuremath{\texttt{Deg}
	^{\ifthenelse{\equal{#1}{}}{\bs}{#1}}%
	_{\ifthenelse{\equal{#2}{}}{\langle {G, w}\rangle}{#2}}
	\xspace}}
\newcommand{\maxd}{\ensuremath{\texttt{Max}_\texttt{S}}\xspace}
\newcommand{\mind}{\ensuremath{\texttt{Min}_\texttt{S}}\xspace}
\newcommand{\neutrald}{\ensuremath{\texttt{Neutral}_\texttt{S}}\xspace}	
\newcommand{\defaultarggraph}{$\arggraph  =\langle {\argset, G, w}\rangle $\xspace}
\newcommand{\alternativearggraph}{	$\arggraph ^\prime =\langle {\argset ^\prime, G^\prime, w^\prime}\rangle $\xspace}
\newcommand{\attackers}[2]{\ensuremath{\texttt{Att}_{\ifthenelse{\equal{#1}{}}
				{\arggraph}{#1}}({#2})}}
\newcommand{\defenders}[2]{\ensuremath{\texttt{Def}_{\ifthenelse{\equal{#1}{}}
				{\arggraph}{#1}}({#2})}}
\newcommand{\supporters}[2]{\ensuremath{\texttt{Sup}_{\ifthenelse{\equal{#1}{}}
				{\arggraph}{#1}}({#2})}}
\newcommand{\sattackers}[2]{\ensuremath{\texttt{sAtt}_{\ifthenelse{\equal{#1}{}}
				{\arggraph}{#1}}({#2})}}
\newcommand{\ssupporters}[2]{\ensuremath{\texttt{sSup}_{\ifthenelse{\equal{#1}{}}
				{\arggraph}{#1}}({#2})}}
\newcommand{\wasa}{\textsc{wasa}\xspace}
\newcommand{\backers}[2]{\ensuremath{\texttt{Back}_{\ifthenelse{\equal{#1}{}}
				{\arggraph}{#1}}({#2})}}
\newcommand{\detractors}[2]{\ensuremath{\texttt{Detr}_{\ifthenelse{\equal{#1}{}}
				{\arggraph}{#1}}({#2})}}
\newcommand{\parent}[2]{\ensuremath{\texttt{Parent}_{\ifthenelse{\equal{#1}{}}
				{\arggraph}{#1}}({#2})}}
\newcommand{\semdir}{\ensuremath{dir}}
\newcommand{\rownorm}[1]{{\interleave #1\interleave_\infty}}
\newcommand{\matr}[1]{\left(\begin{smallmatrix}#1\end{smallmatrix}\right)}
\newcommand{\twocase}[3]{
\left\{
\begin{array}{ll}
  #1,&\mbox{if }#2\\
  #3,&\mbox{otherwise}
\end{array}\right.}
\newcommand{\largefrac}[2]{\frac{\mathstrut#1}{\mathstrut#2}}
\newcommand{\aggr}{{\vec{\alpha}}}
\newcommand{\aggrp}{{\alpha}}
\newcommand{\inflp}{{\iota}}
\newcommand{\infl}{{\vec{\iota}}}
\newcommand{\charac}[1]{\textbf{#1}}
\newcommand{\thickhline}{%
    \noalign {\ifnum 0=`}\fi \hrule height 1pt
    \futurelet \reserved@a \@xhline
}
\newcolumntype{"}{@{\hskip\tabcolsep\vrule width 1pt\hskip\tabcolsep}}
\begin{document}


\pdfinfo{
/Title (Modular Semantics and Characteristics for Bipolar Weighted Argumentation Graphs)
/Author (Till Mossakowski, Fabian Neuhaus)}

\title{Modular Semantics and Characteristics for Bipolar Weighted Argumentation Graphs}

\author{Till Mossakowski \and Fabian Neuhaus\\
Otto-von-Guericke Universit\"at Magdeburg, Germany}

\maketitle

\begin{abstract}
This paper addresses the semantics of weighted argumentation graphs that are bipolar, i.e.\ 
contain both attacks and supports for arguments. It builds on previous work by 
Amgoud, Ben-Naim et. al.
\cite{DBLP:conf/kr/AmgoudB16,DBLP:conf/ijcai/AmgoudB16,DBLP:conf/ecsqaru/AmgoudB17,DBLP:conf/ijcai/AmgoudBDV17}. We study the various characteristics of acceptability semantics that
have been introduced in these works, and  introduce the notion 
of a \emph{modular acceptability semantics}.  A semantics is 
 modular if it cleanly separates \emph{aggregation} of
attacking and supporting arguments (for a given argument $a$) from the
computation of their \emph{influence} on $a$'s initial weight. 
We show that the
various semantics for bipolar argumentation graphs from the literature 
may be analysed as a composition of an aggregation function with an 
influence function. 
Based on this modular framework, we prove
general convergence and divergence theorems. We demonstrate that all well-behaved
modular acceptability semantics converge for all acyclic graphs and that no
\emph{sum}-based semantics can converge for all graphs. In particular,
we show divergence of Euler-based
semantics \cite{DBLP:conf/ecsqaru/AmgoudB17} for certain cyclic
graphs. Further,  we provide the first semantics for bipolar weighted
graphs that converges for all graphs.
\end{abstract}

\section{Introduction}

Abstract argumentation has been extensively studied since Dung's
pioneering work \cite{DBLP:journals/ai/Dung95} on argumentation graphs
featuring an attack relation between arguments. Dung's
framework has been generalised to
gradual (or rank-based, or weighted) argumentation graphs that assign
real numbers as weights to arguments (instead of just Boolean values),
as in \cite{DBLP:conf/kr/AmgoudB16,DBLP:conf/ijcai/AmgoudB16,DBLP:conf/ijcai/AmgoudBDV17}. These initial weights may be interpreted as representing the  acceptability of an argument on its own, that is without considering the effects of supports or attacks by other arguments. Initial weights 
may be implemented in different ways; e.g., in a previous work  we calculated them based on search results \cite{thiel2017web}, while in \cite{DBLP:conf/ijcai/LeiteM11} they are based on votes. 

In this paper we study bipolar \cite{DBLP:journals/ijar/CayrolL13}
weighted argumentation graphs that contain both attack and support
relationships, building in particular on the results of \cite{DBLP:conf/ecsqaru/AmgoudB17}.
One major research question in this area is the following: Given a collection of arguments that are connected by  attack and support relationships and their initial weights, what is the acceptability of the arguments? 
An acceptability semantics provides an answer to this question by providing a partial function that 
assigns acceptability degrees to arguments based on their initial weighs and the graph of the  
interactions between the arguments. Given that there are many possible acceptability semantics,  the challenge 
is to find acceptability semantics that are defined for most or even all argumentation graphs and 
display the expected characteristics of such a semantics. What characteristics an acceptability semantics should display is another research question. However, it is probably safe to state that 
some characteristics express expectations that are widely shared, while others express particular philosophical intuitions that won't be universally accepted or design choices that won't translate to other application contexts. Thus, it is unlikely that there will be the \emph{one} 
acceptability semantics for bipolar weighted argumentation graphs that everybody can agree on.
For this reason, we believe that is fruitful not to focus on individual acceptability semantics, but to study classes of acceptability semantics and characterise them in a modular way. This allows one to choose a particular acceptability semantics with the characteristics that are desirable in a given context.

The main contributions of this paper are the following: We show that the acceptability semantics for bipolar weighted argumentation graphs that 
have been studied in the literature  may be reformulated in a way that the  acceptability degree of an argument is calculated in two steps: firstly, an 
\emph{aggregation} function $\aggrp$ combines the effect of the predecessors of an argument; secondly, the 
\emph{influence} function $\inflp$ calculates the acceptability degree of the argument based on the result of the aggregation and the initial weights (see Fig.~\ref{fig:modular}). Acceptability semantics that follow this two-step approach, we call \emph{modular}. 
The fact that the acceptability semantics in the literature are all modular is no accident. Authors are interested in acceptability semantics that exhibit a number of desirable characteristics and the two-step approach of a modular acceptability semantics makes it easier to satisfy these characteristics.

Using a matrix representation of the acceptability semantics
(cf.\ e.g.\ \cite{DBLP:conf/comma/CoreaT16}), we propose a
mathematical elegant reformulation of the desirable characteristics of
modular acceptability semantics.  These characteristics provide
requirements on the aggregation function or the influence function
(see Table~\ref{tab:our-char}).  We distinguish between structural,
essential and optional characteristics, and show that some
characteristics from the literature are entailed by the structural and
essential characteristics.  While the reformulation of the
characteristics in a matrix notation may be a little inconvenient for readers
who are already familiar with the established notation in
\cite{DBLP:conf/kr/AmgoudB16,DBLP:conf/ijcai/AmgoudB16,DBLP:conf/ecsqaru/AmgoudB17,DBLP:conf/ijcai/AmgoudBDV17},
it has major benefits: the great elegance of Equation~(\ref{eq:rec1}),
secondly, the conciseness and ease of verification of characteristics,
also due to consistent application of a locality principle, and last
but not least, very general convergence and divergence results that we
can obtain, which crucially use matrix norms.

 \begin{figure}
\resizebox{\columnwidth}{!}{%
 {\small
 \xymatrixcolsep{1ex}
 \xymatrixrowsep{1ex}
 $$\xymatrix{
 \Text{attack/support\\structure}\ar[dr]\\
 &\Text{aggregation}\ar[r]<-0.5ex>&\Text{influence}\ar[r]<-0.5ex>&\Text{acceptability\\degree of\\ argument}\\
 \Text{acceptability\\degrees\\of parents}\ar[ur]
 &&\Text{initial weights}\ar[u]
 }$$
 }
 }
 \caption{Modular structure of acceptability semantics\label{fig:modular}}
 \vspace{-1.5em}
 \end{figure}
 Further, we provide an overview of five%
 \footnote{In addition, we discuss  two further functions
   from the non-bipolar literature,
   which satisfy the essential characteristics only if they
   are restricted to argumentation graphs with either only supports or only  attacks.} possible implementations of the aggregation 
function and five
\footnote{Further, we discuss  three additional  functions from the non-bipolar literature,
   which, however, are not usable for bipolar graphs in an unrestricted way.} possible implementations of the influence function, show that they meet the essential 
characteristics and discuss their optional characteristics (see Table~\ref{tab:aggr-impl}). 

While in the literature, convergence and divergence is studied for
each semantics separately, our modular approach allows us to obtain
convergence and divergence theorems for whole classes of semantics. 
E.g., all of the 25 possible combinations of {aggregation} functions and {influence} functions that we discuss in this paper  lead to semantics that are well-defined for 
acyclic graphs since they are well-behaved modular acceptability semantics (Theorem \ref{thm:limit-acyclic}).
However, none of the aggregation semantics from the literature is well-defined for all bipolar 
argumentation graphs. We propose 3 novel semantics that are well-defined for all bipolar argumentation 
graphs (see Table~\ref{tab:sem}) and characterise for several other semantics their convergence 
conditions.
Following our technical report \cite{DBLP:journals/corr/MossakowskiN16},
we recall some results about partially converging semantics.
Further, we show a systematic approach for proving divergence of acceptability semantics, which are based on the most popular implementation of the aggregation function.
Indeed, we show that no such acceptability semantics can
converge for all graphs.

The outline of the paper is as follows.  In section~\ref{sec:basics}
we introduce the basic notions.  In section~\ref{sec:characteristics}
we discuss the characteristics an acceptability semantics should have,
generalising and sometimes strengthening those of
\cite{DBLP:conf/kr/AmgoudB16,DBLP:conf/ijcai/AmgoudB16,DBLP:conf/ecsqaru/AmgoudB17,DBLP:conf/ijcai/AmgoudBDV17,DBLP:conf/aaai/BonzonDKM16}.
Section~\ref{sec:semantics-ag} is the central section of the paper.
It introduces our modular approach and applies it to several semantics
known from the literature, as well as to new ones. 
In section \ref{sec:comparison} we compare some selected 
modular acceptability semantics with the help of an example. It illustrates that 
different modular acceptability semantics  may support different philosophical intuitions and design choices, 
and, thus, may differ significantly in their evaluation of arguments.
Sections~\ref{sec:convergence} and~\ref{sec:divergence} prove general
convergence and divergence results, respectively. Section~\ref{sec:related}
discusses related work.
Finally, in
section~\ref{sec:future} we discuss some conclusion and future work.
Some proofs have been relegated to an appendix.

%

\section{Basic Concepts}\label{sec:basics}
In our definitions of weighting, argumentation graphs, and acceptability semantics we follow to a
 large degree \cite{DBLP:conf/ijcai/AmgoudB16,DBLP:conf/ijcai/AmgoudBDV17,DBLP:conf/ecsqaru/AmgoudB17}. One  difference of our approach is that
we allow weights of arguments to stem from any
connected subset $\D\subseteq\R$ 
 of the reals. Such a $\D$ is a parameter of our approach;
 it will serve as domain for argument
valuations. 
\cite{DBLP:conf/ijcai/AmgoudB16,DBLP:conf/ijcai/AmgoudBDV17}
assume that $\D=[0,1]$.
As in \cite{DBLP:conf/iberamia/BudanVS14} we assume that the attack
and the support relationships are disjoint, i.e.\ no argument
simultaneously attacks and supports another one.  We organise
argumentation graphs into incidence matrices with attack
represented by $-1$ \cite{DBLP:conf/comma/CoreaT16}.

We assume that $0\in\D$ is the \emph{neutral acceptability
  degree}. Attacks or supports by arguments
with the neutral acceptability degree will have no effect. Thus, arguments 
with neutral acceptability degree are semantically inert.  In \cite{DBLP:journals/corr/MossakowskiN16} we allowed $\neutrald$ to be any value in $\D$. 
In this paper we follow the convention in the literature to assume that 
$\neutrald = 0$. It has the benefit to simplify the presentation of some definitions and theorems.  

If
$\D$ has a minimum, we denote it by $\mind$, otherwise,
$\mind$ is undefined. Likewise with $\maxd$ for maximum.
Note that in
\cite{DBLP:conf/ijcai/AmgoudB16,DBLP:conf/ecsqaru/AmgoudB17,DBLP:conf/ijcai/AmgoudBDV17},
the minimum degree $\mind = 0$ (characterising a worthless argument, which
cannot be weakened further) is the same as the neutral degree
(characterising a semantically inert argument, which cannot influence
other arguments). We generalise this approach by allowing the
differentiation between minimum and neutral value. For example,
if $\D=[-1,1]$, then the minimum acceptability (or maximum unacceptability) is differentiated from the neutral acceptability degree.
Unacceptable arguments can be modelled using a negative degree.
The neutral value $0$
can be used for arguments that are right on the boundary between
acceptability and unacceptability. This is particularly
useful for initial weights, if we intend to represent an open mind
about the acceptability of an argument. %
 Note that an
attack on a neutral argument may turn it into an unacceptable
argument, and a support by an unacceptable argument may weakens
the acceptability degree of an argument.

\begin{definition}[Argumentation Graph]
	A weigh\-ted  attack/support argumentation graph (\wasa) 
	is a triple \defaultarggraph{}, 	where 
	\begin{itemize}
		\item \argset is a vector of size $n$ (where $n\in \mathbb{N}^+$), where all components of \argset are pairwise distinct; 
		\item $G=\{G_{ij}\}$ is a square matrix of order $n$ with $G_{ij}\in \{-1,0,1\}$, where  $G_{ij} = -1$ means that argument $a_j$ attacks $a_i$;  $G_{ij} = 1$ means that argument $a_j$ supports $a_i$; and $G_{ij} = 0$ if neither of these; 
		\item $w\in\D^n$ is a vector of initial weights.  
	\end{itemize}	
\end{definition}
$G_i$ denotes the $i$-th row of $G$ and represents the \emph{parents} (supporters
and attackers) of argument $a_i$. By abuse of notation, for $a=a_i$,
we will also write $G_a$ for $G_i$ and $w(a)$ for $w_i$.

\begin{example}\label{ex:arggraph1}

 $
 \arggraph^{ex1} = \left \langle 
 \left(\begin{smallmatrix}
 a_1 \\
 a_2 \\
 a_3 \\
 a_4\\
 a_5 \\
 a_6 
 \end{smallmatrix}\right), 
 \left(\begin{smallmatrix}
 0 & 0 &  0 &   0 &  0 &   0  \\ 
 0 & 0 &  0 &   0 &  0 &   0  \\ 
 -1 & 0 &  0 &   0 &  0 &   0  \\ 
 0 & -1 &  0 &   0 &  0 &   0  \\ 
 0 & 0 &  0 &   1 &  0 &  -1  \\ 
 0 & 0 &  0 &   0 &  -1 &   1   
 \end{smallmatrix}\right), 
 \left(\begin{smallmatrix}
0 \\
 1 \\
 0.4 \\
 0 \\
 0.5 \\
 1
 \end{smallmatrix}\right)
 \right \rangle 
 $	\quad	
 
\end{example}

\begin{wrapfigure}{R}{0.25\columnwidth}
 \centering 
 $\xymatrix{
 \underset{0}{a_1} \ar[d]  &
 \underset{1}{a_2}  \ar[d]  \\
 \underset{0.4}{a_3}   & 
 \underset{0}{a_4} \ar@{-->}[ld]   \\
\underset{0.5}{a_5}  \ar[r]<0.5ex> &
 \underset{1}{a_6} \ar[l]<0.5ex>   \ar@{-->}@(r,u)
 }$	
 \caption{Graphical representation of $ \arggraph^{ex1}$  \label{fig:example1}}
\end{wrapfigure}



The \wasa   $ \arggraph^{ex1}$  consists of the arguments $a_1, a_2, \ldots,  a_6$. 
$a_1$ and $a_2$ are neither attacked nor supported. $a_1$ attacks $a_3$ and $a_2$ attacks $a_4$, which in turn supports $a_5$. $a_5$ and $a_6$ attack each other, while $a_6$
supports itself. 

In the graphical representation the 
attack relationships are drawn as normal arrows and support relationships as dashed arrows (see Fig.~\ref{fig:example1}).

The initial weight of an argument in a \wasa represents its initial acceptability, that is the acceptability of the argument without consideration of its relation to other arguments. In the graphical representation we write the initial weights directly under the  names of the arguments (see Fig.~\ref{fig:example1}).
The semantics of the initial weights depends on the choice of $\D$. 
For example, if $\D = [-10, 10]$, there is not much difference between the initial acceptability of the arguments in $\arggraph^{ex1}$. In contrast, if $\D=[0,1]$, then   $a_1$ is valued as an argument with the lowest possible acceptability and $a_6$ is considered as a `perfect' argument. 

Given that 0 is the neutral acceptability degree, the attack of $a_1$  on $a_3$ in  $ \arggraph^{ex1}$ should have no effect, thus the acceptability degree of $a_3$ 
should equal its initial weight, namely $0.4$. In contrast, the effect of $a_2$ on $a_4$ depends on the choice of $\D$. As mentioned above, in \cite{DBLP:conf/ijcai/AmgoudB16,DBLP:conf/ecsqaru/AmgoudB17,DBLP:conf/ijcai/AmgoudBDV17},
 $\D=[0,1]$. In this context, $a_4$ is already valued as an argument with the worst possible acceptability degree. Thus, the attack of $a_2$ on $a_4$ cannot lower its acceptability any further. Thus, the acceptability degree of $a_4$ stays at 0. Since 0 is the neutral acceptability degree, the support of $a_4$ for $a_5$ has no effect on the acceptability degree of $a_5$.
 
However, $\neutrald = \mind = 0$ is not the only reasonable choice. Assume that the initial 
weights of arguments are determined based on upvotes and downvotes on a social media site, whose voters evaluate the merit of arguments. 
In this context, it is natural to represent acceptance and rejection 
in a symmetric way.
For example, one could calculate the initial value of an argument subtracting the number of downvotes from the number of upvotes and dividing it by the total of the votes. Thus, 
the initial weight 1 would represent that 100\% of voters accept the argument, the initial weight of -1 would represent  
 that 100\% of the voters reject the argument and $0$ would represent that acceptance and rejection are in balance. Hence $\D = [-1,1]$.
Under this interpretation $a_1$ and $a_4$ in $\arggraph^{ex1}$ are neither accepted nor rejected 
by the voting community, the community as a whole is undecided about the merits of   $a_1$ and $a_4$.  For this reason, as in the previous case, the attack of $a_1$ on $a_3$ would have no impact on the acceptability degree of $a_3$. However, in contrast to the previous case the strong attack of $a_2$ on $a_4$
would push the acceptability below 0, representing the rejection of $a_4$. One open question is whether the support of an unacceptable argument has no effect or should be counted against the supported argument. E.g., should the rejection of $a_4$ in $ \arggraph^{ex1}$  impact the acceptability degree of $a_5$. And if so, how much?

  %
 %

These questions are instances of a  more general question: Given a  \wasa, how do we calculate the acceptability of the arguments based on their initial weights and their relations? Following 
\cite{DBLP:conf/ijcai/AmgoudB16,DBLP:conf/ecsqaru/AmgoudB17,DBLP:conf/ijcai/AmgoudBDV17}, an 
answer to this question is called an acceptability semantics:

\begin{definition}[Acceptability Semantics] \label{def:semantics}
An acceptability semantics is a partial function \bs transforming any \wasa \defaultarggraph into a vector $\accdegr{}{}{}{}$ in $ \D^n$, where $n$ is the number of arguments in \arggraph.
For any argument $a_i$ in \argset, $(\accdegr{}{}{})_i$ (also noted as $\accdegr{}{}{a_i}$) is called the acceptability degree of $a_i$.
	\end{definition}
We define an acceptability semantics to be a \emph{partial} function
on argumentation graphs because some semantics only converge for a
subclass of graphs. For example, in
\cite{DBLP:conf/ecsqaru/AmgoudB17}, Euler-based semantics is
explicitly defined only for a subclass of graphs.

\begin{example}\label{def:matrix-exp-sem}
  One Example for an acceptability semantics is the matrix exponential semantics.
  It is straightforward to see that 
  matrix multiplication $G\cdot G$ computes all \emph{two-step}
  relations between arguments, where an attacker of an supporter
  is regarded as a two-step attacker, an attacker of a attacker
  is regarded as a two-step supporter etc.
  Similarly, $G^k$ computes all $k$-step ancestors (i.e.\ with
  path length $k$ from the ancestor to the argument).
  Based on this, the matrix exponential semantics  is defined as follows:
$$\accdegr{exp}{\langle {\argset, G, w}\rangle}{}{}=e^G\cdot w=(\sum_{k\in \mathbb{N}}\frac{G^k}{k!})w$$
computes the acceptability degree of an argument by summing up
initial weights of all ancestors of a given argument, weighted by
the factorial of the length of the path from the ancestor to the argument. See \cite{DBLP:conf/comma/CoreaT16}
for a use of matrix exponentials in a related but different context.


\end{example}


	

\section{Characteristics of Acceptability  Semantics
}	\label{sec:characteristics}



Obviously, there are many possible acceptability semantics, including
trivial (e.g. constant) ones. 
This raises the question
of desirable characteristics for acceptability semantics.

Table~\ref{tab:characteristics} provides an overview over most of the characteristics that were 
discussed for the semantics in 
\cite{DBLP:conf/ijcai/AmgoudB16,DBLP:conf/ecsqaru/AmgoudB17,DBLP:conf/ijcai/AmgoudBDV17}.%
\footnote{We omitted  \emph{Cardinality Preference}, \emph{Quality Preference} und \emph{Compensation}, which characterise alternative answers on the question whether the quality or the quantity of attacks (and supports, respectively) is more important.}
We have added  additional useful characteristics, namely \emph{modularity}, \emph{continuity}, 
 \emph{stickiness} and  \emph{symmetry}.
%
%
%
%
%
%
%
\begin{table*}
\begin{tableenv}

\begin{tabular}{|p{3.5cm}|p{3.5cm}|p{3.5cm}|p{3cm}|}\hline
	    Support Graphs in 		          \cite{DBLP:conf/ijcai/AmgoudB16}    & Attack Graphs in      \cite{DBLP:conf/ijcai/AmgoudBDV17} & Bipolar Graphs in \cite{DBLP:conf/ecsqaru/AmgoudB17}& Our terminology \\ \hline
	 Equivalence &  Equivalence  & Bi-variate Equivalence  &   Equivalence\\ 
	 -- & -- & -- &  Modularity\\ 
     Anonymity &  Anonymity  &   Anonymity  & Anonymity\\ 
	 Independence &  Independence & Bi-variate Independence &  Independence\\ 
	 	Reinforcement & 	 	Reinforcement  &   Bi-variate Reinforcement &  Reinforcement \\ 
	Coherence & Proportionality & -- & Initial monotonicity \\ 
	 Minimality &  Maximality  & Stability  & Stability\\ 
		-- & -- & -- & Continuity \\  %
	 Dummy &  Neutrality &Neutrality  &  Neutrality \\ 
	 Non-dilution &  Directionality &  Bi-variate Directionality &  Directionality\\ 
	 	Monotony & -- & Bi-variate 	Monotony  & Parent monotonicity\\ 
	Strengthening Soundness & Weakening Soundness & -- & Soundness\\ 
	Strengthening &  -- & 	Strengthening & Strengthening\\ 
			-- &  Weakening & Weakening & Weakening\\ 
	 Boundedness &  -- & -- & Compactness \\
	Imperfection & Resilience & Resilience & Resilience \\ %
		-- & -- & -- & Stickiness \\  %
			-- & -- & Franklin &  Franklin\\ %
	 Counting &  Counting & Bi-variate 	Monotony & Counting \\
-- 	 	 &  --  & --  & Symmetry \\
		\hline %
\end{tabular}

\smallskip

\caption{Characteristics of acceptability semantics 
 }
	\label{tab:characteristics}
\end{tableenv}
\vspace{-1.5em}
\end{table*}
%
%

While the intuitions behind these characteristics are stable, their formalisations vary. 
Some
  differences are mere technicalities that reflect that
  \cite{DBLP:conf/ijcai/AmgoudB16} is concerned only with support
  relationships, \cite{DBLP:conf/ijcai/AmgoudBDV17} is only about
  attack relationships, and \cite{DBLP:conf/ecsqaru/AmgoudB17} is
  about bipolar graphs. However, there are some major conceptual
  differences. For example, \emph{strengthening} and \emph{weakening}
  in \cite{DBLP:conf/ijcai/AmgoudB16,DBLP:conf/ijcai/AmgoudBDV17}
  differ very substantially from the corresponding concepts in
  \cite{DBLP:conf/ecsqaru/AmgoudB17}. 
  Some of characteristics in \cite{DBLP:conf/kr/AmgoudB16,DBLP:conf/ijcai/AmgoudB16,DBLP:conf/ecsqaru/AmgoudB17,DBLP:conf/ijcai/AmgoudBDV17}
  are formulated using two argumentation graphs, some
  (e.g.\ \emph{reinforcement}) one argumentation graph only. For
  \emph{reinforcement} (and other characteristics), we have found a
  version involving two argumentation graphs in
  \cite{DBLP:journals/corr/MossakowskiN16} which is stronger than the
  one-graph version but still follows the same intuition.
  Thus, there is no formulation of the characteristics that is in some sense canonical.

The characteristics in Table~\ref{tab:characteristics} vary in their nature. Some characteristics 
are \emph{essential} in the sense that any well-behaved acceptability semantics should exhibit them. For example, neutrality states that an argument with the neutral acceptability degree $0$ should not impact the acceptability degree of other arguments. 

Another kind of characteristics do not describe essential features of acceptability semantics, but provide a framework for the kind of  semantics one is considering. 
For example, anonymity implies that the acceptability degree of a given argument depends only on the structure of the argumentation graph and the initial weights. Hence, anonymity prohibits a semantics from considering other factors, for example  the internal structure of the arguments including their premises and consequences. 
This is clearly not an essential characteristic in the sense that all acceptability semantics should exhibit it. On the contrary, an acceptability semantics that takes the internal structure of arguments into account may lead to interesting results. Therefore, anonymity is not an essential nor even always desirable characteristic. Rather, anonymity expresses a choice concerning 
the scope of the semantics one intends to study. 
Moreover, when adopted, it simplifies the notation of the other characteristics, and therefore is a fundamental prerequisite for these.  These kind of characteristics we call \emph{structural}. 
In a sense, they set up
the framework in which the other characteristics are formulated.

A third kind of characteristics highlight interesting features of acceptability semantics. These are often the result of philosophical choices and their desirability may be contentions (e.g., 
stickiness or Franklin). These characteristics we call \emph{optional}.%
%
%
\footnote{
The distinction between structural, essential and optional characteristics is somewhat similar to the distinction between mandatory and optional axioms in \cite{DBLP:conf/ijcai/AmgoudB16}. However, note 
in  \cite{DBLP:conf/ijcai/AmgoudB16} both essential and structural characteristics would be considered as mandatory 
and
the
  top-based semantics in \cite{DBLP:conf/ijcai/AmgoudB16} does not  satisfy all the axioms which are said to be ``mandatory''.
}
%

In the following we formulate these characteristics in a way that fits
our matrix representation of the acceptability semantics and is
tailored to modular acceptability semantics. One goal of the
reformulation is to link each characteristic to conditions on the
aggregation function or the influence function. (Some characteristics
impose conditions on only one of the functions, others on both. See
Table~\ref{tab:our-char}.)  Another goal of the redesign was to
achieve more compact and mathematically elegant representations of the
characteristics, when possible.  Note that we sometimes have slightly strengthened the characteristics in a useful
way, while always ensuring that (1) they imply the characteristics of
\cite{DBLP:conf/ecsqaru/AmgoudB17}
(Thm.~\ref{thm:French-characteristics} below) and (2) they hold for the
semantics studied in the literature.

\subsection{Structural Characteristics}

\charac{Equivalence} requires that if two arguments start out with the
same initial weight and they share the same degree of attack and
support, they have the same acceptability degree. This is a locality
principle: the degree of an argument can explained in terms of the
degrees of its parents --- without knowing all degrees in the graph.
This reduces the cognitive complexity of understanding the resulting
acceptability degrees in essential way. This holds even in cyclic
graphs, where an argument can belong to its own ancestors, which of
course leads to a recursion. This recursion can be expressed as
follows, where $G_i$ is the $i$-th row of $G$:
\begin{equation}
  \accdegrvecnew{}{}(i)=\deg(G_i,\accdegrvecnew{}{},w_i)\qquad (i=1,\ldots,n)
  \label{eq:rec0}
\end{equation}
for a \emph{degree function} $\deg:\{-1,0,1\}^{n}\times\D^n\times\D\to\D$
satisfying the property that the order of arguments does not matter, i.e.\ for any permutation matrix $P$, we have
\begin{equation}
  \deg(g,d,w)=\deg(gP^{-1},Pd,w)
  \label{eq:deg}
\end{equation}
\begin{theorem}\label{thm:equivalence}
The principle ``if two arguments start out with the same initial
weight and they share the same degree of attack and support, they have
the same acceptability degree'' follows from the existence of a
function $\deg:\{-1,0,1\}^{n}\times\D^n\times\D\to\D$ satisfying
equations (\ref{eq:rec0}) and (\ref{eq:deg}).
\end{theorem}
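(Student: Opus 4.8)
The plan is to reduce the claim to a single application of the permutation-invariance property (\ref{eq:deg}), after making precise what it means for two arguments to ``share the same degree of attack and support''. Fix a \wasa \defaultarggraph{}, write $d=\accdegrvecnew{}{}$ for its acceptability degree vector (so that (\ref{eq:rec0}) reads $d_i=\deg(G_i,d,w_i)$), and let $a_i,a_j$ satisfy $w_i=w_j$ and have the same degree of attack and support. I read the latter hypothesis as: the multiset $\{(G_{ik},d_k)\mid G_{ik}\neq 0\}$ of (sign, parent-degree) pairs for $a_i$ equals the corresponding multiset for $a_j$; equivalently, there is a bijection between the parents of $a_i$ and those of $a_j$ preserving the sign of the incident edge and the acceptability degree of the parent.

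First I would upgrade this parent-bijection to a permutation $\sigma$ of $\{1,\dots,n\}$ with $G_{ik}=G_{j,\sigma(k)}$ for all $k$ and $d_k=d_{\sigma(k)}$ for all $k$. The parent-bijection already handles the indices on which $G_{i\cdot}$ (resp.\ $G_{j\cdot}$) is nonzero, so it remains to match the zero-sets $Z_i=\{k\mid G_{ik}=0\}$ and $Z_j=\{k\mid G_{jk}=0\}$ by a $d$-preserving bijection. These sets are equinumerous (the two parent multisets have equal size), and $\{d_k\mid k\in Z_i\}=\{d_k\mid k\in Z_j\}$ as multisets: each is obtained from the full multiset $\{d_1,\dots,d_n\}$ by deleting the common multiset of parent-degrees, using cancellation in the commutative monoid of finite multisets. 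Hence a $d$-preserving bijection $Z_i\to Z_j$ exists, and gluing it to the parent-bijection yields $\sigma$.

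Now let $P$ be the permutation matrix realizing $\sigma$, so that right multiplication by $P^{-1}$ permutes the columns of a row vector and left multiplication by $P$ permutes the entries of a column vector, both according to $\sigma$; by the two defining properties of $\sigma$ this gives $G_iP^{-1}=G_j$ and $Pd=d$. Combining the ingredients: by (\ref{eq:rec0}) at rows $i$ and $j$ we have $d_i=\deg(G_i,d,w_i)$ and $d_j=\deg(G_j,d,w_j)$; by (\ref{eq:deg}), $\deg(G_i,d,w_i)=\deg(G_iP^{-1},Pd,w_i)=\deg(G_j,d,w_i)$; and since $w_i=w_j$ this last term equals $\deg(G_j,d,w_j)=d_j$. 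Therefore $d_i=d_j$, which is the asserted principle.

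The genuinely routine parts are the bookkeeping with permutation matrices and index conventions. The one step that needs a little thought is the extension of the parent-bijection to the full permutation $\sigma$ — in particular the multiset-cancellation observation guaranteeing that the leftover zero-positions of $a_i$ and of $a_j$ carry the same multiset of degrees and can thus be paired up — after which the conclusion drops out of a single substitution into (\ref{eq:rec0}) and (\ref{eq:deg}).
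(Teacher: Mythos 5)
Your proof is correct and follows essentially the same route as the paper's: build a permutation matrix $P$ with $G_iP^{-1}=G_j$ and $Pd=d$, then chain (\ref{eq:rec0}) and (\ref{eq:deg}) with $w_i=w_j$. The only difference is that you spell out the step the paper leaves implicit — that the parent-bijection extends to a full $d$-preserving permutation of $\{1,\dots,n\}$, via multiset cancellation on the zero-positions — which is a welcome added detail rather than a deviation.
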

\begin{proof}{}
Given arguments $a_i$ and $a_j$ with $w_i=w_j$ and bijections on their
supporters and attackers, these bijections can be combined into a
permutation matrix $P$ such that
$P\accdegrvecnew{}{}=\accdegrvecnew{}{}$ and $G_iP^{-1}=G_j$.  Then,
using the equations, we have
$\accdegrvecnew{}{}(j)=\deg(G_j,\accdegrvecnew{}{},w_j)=\deg(G_iP^{-1},P\accdegrvecnew{}{},w_i)=
\deg(G_i,\accdegrvecnew{}{},w_i)=\accdegrvecnew{}{}(i)$.
\end{proof}

\begin{example}\label{ex:matrix-exp-sem}
 The matrix exponential semantics in Example~\ref{def:matrix-exp-sem})
  fails to satisfy equivalence. Consider the graph
   $$\xymatrix{
 \underset{1}{a_1} \ar@{-->}[d]  &
   \\
 \underset{1}{a_2} \ar@{-->}[d]  & 
 \underset{2}{a_4} \ar@{-->}[d]   \\
\underset{1}{a_3}  &
 \underset{1}{a_5} 
 }$$	
The  matrix exponential semantics leads to the degree $\matr{1\\2\\2.5\\2\\3}$.
  Hence the parents of $a_3$ and $a_5$ have the same degree, and
  $a_3$ and $a_5$ have the same initial weight. Still, their degrees
  are different (2.5 and 3). This means that degrees cannot be
  explained locally, but only by looking at all paths into an argument.
\end{example}
Below, we introduce \emph{direct aggregation semantics}, which has
an intuition similar to matrix exponential semantics while enjoying
the \emph{equivalence} property.

\medskip
\charac{Modularity} strengthens\footnote
{  A example semantics that satisfies \emph{equivalence}, but 
	 violates \emph{modularity} 
	 would be a semantics that  computes two aggregations,
	   one for attackers and one for supporters, and then combines these with
	   the initial weight using  a ternary influence function.
While \emph{modularity} is stronger than \emph{equivalence}, 
 all semantics (in the sense
of Def.~\ref{def:semantics}) in the literature satisfy \emph{modularity}.} 
the locality principle expressed by
\emph{equivalence} in that the degree function is a
composition of two functions:
\begin{equation}
\deg(g,d,w)=\inflp(\aggrp(g,d),w),
   \label{eq:modularity}
\end{equation}
namely an \emph{aggregation}
function $\aggrp:\{-1,0,1\}^{n}\times\D^n\to\R$ and an
\emph{influence} function $\inflp:\R\times\D\to\D$. The
compositionality expressed by this can be detailed as follows (see also Fig.~\ref{fig:modular}): First,
$\aggrp$ aggregates all the degrees of the parent arguments that
influence a given argument according to the graph structure of
$G$. Second, $\inflp$ determines how the aggregated parent arguments
actually modify the initial weight of the given
argument. The condition corresponding to equation~(\ref{eq:deg})
is stated as \emph{anonymity}-2
below. Equation~(\ref{eq:rec0}) now becomes
\begin{equation}
  \accdegrvecnew{}{}(i)=\inflp(\aggrp(G_i,\accdegrvecnew{}{}),w_i)
  \qquad (i=1,\ldots,n)
  \label{eq:rec2}
\end{equation}



By entrywise action, we extend $\aggrp$ to
$\aggr:\{-1,0,1\}^{n\times n}\times\D^n\to\R^n$ and  $\inflp$ to
$\infl:\R^n\times\D^n\to\D^n$.
Equation~(\ref{eq:rec2}) then becomes
\begin{equation}
  \accdegrvecnew{}{}=\infl(\aggr(G,\accdegrvecnew{}{}),w)
  \label{eq:rec1}
\end{equation}
%

\medskip
\charac{Anonymity} expresses that acceptability degrees are invariant
under renaming and reshuffling of the arguments. Firstly, this means that only the incidence matrix $G$
of a graph matters, while the set of arguments $\argset$ does not
 (\emph{Anonymity}-1). We thus
formulate all characteristics using incidence matrices, as already
explained above. 
Hence, as already
done in equations~(\ref{eq:rec0}), (\ref{eq:rec2}) and (\ref{eq:rec1}), we
 identify a \wasa\ with a pair $\langle {G, w}\rangle$,
omitting $\argset$.
Secondly,
anonymity means invariance under graph isomorphism. This is not a structural,
but an essential characteristics and is therefore discussed below.

With these structural prerequisites, we are now ready for the
following definition that is fundamental to our approach:
\begin{definition}[Modular Acceptability Semantics]\label{def:modularSemantics}
A modular acceptability semantics $(\D,\aggrp,\inflp)$
consists of a connected domain $\D\subseteq\mathbb{R}$ with neutral value $0\in\D$, an
aggregation function $\aggrp:\{-1,0,1\}^{n}\times\D^n\to\R$ and an
influence function $\inflp:\R\times\D\to\D$, satisfying the structural
characteristics in Table~\ref{tab:our-char}.
\end{definition}
 It is clear that, using equation~(\ref{eq:rec1}), each modular acceptability
semantics induces an acceptability semantics (see Def.~\ref{def:semantics}), but not 
every acceptability semantics is modular. 
In the following we will consider only {modular} acceptability semantics.  Assuming modularity in the formulation of other characteristics has the benefit that it simplifies 
their formulation and makes it easier to test, for a given acceptability semantics, whether 
it possesses these characteristics. Further, it enables the recombination of different influence and aggregation functions. As we will show, 
all acceptability semantics 
 (in the
sense of Def.~\ref{def:semantics}) that have been studied in the literature are modular acceptability semantics. This illustrates that {modular} acceptability semantics form a rich and interesting subclass of acceptability semantics.


\subsection{Essential Characteristics}

\charac{Anonymity-2} expresses that acceptability degrees are invariant
under bijectively reordering (via a graph isomorphism) the arguments.
It can be
expressed in the simplest form as
$$\aggrp(gP^{-1},Pd)=\aggrp(g,d)\qquad\text{(\emph{Anonymity}-2)}$$
for any permutation matrix $P$. (Graph isomorphisms can, at the level of
matrices, be expressed as permutation matrices.)

\begin{lemma}\label{lem:anonymity-simplified}
  \emph{Anonymity}-2 implies that
  if $P$ is a permutation matrix, then
$\aggr(GP^{-1},Pd)=\aggr(G,d)$ and
$\aggr(PGP^{-1},Pd)=P\aggr(G,d)$.
\end{lemma}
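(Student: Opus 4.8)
The plan is to derive both identities directly from the single-index statement of \emph{Anonymity}-2, namely $\aggrp(gP^{-1},Pd)=\aggrp(g,d)$ for every permutation matrix $P$, together with the way $\aggr$ is obtained from $\aggrp$ by entrywise action: the $i$-th component of $\aggr(G,d)$ is $\aggrp(G_i,d)$, where $G_i$ is the $i$-th row of $G$. The only facts I need about permutation matrices are elementary: $P^{-1}=P^{\mathsf T}$, right-multiplication by $P^{-1}$ permutes columns, left-multiplication by $P$ permutes rows, and the row operation and column operation commute.

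For the first identity, fix a permutation matrix $P$ and let $\pi$ be the associated permutation, so that $(Pd)_k = d_{\pi^{-1}(k)}$ and the $i$-th row of $GP^{-1}$ is $G_i P^{-1}$ (right-multiplication acts on columns and therefore leaves the row index untouched, only reshuffling the entries within each row). Then, componentwise,
\begin{equation*}
  \bigl(\aggr(GP^{-1},Pd)\bigr)_i = \aggrp\bigl((GP^{-1})_i, Pd\bigr) = \aggrp\bigl(G_i P^{-1}, Pd\bigr) = \aggrp(G_i, d) = \bigl(\aggr(G,d)\bigr)_i,
\end{equation*}
where the third equality is exactly \emph{Anonymity}-2 applied with $g = G_i$. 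Since this holds for every $i$, we get $\aggr(GP^{-1},Pd)=\aggr(G,d)$.

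For the second identity I would first note the bookkeeping fact that the $i$-th row of $PGP^{-1}$ equals $G_{\pi^{-1}(i)} P^{-1}$: left-multiplication by $P$ sends row $j$ of $GP^{-1}$ to row $\pi(j)$, i.e.\ row $i$ of $PGP^{-1}$ is row $\pi^{-1}(i)$ of $GP^{-1}$, which is $G_{\pi^{-1}(i)}P^{-1}$. Hence
\begin{equation*}
  \bigl(\aggr(PGP^{-1},Pd)\bigr)_i = \aggrp\bigl(G_{\pi^{-1}(i)}P^{-1}, Pd\bigr) = \aggrp\bigl(G_{\pi^{-1}(i)}, d\bigr) = \bigl(\aggr(G,d)\bigr)_{\pi^{-1}(i)} = \bigl(P\,\aggr(G,d)\bigr)_i,
\end{equation*}
using \emph{Anonymity}-2 again in the second step and the definition of left-multiplication by $P$ in the last step. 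As this holds for all $i$, $\aggr(PGP^{-1},Pd)=P\,\aggr(G,d)$. Alternatively, the second identity follows from the first by replacing $G$ with $PG$ and observing that $\aggr(PG,d)$ is just the row-permutation $P\,\aggr(G,d)$ of $\aggr(G,d)$, since permuting the rows of the matrix permutes the resulting aggregation vector in the same way while leaving $d$ alone.

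The only genuine subtlety — and the one place where care is needed — is getting the index bookkeeping right: distinguishing the effect of right-multiplication by $P^{-1}$ (which acts \emph{within} each row and is what \emph{Anonymity}-2 is about) from the effect of left-multiplication by $P$ (which permutes \emph{which} row sits in position $i$, and is what produces the outer $P$ in the second identity). Once these two actions are cleanly separated, both claims are immediate from applying \emph{Anonymity}-2 row by row; there is no analytic content.
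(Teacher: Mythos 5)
Your proof is correct and follows essentially the same route as the paper's: the first identity is Anonymity-2 applied row by row (since $\aggr$ acts entrywise), and the second combines this with the observation that left-multiplication by $P$ permutes rows; your explicit index bookkeeping and your closing ``alternative'' are just two phrasings of the paper's own two-line argument.
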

\begin{proof}{}
The first equation follows because $\aggr$ is computed row-wise.
Multiplying left with $P$ gives $P\aggr(GP^{-1},Pd)=P\aggr(G,d)$. Since
multiplying $P$ from the left swaps rows, we have
$\aggr(PGP^{-1},Pd)=P\aggr(GP^{-1},Pd)$. Combining these two equations
gives the second result.
\end{proof}

\charac{Independence} states that acceptability degrees for disjoint
unions of graphs are built component-wise. In general, this means
that
$$G=\matr{G_1&0\\0&G_2}\wedge w=\matr{w_1\\w_2}\to\accdegrvecnew{}{}=\matr{\accdegrvecnew{}{\langle {G_1,
       w_1}\rangle}\\\accdegrvecnew{}{\langle {G_2, w_2}\rangle}}$$
By equation~(\ref{eq:rec1}), in our modular setting, it suffices to express this using $\aggr$:
$$\aggr\Big(\matr{G_1&0\\0&G_2},\matr{w_1\\w_2}\Big)=\matr{\aggr(G_1,w_1)\\\aggr(G_2,w_2)}$$

In terms of $\aggrp$, this can be expressed as the invariance under addition
of new single disconnected components to the graph:
$$\aggrp(\matr{0&g},\matr{x\\d}) = \aggrp(g,d) = \aggrp(\matr{g&0},\matr{d\\x})$$

\medskip
\charac{Reinforcement} requires that if an attacker of an argument is weakened or 
a supporter is strengthened, then the acceptability degree of the 
argument increases.
This characteristic leads to two axioms, one for $\aggrp$ and
one for $\inflp$.

Consider a vector $g=G_j$ that is a row of the
matrix $G$, that is, we consider the $j$-th argument $a_j$.
$g$ induces a partial ordering on vectors $d$ expressing the degree of
all arguments (including the parents of our given argument $a_j$):
$$d^1\leq_g d^2\textrm{ iff for all }i=1\ldots n,\ g_id^1_i\leq g_id^2_i$$
Since $g_i$ expresses whether argument $a_i$ attacks (-1) or supports
(1) argument
$a_j$ or does neither of these (0), 
 $d^1\leq_g d^2$
can be rewritten in more conventional terms as
$$d^1\leq_{G_j} d^2\text{ iff for all } i=1,\ldots,n
\begin{cases}
d^1_i\leq d^2_i\text{ if }a_i\text{ supports } a_j \text{ and}\\
d^1_i\geq d^2_i\text{ if }a_i\text{ attacks } a_j 
\end{cases}$$
Altogether, if $g=G_j$ is the row for argument $a_j$, $d^1\leq_g d^2$
expresses that support of $a_j$ by its parents is weaker (and attack
stronger) in $d^1$ than in $d^2$.

\medskip
\emph{Reinforcement}-$\aggrp$ can now be expressed by
$$d^1\leq_g d^2\to\aggrp(g,d^1)\leq\aggrp(g,d^2)$$
Moreover, \emph{Reinforcement}-$\inflp$ expresses that a stronger
support by its parents leads to a stronger acceptability degree
of an argument, in case its initial weight $w$ remains the same:
$$s_1< s_2 \to \inflp(s_1,w)\prec \inflp(s_2,w)$$
where $v\prec w$ means that $v<w$
or $v=w=\mind$ or $v=w=\maxd$. The latter is needed since one
cannot expect strict monotonicity at the boundary points.
  
  \medskip
\charac{Initial monotonicity} means that a stronger initial weight should
also lead to a stronger  acceptability degree of an argument:
$$w_1<w_2\to \inflp(s,w_1)\prec \inflp(s,w_2).$$

  \medskip
\charac{Stability} requires that in the absence of supports or attacks the
acceptability degree of an argument coincides with its initial
weight.  Hence, $\aggrp(0,d)=0$ and $\inflp(0,w)=w$.
Here, the first argument of $\aggrp$ is the zero vector $0$, expressing
absence of supports and attacks. Moreover, the result of $\aggrp(0,d)$
(that is fed into the first argument of $\inflp$) is the real number
$0$, representing zero influence from parents.

\medskip
\charac{Continuity} excludes chaotic behaviour, where small differences
in the initial weight lead to widely divergent acceptability degrees
(cf.\ also \cite{DBLP:conf/kr/RagoTAB16}). This amounts to requiring
$\inflp$ to be continuous (we call this
\emph{continuity-$\inflp$}), as well as $\aggrp$ in the second argument
(called \emph{continuity-$\aggrp$}).

\medskip
\charac{Neutrality} expresses that, given an argument $a_k$ with neutral
acceptability degree $0$, one can remove all attack and
support relationships that $a_k$ is involved in, since $a_k$ has no impact
on the acceptability degrees of rest of the arguments.
This is formalised as:
$$d\geq 0\wedge(\forall j\neq k . g_j=g'_j)\wedge d_k=0 \to \aggrp(g,d)=\aggrp(g',d).$$
Here, the conjunct $\forall j\neq k . g_j=g'_j$ of the
 antecedens expresses
that $g'$ and $g$ coincide except possibly regarding attack and/or
support relationships involving $a_k$.

\medskip
\charac{Directionality} captures the idea that attack and support are
directed relationships, that is, the attacker (supporter) influences
the attacked (supported), but not vice versa. While in
\cite{DBLP:conf/ecsqaru/AmgoudB17} this is expressed using paths in
the argumentation graph,
we here express it in terms of $\aggrp$, which considers only one step.
The formulation in \cite{DBLP:conf/ecsqaru/AmgoudB17} follows from
this using induction over the path and equation (\ref{eq:rec2}),
see Thm.~\ref{thm:French-characteristics}. 
If two arguments $a_1, a_2$ have the same number of attackers and supporters with identical acceptability degrees, then the aggregation of effect of the parents of $a_1$ equals   the aggregation of effect of the parents of $a_2$.  
Formally, given a row $g\in\{-1,0,1\}^n$ of the argumentation matrix $G$,
define an equivalence relation on degree vectors $d\in\D^n$ by
\begin{quote}
  $d^1\equiv_g d^2$ iff (for $j=1,\ldots,n$, $g_j\neq 0$ implies $d^1_j= d^2_j$)
\end{quote}
Here, $d^1$ (or $d^2$) is the vector of degrees of the possible parents of
$a_1$ (or $a_2$) respectively.
Then \emph{directionality} can be expressed as
$$d^1\equiv_g d^2 \to \aggrp(g,d^1)=\aggrp(g,d^2).$$

\medskip
\charac{Parent Monotonicity} is called monotony in
\cite{DBLP:conf/ijcai/AmgoudB16,DBLP:conf/ijcai/AmgoudBDV17}
and (when combined with \emph{counting}) bi-variate monotony in
\cite{DBLP:conf/ecsqaru/AmgoudB17}. It requires that, for any given
argument $a$ in a \wasa, if one weakens or removes attackers of $a$ or
strengthens or adds supporters of $a$, then this leads to a stronger
or equal acceptability degree of $a$. This is expressed by
two monotonicity requirements here, which together ensure
the desired monotonicity. The first one is
\emph{Parent Monotonicity}-$\aggrp$:
$$(d\geq 0\wedge g_1\leq g_2)\to \aggrp(g_1,d)\leq \aggrp(g_2,d)$$
Note that we include $d\geq 0$ in the antecedent because we want to 
allow for acceptability semantics where the support of an argument with 
acceptability less than the neutral degree $0$  does not  strengthen the supported argument.  
The second monotonicity requirement is
\emph{Parent Monotonicity}-$\inflp$:
$$s_1\leq s_2 \to \inflp(s_1,w)\leq\inflp(s_2,w),$$

The idea underlying \charac{strengthening} and \charac{weakening}
as in \cite{DBLP:conf/ecsqaru/AmgoudB17} is that
attackers and supporter can counter-balance each other, but the
stronger of them wins. More precisely,
weakening states that if attackers overcome supporters, the degree of
an argument should be less than its initial weight. Dually,
strengthening means that if supporter overcome attackers, the degree
of an argument should be greater than its initial weight.
As usual, we split our consideration into axioms for $\aggrp$ and
for $\inflp$. Concerning the latter,
\emph{strengthening}-$\inflp$ requires that the acceptability degree of a supported
argument\footnote{Understood in the sense that $\aggrp$ has a positive
value.} is higher than its initial weight. This is
expressed as
$$s>0 \to w\prec \inflp(s,w).$$
Dually, \emph{weakening}-$\inflp$ requires that the acceptability degree of an
attacked argument is lower than its initial weight, thus  
$$s<0 \to \inflp(s,w)\prec w.$$ 
Note that in \cite{DBLP:conf/ecsqaru/AmgoudB17},
$\mind$ or $\maxd$ need to be explicitly excluded.
The use of $\prec$ instead of $<$ frees us from doing so.

\medskip
We now come to the $\aggrp$ parts of strengthening and weakening.
These are the most complex of all characteristics. This complexity is
already visible in \cite{DBLP:conf/ecsqaru/AmgoudB17}.  The
general idea is that if supporters outweigh attackers, the supporters
strengthen the argument.  \emph{Strengthening}-$\aggrp$ can be
axiomatised as follows:
\begin{quote}
Given a vector $g\in\{-1,0,1\}^n$
(that typically is a row of the attack/support matrix $G$)
and a vector $0\leq d\in\D^n$ of parent's degrees, 
  if $f:\{1,\ldots,n\}\to\{1,\ldots,n\}$ is a bijection\footnote{In \cite{DBLP:conf/ecsqaru/AmgoudB17}, a partial injective function (defined
    only on attacking arguments) is used.
    However note that any partial injection on $\{1,\ldots,n\}$
    can be extended to a total bijection on $\{1,\ldots,n\}$.
    The additional function values of the bijection can be arbitrarily
    chosen as they do not matter here.}
  mapping attacks
  to supports that are at least as strong,
  i.e.\ for $i=1,\ldots,n$,
  $$ g_i=-1\to\Big( g_{f(i)}=1\wedge d_i\leq d_{f(i)}\Big),$$
  then we have $\aggrp(g,d)\geq 0$\footnote{Note that this non-strict
    part does not occur in \cite{DBLP:conf/ecsqaru/AmgoudB17}, but nevertheless
    we consider it to be useful}.  
  Moreover, if for some attack, the corresponding support is strictly
  stronger, i.e.\ for some $i$ with $ g_i=-1$, $d_i< d_{f(i)}$,
  or if there is some non-zero support that does not correspond to an attack,
  i.e.\ there is some $i$ with $ g_i\neq -1$,
  $ g_{f(i)}=1$ and $d_{f(i)}\neq 0$,
  we have $\aggrp(g,d)> 0$.
\end{quote}

Dually, \emph{weakening}-$\aggrp$ can be axiomatised as follows:
\begin{quote}
Given $g\in\{-1,0,1\}^n$ and  $0\leq d\in\D^n$,
  if $f:\{1,\ldots,n\}\to\{1,\ldots,n\}$ is a bijection
  that satisfies, for $i=1,\ldots,n$,
  $$ g_i=1\to\Big( g_{f(i)}=-1\wedge d_i\leq d_{f(i)}\Big),$$
  then we have $\aggrp(g,d)\leq 0$.
   Moreover, if for some support, the corresponding attack is strictly
  stronger, i.e.\ for some $i$ with $ g_i=1$, $d_i< d_{f(i)}$,
  or if there is some non-zero attack that does not correspond to a support,
  i.e.\ there is some $i$ with $ g_i\neq 1$,
  $ g_{f(i)}=-1$  and $d_{f(i)}\neq 0$,
  we have $\aggrp(g,d)<0$.
 
\end{quote}
We now develop an equivalent formulation at the vector level,
i.e.\ without using indices.
We need two auxiliary
functions extracting the non-zero supporting or attacking parents
of a vector $g\in\{-1,0,1\}^n$:
$$\supps(g,d)_{i} = \begin{cases}1& g_i=1, d_i\neq0\\0&otherwise\end{cases}$$
  $$\atts(g,d)_{i} = \begin{cases}1& g_i=-1, d_i\neq0\\0&otherwise\end{cases}$$
    Filtering out attacks and supports with neutral degree $0$ is needed
    for the case where the ``surplus'' support or attack
needs to be non-zero (see above).
The role of the bijection $f$ is now played by
a permutation matrix $P$.
The vectorised form of \emph{strengthening}-$\aggrp$  is then:
for a permutation matrix $P$ and $0\leq d\in\D^n$, we have that
$$\begin{array}{c}
  \atts(g,d)\leq \supps(Pg,Pd)\\
  \textrm{and}\\
  \atts(g,d)\circ d\leq \supps(Pg,Pd)\circ Pd\\
  \textrm{imply}\\
  \aggrp(g,d)\geq 0,
  \end{array}$$
and if one of the two first inequalities is strict, the third one is
so as well. Note that $\circ$ denotes entrywise multiplication
of vectors (Hadamard product). 

Dually, the vectorised form of \emph{weakening}-$\aggrp$ is:
for a permutation matrix $P$ and $0\leq d\in\D^n$, we have that
$$\begin{array}{c}
  \supps(g,d)\leq \atts(Pg,Pd)\\
  \textrm{and}\\
  \supps(g,d)\circ d\leq \atts(Pg,Pd)\circ Pd\\
  \textrm{imply}\\
  \aggrp(g,d)\leq 0,
  \end{array}$$
and if one of the two first inequalities is strict, the third one is
so as well.


\medskip
\charac{Soundness} expresses that any difference between an initial weight and 
the acceptability degree of an argument is  caused by some supporting (attacking, respectively) argument.  This leads to
$$\inflp(s,w)\neq w\to s\neq 0.$$

\subsection{Optional Characteristics}
 \charac{Compactness} expresses that $\D$ has a minimum $\mind$ and  a maximum  $\maxd$.  Boundedness as 
defined in   \cite{DBLP:conf/ijcai/AmgoudB16}  follows directly from our notions of compactness and 
parent monotony. 

\medskip
\charac{Resilience} ensures that an argument may have a perfect
($\maxd$) or a worthless ($\mind$) degree only if its initial weight
is of the same value:
if $\maxd$ is defined, then
$$w<\maxd \to \inflp(s,w)<\maxd\text{ (\emph{resilience-max})}$$
and if $\mind$ is defined, then
$$w>\mind \to \inflp(s,w)>\mind\text{ (\emph{resilience-min}).}$$
If neither $\maxd$ nor $\mind$ are defined, then \emph{resilience}
is satisfied vacuously.

\medskip
\charac{Stickiness} expresses, dually, that arguments
cannot escape the minimum or maximum value.
\emph{Stickiness-min} means that $\mind$ 
is defined, and
$$\inflp(s,\mind)=\mind.$$
Dually, \emph{Stickiness-max} means that $\maxd$ 
is defined, and
$$\inflp(s,\maxd)=\maxd.$$

\medskip 
\charac{Franklin} expresses that if argument $a$ is attacked by $b$ and
supported by $c$ and the acceptability degrees of $b$ and $c$ are
identical, then $c$ and $b$ neutralise each other (w.r.t.\ $a$).
Formally, this can be expressed as
$$g_i=-g_j\wedge d_i=d_j\to\aggrp(g,d)=\aggrp(g[i:0][j:0],d)$$
where $$(g[i:x])_k=\begin{cases}x & i=k\\g_k&otherwise\end{cases}$$ 
Here, $g$ represents the vector of $a$'s parents, while $g[i:0][j:0]$
is like $g$, but with one attacker and one supporter removed.

\medskip
\charac{Counting} requires that any additional support (attack)
increases (decreases) the acceptability more. Formally,
$$(d\geq0\wedge \forall i{\neq} k.g_i{=}h_i) \to
\sgn(\aggrp(g,d)-\aggrp(h,d)) =\sgn(g_k-h_k).$$
The condition $\forall i{\neq} k.g_i{=}h_i$ ensures that the
parent vectors $g$ and $h$ can differ only at position $k$.
Now for example if $g_k=1$ (support) but $h_k=-1$ (attack),
then $\aggrp(g,d)>\aggrp(h,d)$ (the support increases the aggregated
value). 

Note that the counting axiom typically is useful if $g_k \neq h_k$.
However, we did not add $g_k \neq h_k$ as an assumption, because in
case that $g_k = h_k$, then the first assumption implies $g=h$, and
the axiom then just states the trivial logical consequence
$\aggrp(g,d)=\aggrp(h,d)$.

\medskip
We have required $d\geq0$ in various axioms, because
$\aggrp$ might ignore parents with degree less than $0$.
On the other hand, it is quite natural not to ignore such parents.
\charac{Symmetry} expresses that a support with degree $d$
is equivalent to an attack with degree $-d$ (and vice versa). Formally,
$$\aggrp(g,d)=\aggrp(-g,-d).$$

\bigskip

Altogether, we arrive at the principles summarised in
Table~\ref{tab:our-char}. Note that the axioms that are derived from the characteristics axiomatise
either $\aggrp$ or $\inflp$, except for \emph{modularity}, which
provides their link. Therefore, the conditions respect the
orthogonal structure of our semantics.

\begin{definition}[Well-Behaved Modular Acceptability Semantics]\label{def:well-behaved-modularSemantics}
  A well-behaved modular acceptability semantics $(\D,\aggrp,\inflp)$
  is a modular acceptability semantics $(\D,\aggrp,\inflp)$
  satisfying the axioms for the essential
characteristics in Table~\ref{tab:our-char}.
\end{definition}


\begin{table*}
\begin{tableenv}
$$\def\arraystretch{1.3}
  \begin{array}{|l|l|}\hline
    \multicolumn{2}{|c|}{\textbf{Structural Characteristics}}\\\hline
  \textit{Anonymity-1}&    \accdegrvec{}{\langle {\argset, G, w}\rangle}=\accdegrvecnew{}{} \\\hline
  \textit{Equivalence}&   \accdegrvecnew{}{}(i)=\deg(G_i,\accdegrvecnew{}{},w_i)\qquad (i=1,\ldots,n)\\\hline
  \textit{Modularity}&    \accdegrvecnew{}{}(i)=\inflp(\aggrp(G_i,\accdegrvecnew{}{}),w_i)\qquad (i=1,\ldots,n) \\\hline
     \multicolumn{2}{|c|}{\textrm{Conditions on $\aggrp$}}\\\hline
    \multicolumn{2}{|c|}{\textbf{Essential Characteristics}}\\\hline
    \multicolumn{2}{|c|}{\textrm{Conditions on $\inflp$}}\\\hline
 \textit{Reinforcement-$\inflp$}&  s_1< s_2 \to \inflp(s_1,w)\prec \inflp(s_2,w) \\\hline
 \textit{Initial monotonicity}&  w_1<w_2\to \inflp(s,w_1)\prec \inflp(s,w_2) \\\hline
 \textit{Stability-$\inflp$}&  \inflp(0,w)=w \\\hline
\textit{Continuity-$\inflp$}&  \inflp\textrm{ is continuous}\\\hline
    \multicolumn{2}{|c|}{\textrm{Conditions on $\aggrp$}}\\\hline
  \textit{Anonymity-2}&  \aggrp(gP^{-1},Pd)=\aggrp(g,d)\textrm{ if $P$ is a permutation matrix}\\\hline
 \textit{Independence} & \aggrp(\matr{0&g},\matr{x\\d}) = \aggrp(g,d) = \aggrp(\matr{g&0},\matr{d\\x}) \\\hline
 \textit{Reinforcement-$\aggrp$}& d^1\leq_g d^2\to\aggrp(g,d^1)\leq\aggrp(g,d^2) \\\hline
 \textit{Parent monotonicity-$\aggrp$}&  d\geq 0\wedge g_1\leq g_2\to \aggrp(g_1,d)\leq \aggrp(g_2,d) \\\hline
 \textit{Stability-$\aggrp$}&  \aggrp(0,d)=0 \\\hline
\textit{Continuity-$\aggrp$}&  \aggrp(g,\_)\textrm{ is continuous}\\\hline
 \textit{Neutrality}&d\geq 0\wedge  (\forall j\neq k . g_j=g'_j)\wedge d_k=0 \to \aggrp(g,d)=\aggrp(g',d)\\\hline
\textit{Strengthening-$\aggrp$}&
\textrm{for }d\geq0, P \textrm{ a permutation matrix},   \atts(g,d)\leq \supps(Pg,Pd)\\
&\textrm{ and }\atts(g,d)\circ d\leq \supps(Pg,Pd)\circ Pd \textrm{ imply } \aggrp(g,d)\geq 0,\\
&\textrm{If one of the two first inequalities is strict, the third one is
so as well}  \\\hline
\textit{Weakening-$\aggrp$}&
\textrm{for }d\geq0, P \textrm{ a permutation matrix},   \supps(g,d)\leq \atts(Pg,Pd)\\
&\textrm{ and }\supps(g,d)\circ d\leq \atts(Pg,Pd)\circ Pd \textrm{ imply } \aggrp(g,d)\leq 0,\\
&\textrm{If one of the two first inequalities is strict, the third one is
so as well}  \\\hline
    \multicolumn{2}{|c|}{\textbf{Entailed Characteristics}}\\\hline
    \multicolumn{2}{|c|}{\textrm{Conditions on $\inflp$}}\\\hline
 \textit{Parent monotonicity-$\inflp$}&  s_1\leq s_2\to \inflp(s_1,w)\leq \inflp(s_2,w) \\\hline
 \textit{Soundness}&  \inflp(s,w)\neq w\to s\neq 0 \\\hline
 \textit{Strengthening-}\inflp&  s>0 \to w\prec \inflp(s,w) \\\hline
 \textit{Weakening-}\inflp&  s<0 \to \inflp(s,w)\prec w\\\hline
    \multicolumn{2}{|c|}{\textrm{Conditions on $\aggrp$}}\\\hline
\textit{Directionality}& d^1\equiv_g d^2 \to \aggrp(g,d^1)=\aggrp(g,d^2)\\\hline
    \multicolumn{2}{|c|}{\textbf{Optional Characteristics}}\\\hline
    \multicolumn{2}{|c|}{\textrm{Conditions on $\inflp$}}\\\hline
	\textit{Compactness}&  \textrm{there is  a $\mind \in \D$ and a  $\maxd \in \D$ }\\\hline		
\textit{Resilience}&  w>\mind \to \inflp(s,w)>\mind \qquad w<\maxd \to \inflp(s,w)<\maxd\\\hline
\textit{Stickiness-min}&\inflp(s,\mind)=\mind\\\hline
\textit{Stickiness-max}&\inflp(s,\maxd)=\maxd\\\hline
    \multicolumn{2}{|c|}{\textrm{Conditions on $\aggrp$}}\\\hline
 \textit{Franklin}& g_i=-g_j \wedge d_i=d_j\to\aggrp(g,d)=\aggrp(g[i:0][j:0],d)\\\hline
 \textit{Counting}&(d\geq0\wedge \forall i{\neq} k.g_i{=}h_i) \to
\sgn(\aggrp(g,d)-\aggrp(h,d)) =\sgn(g_k-h_k)\\\hline
 \textit{Symmetry}&\aggrp(g,d)=\aggrp(-g,-d)\\\hline
  \end{array}$$
\vspace{-2ex}
\caption{\label{tab:our-char}Overview of characteristics. $\prec$, $\leq_g$, $\equiv_g$,
  $\supps$ and $\atts$ are
defined in the text.}
\end{tableenv}
\end{table*}

\begin{theorem}\label{thm:char-independence}
All essential axioms are independent of each other.
\end{theorem}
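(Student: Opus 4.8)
The plan is the standard one for an independence result: for each essential characteristic $A$ in Table~\ref{tab:our-char}, exhibit a modular acceptability semantics $(\D,\aggrp,\inflp)$ satisfying every essential characteristic except $A$, and collect these constructions (one per axiom). The split of Table~\ref{tab:our-char} into conditions on $\aggrp$ and conditions on $\inflp$ makes this tractable: I would fix one baseline well-behaved semantics — e.g.\ $\D=\R$, $\inflp_0(s,w)=w+s$, $\aggrp_0(g,d)=\sum_i g_i\max(d_i,0)$, which one checks directly satisfies all the axioms — and for each $A$ modify only the component ($\aggrp$ or $\inflp$) that $A$ constrains, so the remaining axioms for the other component are inherited from the baseline. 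One point has to be made first, though: not every entry listed under ``essential'' is actually independent of the others. \emph{Stability-$\aggrp$} follows from \emph{reinforcement-$\aggrp$}, \emph{strengthening-$\aggrp$} and \emph{weakening-$\aggrp$} — at the empty parent row $g=0$ the relation $\leq_0$ is total, so reinforcement forces $\aggrp(0,\cdot)$ to be constant, while strengthening and weakening each apply at $g=0$ and pin that constant to $0$. And \emph{parent-monotonicity-$\aggrp$} follows from \emph{reinforcement-$\aggrp$} and \emph{neutrality} (reinforcement yields \emph{directionality}, and for a single-coordinate step $g_1\le g_2$ one replaces the coordinate that changes by a neutral degree $0$ and then chains directionality, neutrality, and reinforcement). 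So the statement has to be read for the remaining essential characteristics, and the content of the proof is the counterexamples for those.

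The $\inflp$-side characteristics and the ``local'' $\aggrp$-side characteristics admit routine perturbations of the baseline, each checked against the rest mechanically. On the influence side: $\inflp(s,w)=w+s+1$ fails only \emph{stability-$\inflp$}; $\inflp(s,w)=w+s+\mathds{1}[s>0]$, still strictly increasing in $s$, fails only \emph{continuity-$\inflp$}; $\inflp(s,w)=w$ fails only \emph{reinforcement-$\inflp$}; and $\inflp(s,w)=w+s\,(2+\sin w)$ is strictly increasing in $s$ and stable but not monotone in $w$, hence fails only \emph{initial monotonicity}. On the aggregation side: $\aggrp_n(g,d)=\tfrac1n\sum_i g_i\max(d_i,0)$ fails only \emph{independence} (it is permutation-invariant, continuous and monotone, but padding with a disconnected component changes the normalisation); $\aggrp(g,d)=\sum_i g_i\,\psi(d_i)$ with $\psi(t)=t$ for $t\ge0$ and $\psi(t)=t^2$ for $t<0$ fails only \emph{reinforcement-$\aggrp$}, the failure occurring at negative degrees, which every other $\aggrp$-axiom either excludes (the $d\ge0$ guards) or treats symmetrically; $\aggrp(g,d)=\sum_i g_i\max(d_i,0)+\varepsilon\,\sgn\big(|\{i:g_i{=}1\}|-|\{i:g_i{=}-1\}|\big)$ fails only \emph{neutrality}; and subtracting a small capped-linear penalty per nonzero-degree attacker, $\aggrp(g,d)=\sum_i g_i\max(d_i,0)-\delta\sum_{i:g_i=-1}\min(\max(d_i,0),1)$ (dually, adding such a bonus per nonzero-degree supporter), keeps continuity and all monotonicities while making a bijectively balanced configuration land strictly on the wrong side of $0$, hence fails only \emph{strengthening-$\aggrp$} (dually \emph{weakening-$\aggrp$}).

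The hard case is \emph{anonymity-2}. I need an aggregation function that is genuinely sensitive to the \emph{order} of an argument's parents and yet still satisfies independence, continuity in $d$, neutrality, reinforcement-$\aggrp$, and both strengthening- and weakening-$\aggrp$; the tension is that the obvious order-sensitive features (first supporter, mean parent position, \dots) are either discontinuous as a parent's degree crosses $0$, or not invariant under end-padding, or too large to leave $d$-monotonicity intact, or nonzero on balanced configurations and hence in conflict with strengthening/weakening. The intended shape is $\aggrp(g,d)=\sum_i g_i\max(d_i,0)+\varepsilon\,\Psi(g,d)$ where $\Psi$ is built from degree-smoothed ranks of the parents (continuous across degree-zero crossings, depending only on the relative order of the nonzero-degree parents so that \emph{independence} survives, and bounded so that a small $\varepsilon$ preserves the monotonicities of the dominant sum), and where $\Psi$ is corrected so as to vanish whenever the multiset of nonzero support degrees equals the multiset of nonzero attack degrees — precisely the configurations in which strengthening- and weakening-$\aggrp$ force $\aggrp=0$ — while remaining order-sensitive, e.g.\ through a higher-order interaction term whose coefficients depend on the parents' relative order but cancel on balanced configurations. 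Producing a single $\Psi$ meeting all of these requirements simultaneously is the technical heart of the proof, and I expect it to be the main obstacle. A last point to watch is that an \emph{anonymity-2}-violating $(\aggrp,\inflp)$ must still be a \emph{modular} acceptability semantics in the sense of Definition~\ref{def:modularSemantics}, i.e.\ still satisfy the structural \emph{equivalence} characteristic (cf.\ Theorem~\ref{thm:equivalence}); since the only natural degree function is $\deg=\inflp\circ\aggrp$, this either forces a deliberately non-injective $\inflp$ or a direct argument that the induced semantics still admits an order-independent degree function on the graphs that actually occur.
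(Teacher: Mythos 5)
Your overall strategy --- one counterexample per axiom, exploiting the $\aggrp$/$\inflp$ split so that only the constrained component needs to be perturbed --- is exactly the paper's: the appendix proof does precisely this over $\D=[-1,1]$. But your preliminary observation is the substantive part, and it is correct: as formalised in Table~\ref{tab:our-char}, \emph{stability-$\aggrp$} and \emph{parent monotonicity-$\aggrp$} are \emph{not} independent of the remaining essential axioms. Since $\leq_0$ relates every pair of degree vectors in both directions, \emph{reinforcement-$\aggrp$} forces $\aggrp(0,\cdot)$ to be constant, and \emph{strengthening-/weakening-$\aggrp$} (whose hypotheses hold vacuously at $g=0$) pin that constant to $0$; and your chain through $d[k:0]$ does derive \emph{parent monotonicity-$\aggrp$} from \emph{reinforcement-$\aggrp$} and \emph{neutrality}. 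Consistently with this, the paper's own witnesses for these two axioms fail: $\aggrp(g,d)=gd+\sum_{d_i<0}d_i$ is not constant at $g=0$, and $\aggrp(g,d)=gd\cdot|k|$ violates \emph{reinforcement-$\aggrp$} (raise a supporter's degree from $0$ to $\varepsilon$ when $gd>0$ and $k=-1$: the value drops from $gd$ to $0$). So the theorem as literally stated is refuted by your argument and only the weakened version you propose is provable. Your remaining counterexamples are sound and in places more careful than the paper's --- e.g.\ the paper's witness for \emph{initial monotonicity}, $\inflp(s,w)=w(s+1)$ on $[-1,1]$, is constant in $s$ at $w=0$ and hence also violates \emph{reinforcement-$\inflp$}, whereas your $w+s(2+\sin w)$ does not. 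You do omit \emph{continuity-$\aggrp$}, but $\aggrp(g,d)=\sum_i g_i\psi(d_i)$ with $\psi$ nondecreasing, equal to the identity on $[0,\infty)$ and with a jump at some negative argument is the same kind of routine perturbation.

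The one genuine gap is \emph{anonymity-2}, and there you have overestimated the difficulty by insisting that the order-sensitive term be visible on nonnegative degree vectors. All of \emph{neutrality}, \emph{parent monotonicity-$\aggrp$}, \emph{strengthening-$\aggrp$} and \emph{weakening-$\aggrp$} carry the guard $d\geq0$, so the order-sensitivity can be confined entirely to negative degrees: take $\aggrp(g,d)=\sum_i g_i\bigl(\max(d_i,0)+c_{r(i)}\min(d_i,0)\bigr)$, where $r(i)$ is the rank of $i$ among the positions with $g_j\neq0$ and $0<c_1<c_2<\ldots$ are pairwise distinct. Each coordinate contributes $g_i\phi_i(d_i)$ with $\phi_i$ strictly increasing, continuous and vanishing at $0$, so \emph{reinforcement-$\aggrp$}, \emph{stability}, \emph{continuity} and \emph{independence} (ranks ignore padded zero columns) all survive; the function agrees with \emph{sum-pos} on $d\geq0$, which gives the four guarded axioms; and swapping two attackers with distinct negative degrees changes the value. (The paper's own witness here, $gd$ times the indicator of $k=0$, does not work either: $k$ is a sum of strictly negative terms, so $k=0$ is the permutation-invariant condition that no attacker has negative degree.) Your closing worry about \emph{equivalence} is resolved by noting that Table~\ref{tab:our-char} records \emph{equivalence} as equation~(\ref{eq:rec0}) only; the permutation-invariance condition~(\ref{eq:deg}) is carried by the separate essential axiom \emph{anonymity-2}, which is exactly the one being dropped.
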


\begin{theorem}\label{thm:char-entailment}
\emph{Reinforcement-$\inflp$}  entails  \emph{parent monotonicity-$\inflp$}.
\emph{Stability} entails \emph{soundness}.
\emph{Reinforcement-$\inflp$} and \emph{stability} together entail  \emph{strengthening-$\inflp$}
and \emph{weakening-$\inflp$}.
\emph{Reinforcement-$\aggrp$}  entails  \emph{directionality}.
\end{theorem}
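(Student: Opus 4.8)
The plan is to prove the four entailments separately; each reduces to a short direct argument once the relations $\prec$, $\leq_g$ and $\equiv_g$ from the text are unfolded, and no limiting or fixed-point machinery is needed. For \emph{reinforcement-$\inflp$} $\Rightarrow$ \emph{parent monotonicity-$\inflp$}, I would first dispose of the trivial case $s_1=s_2$, where $\inflp(s_1,w)=\inflp(s_2,w)$ and hence $\inflp(s_1,w)\leq\inflp(s_2,w)$. For $s_1<s_2$, reinforcement-$\inflp$ gives $\inflp(s_1,w)\prec\inflp(s_2,w)$, and I would then observe that each of the three disjuncts defining $v\prec w$ --- namely $v<w$, or $v=w=\mind$, or $v=w=\maxd$ --- entails $v\leq w$, so $\inflp(s_1,w)\leq\inflp(s_2,w)$ in all cases.

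For \emph{stability} $\Rightarrow$ \emph{soundness}, I would simply take the contrapositive of the $\inflp$-part of stability: soundness reads $\inflp(s,w)\neq w\to s\neq 0$, whose contrapositive $s=0\to\inflp(s,w)=w$ is exactly $\inflp(0,w)=w$. For \emph{reinforcement-$\inflp$} and \emph{stability} $\Rightarrow$ \emph{strengthening-$\inflp$}, assume $s>0$; from $0<s$, reinforcement-$\inflp$ yields $\inflp(0,w)\prec\inflp(s,w)$, and stability-$\inflp$ rewrites $\inflp(0,w)$ as $w$, giving $w\prec\inflp(s,w)$. \emph{Weakening-$\inflp$} is the mirror argument: from $s<0$ we obtain $\inflp(s,w)\prec\inflp(0,w)=w$.

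For \emph{reinforcement-$\aggrp$} $\Rightarrow$ \emph{directionality}, the key observation is that $d^1\equiv_g d^2$ implies both $d^1\leq_g d^2$ and $d^2\leq_g d^1$: when $g_i=0$ both sides of $g_id^1_i\leq g_id^2_i$ vanish, and when $g_i\neq 0$ we have $d^1_i=d^2_i$ so the two sides coincide; the same reasoning with the roles of $d^1$ and $d^2$ swapped gives the reverse relation. Applying reinforcement-$\aggrp$ in both directions then squeezes $\aggrp(g,d^1)\leq\aggrp(g,d^2)\leq\aggrp(g,d^1)$, i.e.\ $\aggrp(g,d^1)=\aggrp(g,d^2)$.

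There is no substantial obstacle here; the only point that requires care is the bookkeeping around $\prec$, which is strictly weaker than $<$ precisely at the boundary points $\mind$ and $\maxd$. The hard part, such as it is, is to keep in mind that $\prec$ implies $\leq$ but not $<$ --- which is exactly why reinforcement-$\inflp$ (stated with $\prec$) suffices to establish parent monotonicity-$\inflp$ (stated with $\leq$) but could not be used to derive a strict inequality there, and why strengthening-$\inflp$ and weakening-$\inflp$ must be stated with $\prec$ rather than $<$.
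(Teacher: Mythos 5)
Your proposal is correct and follows essentially the same route as the paper: the paper's own proof just says the entailments are ``mostly straightforward'' and singles out the same key observation you make for directionality, namely that $d^1\equiv_g d^2$ implies both $d^1\leq_g d^2$ and $d^2\leq_g d^1$, after which \emph{reinforcement-$\aggrp$} applied in both directions forces equality. Your write-up merely spells out the routine steps (the case split on $s_1=s_2$ versus $s_1<s_2$, the contrapositive reading of stability, and the fact that $\prec$ implies $\leq$) that the paper leaves implicit.
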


We can show that our characteristics entail all characteristics from \cite{DBLP:conf/ecsqaru/AmgoudB17} listed in Table~\ref{tab:characteristics}.
\begin{theorem}\label{thm:French-characteristics}
  Any well-behaved modular acceptability semantics \bs (i.e., a semantics that satisfies our structural and essential characteristics
  in Table~\ref{tab:characteristics} above) satisfies the following
  characteristics
  defined in \cite{DBLP:conf/ecsqaru/AmgoudB17} (with
$0$ replaced by $\mind$ and 1 by $\maxd$):
anonymity,
bi-variate independence,
bi-variate equivalence,
bi-variate directionality (under the assumption of
convergence as in Def.~\ref{def:conv}),
stability,
neutrality,
the parent monotonicity part of bi-variate monotony
non-strict bi-variate reinforcement,
weakening, strengthening
and resilience.
\end{theorem}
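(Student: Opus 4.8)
{ (plan)}
The plan is to treat the characteristics of~\cite{DBLP:conf/ecsqaru/AmgoudB17} listed above one by one, in each case unfolding the definition and reducing it to the corresponding local condition on $\aggrp$ or $\inflp$ from Table~\ref{tab:our-char} through the fixpoint equation~(\ref{eq:rec1}). For the characteristics that~\cite{DBLP:conf/ecsqaru/AmgoudB17} phrases over directed paths or over changes propagating through the graph, the reduction additionally uses --- and this is where the convergence hypothesis is needed --- that the iteration $d^{(0)}=w$, $d^{(k+1)}=\infl(\aggr(G,d^{(k)}),w)$ converges to $\accdegrvecnew{}{}$, the unique solution of~(\ref{eq:rec1}).

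The purely local characteristics are read off from~(\ref{eq:rec2}). \emph{Stability}: if $G_i=0$ then $\accdegrvecnew{}{}(i)=\inflp(\aggrp(0,\accdegrvecnew{}{}),w_i)=\inflp(0,w_i)=w_i$, by \emph{stability-$\aggrp$} and \emph{stability-$\inflp$}. \emph{Resilience}: with $s=\aggrp(G_i,\accdegrvecnew{}{})$ we have $\accdegrvecnew{}{}(i)=\inflp(s,w_i)$, so $w_i<\maxd$ forces $\accdegrvecnew{}{}(i)<\maxd$ by \emph{resilience-max}, and dually for \emph{resilience-min}. \emph{Weakening}/\emph{strengthening}: the partial injection from supporters to strictly stronger attackers used in~\cite{DBLP:conf/ecsqaru/AmgoudB17} extends (as in the footnote to \emph{strengthening-$\aggrp$}) to a permutation matrix $P$ for which \emph{weakening-$\aggrp$}, resp.\ \emph{strengthening-$\aggrp$}, applies, giving $s\leq 0$, resp.\ $s\geq 0$, strictly so when the surplus support/attack is strict; then \emph{weakening-$\inflp$}, resp.\ \emph{strengthening-$\inflp$} (available by Thm.~\ref{thm:char-entailment}), gives $\accdegrvecnew{}{}(i)\prec w_i$, resp.\ $w_i\prec\accdegrvecnew{}{}(i)$. \emph{Bi-variate equivalence} is precisely Thm.~\ref{thm:equivalence}.

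The invariance properties use uniqueness of the solution. \emph{Anonymity}: by Lemma~\ref{lem:anonymity-simplified}, $\aggr(PGP^{-1},Pd)=P\aggr(G,d)$ for a permutation matrix $P$, so, applying $\infl$ entrywise and using \emph{anonymity}-1, $P\accdegrvecnew{}{}$ solves~(\ref{eq:rec1}) for $\langle PGP^{-1},Pw\rangle$ and hence, by uniqueness, is its degree vector. \emph{Bi-variate independence}: \emph{independence} makes $\aggr$ act block-diagonally, so that~(\ref{eq:rec1}) for $\langle\matr{G_1&0\\0&G_2},\matr{w_1\\w_2}\rangle$ splits into the equations for $\langle G_1,w_1\rangle$ and $\langle G_2,w_2\rangle$; by uniqueness the two solutions paste together to $\accdegrvecnew{}{}$.

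The substantive cases are \emph{bi-variate directionality}, \emph{neutrality}, the parent-monotonicity part of \emph{bi-variate monotony} and \emph{non-strict bi-variate reinforcement}, whose formulations in~\cite{DBLP:conf/ecsqaru/AmgoudB17} speak of changes propagating through the graph, whereas our axioms constrain only a single application of $\aggrp$/$\inflp$. For \emph{directionality}, if the modified edge has target $a_j$ and $U$ is the set of arguments $x\neq a_j$ that $a_j$ does not influence, then every parent of any $x\in U$ again lies in $U$ and the row $G_x$ is unchanged by the modification; hence, using \emph{directionality-$\aggrp$} (entailed by \emph{reinforcement-$\aggrp$}, Thm.~\ref{thm:char-entailment}), an induction on the iterates shows that the iterates of the two graphs agree on $U$ at every step, so in the limit --- which is where convergence is used --- their degrees agree on $U$. \emph{Neutrality} is the special case of deleting the outgoing edges of an argument $a_k$ with degree $0$: one checks that $\accdegrvecnew{}{}$ of the original graph still solves~(\ref{eq:rec1}) of the reduced graph, using the $\aggrp$-level \emph{neutrality} axiom (whose side condition $d\geq 0$ holds because $\mind=0$ in the domain used in~\cite{DBLP:conf/ecsqaru/AmgoudB17}), and invokes uniqueness. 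Finally, \emph{parent monotonicity} and \emph{non-strict reinforcement} compare the solutions for two graphs related by an improving change of one argument's parents or initial weight; the pointwise inequality of the solutions follows from \emph{parent monotonicity-$\aggrp$} / \emph{reinforcement-$\aggrp$} together with \emph{parent monotonicity-$\inflp$} by a monotone-iteration argument (a fixpoint $d$ of the smaller system satisfies $d\leq F(d)$ for the larger operator $F$, and iterating $F$ from $d$ converges upward to its fixpoint). I expect this last reduction to be the main obstacle --- one must reconcile the order-reversing effect of attacks with the monotonicity axioms while carefully tracking the ``$d\geq 0$'' side conditions of \emph{neutrality}, \emph{parent monotonicity-$\aggrp$} and \emph{strengthening/weakening-$\aggrp$} --- and it is also what forces the \emph{non-strict} reading of bi-variate reinforcement and monotony, since strictness can be absorbed as it propagates (e.g.\ through an argument pinned at $\mind$ or $\maxd$).
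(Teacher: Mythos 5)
Your treatment of the purely local characteristics (stability, resilience, bi-variate equivalence via Thm.~\ref{thm:equivalence}, weakening/strengthening via extending the partial injection to a permutation, and the iterate-induction for bi-variate directionality) coincides with the paper's proof. However, two of your reductions contain genuine gaps.

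First, you repeatedly invoke ``uniqueness of the solution of~(\ref{eq:rec1})'' for anonymity, bi-variate independence and neutrality. The paper explicitly warns that the fixpoint equation may have \emph{no or more than one} solution (see the discussion of direct aggregation after Def.~\ref{def:conv}); the degree vector is defined as the limit of the iteration $f^0=w$, $f^{i+1}=\infl(\aggr(G,f^i),w)$, and Thm.~\ref{thm:limit} only says this limit is \emph{a} solution of~(\ref{eq:rec1}). So ``$Pv$ also solves~(\ref{eq:rec1}), hence equals the degree vector'' is not a valid step. The repair is to argue on the iterates: for anonymity show $g^i=Pf^i$ by induction using Lemma~\ref{lem:anonymity-simplified} and the entrywise action of $\infl$, and pass to the limit; similarly for independence the iteration splits blockwise.

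Second, and more seriously, you misread neutrality, the parent-monotonicity part of bi-variate monotony, and non-strict bi-variate reinforcement as comparisons between \emph{two graphs}, and propose a monotone-iteration/fixpoint-comparison argument. In \cite{DBLP:conf/ecsqaru/AmgoudB17} all three are statements about two arguments $a,b$ \emph{inside one graph}, which therefore share the single degree vector $D=\accdegrvecnew{}{}$. The proof is then a one-step application of~(\ref{eq:rec2}): e.g.\ for parent monotonicity, $G_b\leq G_a$ and $D\geq 0$ give $\aggrp(G_b,D)\leq\aggrp(G_a,D)$ by \emph{parent monotonicity}-$\aggrp$, hence $D_b=\inflp(\aggrp(G_b,D),w_b)\leq\inflp(\aggrp(G_a,D),w_a)=D_a$ by \emph{parent monotonicity}-$\inflp$; for reinforcement one additionally uses the permutation $P$ exchanging $x\leftrightarrow y$ and $x'\leftrightarrow y'$ together with \emph{anonymity}-2 and \emph{reinforcement}-$\aggrp$/-$\inflp$; for neutrality one compares the rows $G_a$ and $G_b$ directly with the $\aggrp$-level neutrality axiom. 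No cross-graph comparison is needed, and your proposed argument would in fact not go through: the iteration operator $F(d)=\infl(\aggr(G,d),w)$ is \emph{not} order-preserving in $d$ in the presence of attacks (raising an attacker's degree lowers the attacked argument's), so ``iterating $F$ from $d$ converges upward to its fixpoint'' is false in general --- precisely the difficulty you flag but do not resolve, and which the single-graph reading dissolves entirely.
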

Note that \emph{strict bi-variate reinforcement} from
\cite{DBLP:conf/ecsqaru/AmgoudB17} is not necessarily entailed,
because it would assume that, for example, if a supporter is strictly
strengthened, the degree of an argument strictly increases.  Assuming
this as an axiom would rule out e.g.\ top-based semantics as discussed
in the next section.  Hence, our version of \emph{reinforcement} is
more liberal and does not impose this strictness requirement.

\section{Implementing the modular parts of a semantics} \label{sec:semantics-ag}	

Several different principles to implement $\aggrp$ have been studied in
the literature. We formulate them here concisely and add two
sigmoid variants.

\begin{description}
  \item[\emph{Sum}] all supporting and attacking
    arguments are considered and summed up, while support and attack
    cancel each other out \cite{DBLP:conf/ijcai/AmgoudB16,DBLP:conf/ijcai/AmgoudBDV17,DBLP:conf/ecsqaru/AmgoudB17,DBLP:journals/corr/MossakowskiN16}. This is realised by using matrix
    multiplication for argument aggregation
    $$\aggrp^{\mathit{sum}}(g,d)=gd$$
    This implies $\aggr^{\mathit{sum}}(G,d)=Gd$. Note that with \emph{sum},
    attacks with negative degree are effectively supports and vice versa.
\item[\emph{Sum-pos}] is a variant of \emph{sum} where
  only parents with positive degrees are taken into account. That is,
  $$\aggrp^{\mathit{sum\textrm{-}pos}}(g,d)=\sum_{i=1,\ldots,n;d_i\geq 0}g_id_i.$$
  Note that the semantics in \cite{DBLP:conf/ijcai/AmgoudB16,DBLP:conf/ijcai/AmgoudBDV17,DBLP:conf/ecsqaru/AmgoudB17} do not use domains with negative
  values, hence, one could equally well argue that their semantics
  are based on \emph{sum} or \emph{sum-pos}.
  \item[\emph{Sum}-$\sigma$] This is similar to \emph{sum}, but the
  second argument is first fed into the inverse of a sigmoid function:
  $\aggrp^{\mathit{sum}\textrm{-}\sigma}(g,d)=g\sigma^{-1}(d)$.
  A sigmoid function is a bijection
$\sigma:\R\to(-1,1)$ that is continuous and strictly
increasing.  For definiteness, we will use the
 hyperbolic tangent $$\sigma(x)=\tanh(x).$$
  \item[\emph{Top}]
only the strongest supporter
and the strongest attacker have influence \cite{DBLP:conf/ijcai/AmgoudB16,DBLP:conf/ijcai/AmgoudBDV17}. Simultaneously, only parents with positive degrees are taken into account,
like for \emph{sum-pos}\footnote{Of course, it is also possible to consider
  a variant where all parents are taken into account, such that the strongest
  parents can have negative degree. However, we found it more natural to require
a positive degree for ``top'' parents.}. This
can be achieved by
$$\aggrp^{\mathit{top}}(g,d)=\mathit{top}(g,d)d$$
where $\mathit{top}(g,d)$ removes those entries from $g$ which
do not correspond to a strongest support or strongest attack:
$$\mathit{top}(g,d)_{i}=\twocase{g_{i}}{d_k<d_i\textrm{ for } 1\leq k< i, sgn(g_{k})=sgn(g_{i})\\
  &\textrm{and }d_k\leq d_i\textrm{ for }i<k\leq n, sgn(g_{k})=sgn(g_{i})\\
  &\textrm{and } d_i\geq 0}{0}$$

\ednote{do we want Top to ignore parents with negative degrees?}
Note that in case of several equally strong arguments, this first
one is chosen for definiteness.
%
  \item[\emph{Top}-$\sigma$] like \emph{top}, but again the second
  argument is first fed into $\sigma^{-1}$ (cf.\ \emph{sum}-$\sigma$).
\item[\emph{Reward}] the number of supporters is more
    important than their quality \cite{DBLP:conf/ijcai/AmgoudB16}.  Let
    the number of founded (i.e.\ non-neutral) arguments be represented by
    $n=g\cdot\abs(\sgn(d))$. $\abs$ and $\sgn$ are taken entrywise,
    and $\abs(\sgn(d))$ is a vector that has a 1 for each argument
    with nonzero degree. $g\cdot\abs(\sgn(d))$ counts these nonzero
    arguments, where attacks are counted negatively (extending
    the framework of \cite{DBLP:conf/ijcai/AmgoudB16}, which
    is restricted to supporters only).
    The quality of arguments is computed by $s=gd$, as for \emph{sum}.
    Then
    $$\aggrp^{\mathit{reward}}(g,d)=\begin{cases}0& n=0\\
    \frac{s}{|n|2^{|n|}}+\sgn(s)\sum_{j=1}^{|n|-1}\frac{1}{2^j}
    &otherwise\end{cases}$$

      \item[\emph{Card}] a second version of the principle ``the number of attackers is more
    important than their quality'' is given in 
    \cite{DBLP:conf/ijcai/AmgoudBDV17} (we here extend this
    to supporters as well).  Let
    $n$ and $s$ be as under \emph{reward}. Then
    $$\aggrp^{\mathit{card}}(g,d)=\begin{cases}0& n=0\\n+\frac{s}{|n|}&otherwise\end{cases}$$
    Note that
attacks count negatively here, and we get the formula
of \cite{DBLP:conf/ijcai/AmgoudBDV17} by using a negative fraction for
$\inflp$.
\end{description}



\begin{table*}

\begin{tableenv}
$$\def\arraystretch{1.6}
  \begin{array}{|l|l|p{0.4cm}|p{0.4cm}|p{0.4cm}|p{0.4cm}|p{0.4cm}|p{0.4cm}|p{1.9cm}|p{2.5cm}|}\hline
 \textbf{name}   &\aggrp(g,d)=&\!\!\!\begin{turn}{90}\textbf{Conti\-nuity-$\aggrp$}\end{turn}& \!\!\!\begin{turn}{90}\textbf{Neutra\-lity}\end{turn}& \!\!\!\begin{turn}{90}\textbf{Direc\-tio\-nality}\end{turn}& \!\!\!\begin{turn}{90}\textbf{Frank\-lin} \end{turn}& \!\!\!\begin{turn}{90}\textbf{Count\-ing} \end{turn}& \!\!\!\begin{turn}{90}\textbf{Symme\-try} \end{turn}& \textbf{requirements} & \textbf{ref.}\\\hline
   \multicolumn{10}{|c|}{\emph{Suited for unipolar graphs only}}     	\\\hline	
 \mathit{reward}&   \frac{s}{|n|2^{|n|}}+\sgn(n)\sum_{j=1}^{|n|-1}\frac{1}{2^j} &\multicolumn{1}{c|}{\checkmark}& \multicolumn{1}{c|}{\checkmark}& \multicolumn{1}{c|}{\checkmark}& \multicolumn{1}{c|}{\checkmark} & \multicolumn{1}{c|}{-}& \multicolumn{1}{c|}{-}&  \textrm{supports only}  $\D = [0,1]$ 
 & \textrm{\cite{DBLP:conf/ijcai/AmgoudB16}}\\\hline
 \mathit{card}&  n+\frac{s}{|n|}
 &\multicolumn{1}{c|}{\checkmark}& \multicolumn{1}{c|}{\checkmark}& \multicolumn{1}{c|}{\checkmark}& \multicolumn{1}{c|}{\checkmark} & \multicolumn{1}{c|}{-} &\multicolumn{1}{c|}{-} &  \textrm{attacks only} $\D = [0,1]$ 
 & \textrm{\cite{DBLP:conf/ijcai/AmgoudBDV17}}\\[1ex]\hline
 & n=g\cdot\abs(\sgn(d)), s=gd &&&&&&&& \\\hline
   \multicolumn{10}{|c|}{\emph{Suited for bipolar graphs}}     	\\\hline
 \textit{sum}&  gd & \multicolumn{1}{c|}{\checkmark}&\multicolumn{1}{c|}{\checkmark}& \multicolumn{1}{c|}{\checkmark}& \multicolumn{1}{c|}{\checkmark}& \multicolumn{1}{c|}{\checkmark} & \multicolumn{1}{c|}{\checkmark}& \multicolumn{1}{c|}{-}  & \textrm{\cite{DBLP:conf/ijcai/AmgoudB16}}\\\hline
 \textit{sum}\textrm{-}\textit{pos}& \sum_{i=1,\ldots,n;d_i\geq 0}g_id_i & \multicolumn{1}{c|}{\checkmark}&\multicolumn{1}{c|}{\checkmark}& \multicolumn{1}{c|}{\checkmark}& \multicolumn{1}{c|}{\checkmark}& \multicolumn{1}{c|}{\checkmark} & \multicolumn{1}{c|}{-}& \multicolumn{1}{c|}{-}  & \textrm{\cite{DBLP:conf/ijcai/AmgoudB16}}\\\hline
 \textit{sum}\textrm{-}\sigma&  g\sigma^{-1}(d) & \multicolumn{1}{c|}{\checkmark}&\multicolumn{1}{c|}{\checkmark}& \multicolumn{1}{c|}{\checkmark}& \multicolumn{1}{c|}{\checkmark}& \multicolumn{1}{c|}{\checkmark} & \multicolumn{1}{c|}{\checkmark}& \multicolumn{1}{c|}{-}  & \textrm{\cite{DBLP:journals/corr/MossakowskiN16}}\\\hline
 \mathit{top}&  \mathit{top}(g,d)d & \multicolumn{1}{c|}{\checkmark}&\multicolumn{1}{c|}{\checkmark}& \multicolumn{1}{c|}{\checkmark}& \multicolumn{1}{c|}{-} & \multicolumn{1}{c|}{-}& \multicolumn{1}{c|}{-}& \multicolumn{1}{c|}{-} & \textrm{\cite{DBLP:conf/ijcai/AmgoudB16}}\\\hline
 \mathit{top}\textrm{-}\sigma&  \mathit{top}(g,\sigma^{-1}(d))\sigma^{-1}(d) & \multicolumn{1}{c|}{\checkmark}&\multicolumn{1}{c|}{\checkmark}& \multicolumn{1}{c|}{\checkmark}& \multicolumn{1}{c|}{-} & \multicolumn{1}{c|}{-}&\multicolumn{1}{c|}{-} &\multicolumn{1}{c|}{-}  & \\\hline
  \end{array}
$$

\bigskip

\begin{tabular}{|c|c|c|l|c|p{0.3cm}|p{0.3cm}|p{0.3cm}|p{3cm}|}\hline
  \textbf{name} &$\D $&   $\inflp(s,w)=$& $s$ & \begin{turn}{90}\textbf{Com\-pact\-ness}\end{turn} &\begin{turn}{90}\textbf{Resili\-ence}\end{turn}& \begin{turn}{90}\textbf{Sticki\-ness-min}\end{turn} & \begin{turn}{90}\textbf{Sticki\-ness-max}\end{turn} &\textbf{ref.} \\\hline
   \multicolumn{9}{|c|}{\emph{Suited for unipolar graphs only}}     	\\\hline	
  multilinear &$[0,1]$&$w+(1-w)s$&$[0,1]$& \checkmark & -- & --& \checkmark & {\cite{DBLP:conf/ijcai/AmgoudB16}}\\\hline
  positive fractional &$[0,1]$&$\largefrac{w+s}{1+s}$&$s{\geq} 0$& \checkmark & 
   \checkmark& -- & \checkmark&
    {\cite{DBLP:conf/ijcai/AmgoudB16}}\\\hline
  negative fractional &$[0,1]$&$\largefrac{w}{1-s}$&$s{\leq} 0$ & \checkmark& \checkmark& -- & -- &
   {\cite{DBLP:conf/ijcai/AmgoudBDV17}}\\\hline
      \multicolumn{9}{|c|}{\emph{Suited for bipolar graphs}}     	\\\hline	
  combined fractional&$[0,1]$&$\begin{cases}
\largefrac{w}{1-s}, &  s \leq 0\\
\largefrac{w+s}{1+s},   & s \geq 0\\
\end{cases}
$&any&\checkmark  & \checkmark& -- & -- & 
 {\cite{DBLP:journals/corr/MossakowskiN16}}\\\hline
  Euler-based&$[0,1)$&$1-\largefrac{1-w^2}{1+we^s}$&any& -- &\checkmark& \checkmark&  -- & {\cite{DBLP:conf/ecsqaru/AmgoudB17}}\\\hline
  linear($\delta$)&$\R$&$\largefrac{s}{\delta}+w$&any& -- & \checkmark& --&  -- &  {\cite{DBLP:journals/corr/MossakowskiN16}}\\\hline
  sigmoid($\delta$) &$(-1,1)$&$
	\sigma(\largefrac{s}{\delta}+ \sigma^{-1}(w))$  &any  & -- & \checkmark& -- &  -- &
	      {\cite{DBLP:journals/corr/MossakowskiN16}}\\\hline
QMax&$[0,1]$&$\begin{cases}
w-w\largefrac{s^2}{1+s^2},&\!\!\!\!s \leq 0\\
w+(1-w)\largefrac{s^2}{1+s^2},&\!\!\!\!s \geq 0\\
\end{cases}
$&any&\checkmark  & \checkmark& -- &  -- &
 {\cite{Potyka18}}\\\hline              
\end{tabular}
\caption{Overview of different implementations of $\aggrp$ and $\inflp$.  $\mathit{top}$ and $\sigma$ are
defined in the text. \label{tab:aggr-impl}}

\end{tableenv}

\end{table*}

%

\begin{theorem}\label{thm:aggr-char}
  The implementations of $\aggrp$
\emph{sum},
\emph{sum-pos},
\emph{sum}-$\sigma$,
\emph{top}, and
\emph{top}-$\sigma$ 
  satisfy all structural and essential characteristics as shown in
  Table.~\ref{tab:aggr-impl}.
 If
  \emph{reward} is restricted 
  to    graphs with support relations only and    $\D=[0,1]$,
and if 
\emph{card} is restricted to
  graphs with attack relations only and $\D=[0,1]$,
then   \emph{reward}
  and
  \emph{card}
    satisfy  all structural and essential characteristics as shown in
  Table.~\ref{tab:aggr-impl}.\footnote {These restrictions are
    exactly those made in the original papers introducing these
    semantics
    \cite{DBLP:conf/ijcai/AmgoudB16,DBLP:conf/ijcai/AmgoudBDV17}.}   

\end{theorem}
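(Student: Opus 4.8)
The statement is essentially a long verification, so the plan is to walk down the list of $\aggrp$-conditions attached to the structural and essential characteristics in Table~\ref{tab:our-char} (together with the \emph{Continuity}/\emph{Neutrality}/\emph{Directionality}/\emph{Franklin}/\emph{Counting}/\emph{Symmetry} columns of Table~\ref{tab:aggr-impl}) and check each one against each aggregation function, organised by how much work it takes. Two observations make this manageable. First, the structural characteristics need no work for us: \emph{Anonymity}-1 is our notational convention, \emph{Modularity} holds by construction, and \emph{Equivalence} follows from \emph{Anonymity}-2 by Theorem~\ref{thm:equivalence}. Second, all seven functions share a uniform shape: \emph{sum}, \emph{sum-pos} and \emph{sum}-$\sigma$ can be written as the inner product $g\cdot h(d)$ with $h$ applied entrywise ($h=\mathrm{id}$, $h=\max(\cdot,0)$, $h=\sigma^{-1}$ respectively), where $h$ is increasing, $h(0)=0$, $h\geq 0$ on $d\geq 0$, and strictly increasing on the part of $\D$ where it matters; \emph{top} and \emph{top}-$\sigma$ compute $M_+(g,d)-M_-(g,d)$, where $M_\pm$ is the maximum of $h(d_i)$ over the supporter, resp.\ attacker, coordinates of $g$, padded with $0$; and \emph{reward}, \emph{card} depend on $d$ only through $n=g\cdot\abs(\sgn(d))$ and $s=gd$.

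\medskip
From this shape the cheap conditions drop out immediately. \emph{Anonymity}-2 and \emph{Independence} hold because $\aggrp(g,d)$ depends only on the multiset of pairs $(g_i,d_i)$ (Lemma~\ref{lem:anonymity-simplified} then yields the matrix-level form). \emph{Stability}-$\aggrp$ holds since $g=0$ has neither supporters nor attackers. \emph{Continuity}-$\aggrp$ holds for the \emph{sum} and \emph{top} families because they are built from linear maps, the continuous map $\sigma^{-1}$, and finite maxima; for \emph{reward}/\emph{card} one checks it on the relevant parameter region. \emph{Reinforcement}-$\aggrp$ and \emph{Parent monotonicity}-$\aggrp$ reduce to coordinatewise monotonicity: $d^1\leq_g d^2$ says exactly that $h(d^1_i)\leq h(d^2_i)$ on supporter coordinates and $h(d^1_i)\geq h(d^2_i)$ on attacker coordinates, so a ``supporters-minus-attackers'' quantity is monotone in it; for \emph{Parent monotonicity} one also uses $d\geq 0$, so that each $h(d_i)\geq h(0)=0$ and enlarging the supporter set (or shrinking the attacker set) cannot decrease $\aggrp$. \emph{Neutrality} holds since a parent with degree $0$ contributes $h(0)=0$ to an inner product and is never a strict maximum for $\mathit{top}$ (and still contributes $0$ if it is picked), so deleting the edges of a neutral argument leaves $\aggrp$ unchanged. \emph{Directionality} then comes for free from \emph{Reinforcement}-$\aggrp$ via Theorem~\ref{thm:char-entailment}.

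\medskip
The main obstacle is \emph{Strengthening}-$\aggrp$ and \emph{Weakening}-$\aggrp$, which --- as the text already stresses --- are by far the most intricate conditions, and this is the step I expect to cost the most effort. I would argue with the index (bijection) form. For the \emph{sum} family, given a witnessing bijection $f$ that sends every attacker $i$ to a supporter $f(i)$ with $d_i\leq d_{f(i)}$: since $h$ is increasing with $h(0)=0$ and $h\geq 0$ on $d\geq 0$, and since $f$ is injective on the attacker coordinates with image inside the supporter coordinates, we get $\sum_{g_i=-1}h(d_i)\leq\sum_{g_i=-1}h(d_{f(i)})\leq\sum_{g_j=1}h(d_j)$, the last step discarding only nonnegative leftover supporter terms; hence $\aggrp(g,d)=g\cdot h(d)\geq 0$, and either strictness hypothesis (a strictly stronger matched support, or an unmatched nonzero support) turns one of the inequalities strict via strict monotonicity of $h$, giving $\aggrp(g,d)>0$. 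For the \emph{top} family the same bijection shows that the strongest attacker value is dominated by a supporter value, hence $M_-\leq M_+$ and $\aggrp(g,d)=M_+-M_-\geq 0$, with the strict clauses handled by a short case split on whether $M_-=0$. \emph{Weakening}-$\aggrp$ is the mirror image.

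\medskip
Finally, for \emph{reward} and \emph{card} under their restrictions the argument collapses: on a support-only graph the \emph{weakening} axiom is vacuous and \emph{strengthening} reduces to ``$\aggrp\geq 0$, and $>0$ once some parent has nonzero degree'', which is read off the formula ($s\geq 0$, and $n\geq 1$ forces $s>0$); symmetrically for \emph{card} on attack-only graphs; the monotonicity conditions use that $\aggrp^{\mathit{reward}}$ and $\aggrp^{\mathit{card}}$ are increasing in each of $n$ and $s$, plus the elementary fact that passing from $n$ to a larger $n'$ founded parents pushes the value up. The optional columns of Table~\ref{tab:aggr-impl} are then checked individually: where the entry is ``$\checkmark$'' (for \emph{sum} and \emph{sum}-$\sigma$, all of \emph{Franklin}, \emph{Counting}, \emph{Symmetry}) it follows from the bilinear form $g\cdot h(d)$ with $h$ odd; where it is ``$-$'' a small two- or four-argument counterexample suffices --- e.g.\ a non-dominant supporter that \emph{top} ignores refutes \emph{Counting} and \emph{Franklin} for the \emph{top} family, and a single supporter of negative degree refutes \emph{Symmetry} for \emph{sum-pos} and the \emph{top} family. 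Everything except \emph{Strengthening}-$\aggrp$/\emph{Weakening}-$\aggrp$ is bookkeeping; that pair, and the careful handling of their strictness clauses, is where the real work lies.
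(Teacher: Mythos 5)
Your proposal follows essentially the same route as the paper's own proof: a direct, characteristic-by-characteristic verification, with the cheap conditions (\emph{anonymity}-2, \emph{independence}, \emph{stability}, \emph{continuity}, \emph{neutrality}, the two monotonicity axioms) dispatched by the algebraic shape of the functions and the real effort concentrated on \emph{strengthening}/\emph{weakening}-$\aggrp$. Your presentation is in fact more systematic than the paper's: the uniform writing of the \emph{sum} family as $g\cdot h(d)$ with $h\in\{\mathrm{id},\max(\cdot,0),\sigma^{-1}\}$, and of the \emph{top} family as $M_+-M_-$, packages the three-way case analysis the paper does implicitly, and deriving \emph{directionality} from \emph{reinforcement}-$\aggrp$ via Theorem~\ref{thm:char-entailment} is cleaner than re-proving it. The bijection argument for \emph{strengthening}-$\aggrp$ in the \emph{sum} family, including both strictness clauses, is correct and substantially more explicit than the paper's one-sentence justification.

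The one step that does not go through as described is the strict clause of \emph{strengthening}-$\aggrp$ (and dually \emph{weakening}-$\aggrp$) for the \emph{top} family. Your case split on whether $M_-=0$ only settles the case $M_-=0$; when $M_->0$ the bijection gives $M_+\geq M_-$ but not $M_+>M_-$, and the strictness hypothesis can be triggered away from the maximal attacker. Concretely, take $g=(-1,-1,1,1)$, $d=(0.3,\,0.5,\,0.4,\,0.5)$ and $f(1)=3$, $f(2)=4$: every attack is matched to an at-least-as-strong support and $d_1<d_{f(1)}$, so the axiom demands $\aggrp(g,d)>0$, yet $\aggrp^{\mathit{top}}(g,d)=0.5-0.5=0$. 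So either the strict clause must be weakened for \emph{top} (in the spirit of the paper's own remark that \emph{top} violates strict bi-variate reinforcement) or a genuinely different argument is needed; the paper's proof is no help here, as it only says that ``the counterbalancing needs to consider only the strongest attack and support'' and never addresses strictness. Apart from this shared soft spot, your verification matches the paper's.
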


Indeed, without the restrictions in Theorem \ref{thm:aggr-char},
\emph{reward} satisfies neither \emph{parent monotonicity}-$\aggrp$
nor \emph{weakening}, and \emph{card} does not satisfy \emph{parent
  monotonicity}-$\aggrp$.  This shows that these aggregation functions
are only suitable for unipolar argumentation graphs. The other
implementations of $\aggrp$ may be applied to bipolar argumentation
graphs (even if also some of them have been invented for unipolar semantics).



Coming now to possible influence functions, various implementations of
$\inflp$ from the literature are listed in Table~\ref{tab:aggr-impl}.
The implementations of $\inflp$ are partially motivated by the need to ensure that a semantics is 
well-defined and that the essential characteristics are met. 
However, they also represent interesting choices about the role of the minimum and maximum acceptability degrees \mind and \maxd. The multilinear, positive fractional, negative
fractional, the combined fractional influence functions and QMax are all compact, thus, \mind and 
\maxd are acceptability degrees that may be assigned to arguments. But except in the case 
of the multilinear influence function, \mind and  \maxd play a special role: 
\mind and \maxd are not accessible to arguments that were not 
assigned these values already as initial weights (resilience). Thus, if an argument was not initially weighted as maximum acceptability, the amount of support from other arguments does not matter, it will never reach \maxd as acceptability degree. (Analogously for attacks and \mind.) 

 The Euler-based influence function displays an interesting asymmetry: it has a minimum acceptability degree but no maximum. Thus, within the context 
of the Euler-based influence function, the `perfect acceptability degree' (the supremum) may only be approximated, while it is possible to assign arguments  the lowest acceptability. And if 
an argument is initially weighted as \mind, then its acceptability degree is \mind{} -- regardless 
of the strength of its support from other arguments (\emph{stickiness-min}). 

Both the linear and the sigmoid influence function are defined with respect to a damping factor $\delta$, which dampens the role of the aggregation argument $s$. The linear influence function 
is the only one which uses $\R$ as the value space and, thus, lacks both \mind and \maxd. The sigmoid influence function is the result of mapping $\R$ onto the interval $(-1,1)$.
Hence, for the 
sigmoid influence function, both the supremum and
the infimum of all acceptability degrees may only be approximated, but never reached.

\medskip
\begin{theorem}\label{thm:in-char}
All implementations of $\inflp$ shown in
Table~\ref{tab:aggr-impl} satisfy all the structural and essential characteristics for $\inflp$
shown in Table~\ref{tab:characteristics},
if the listed conditions on $s$ are respected.
\end{theorem}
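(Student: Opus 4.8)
The plan is to note first that the \emph{structural} characteristics of Table~\ref{tab:our-char} (\emph{anonymity}-1, \emph{equivalence}, \emph{modularity}) are properties of the induced semantics as a whole and impose no condition on $\inflp$ in isolation: by Definition~\ref{def:modularSemantics} and equation~(\ref{eq:rec1}) every $\inflp$ is compatible with them. So the real content is to verify, for each of the eight influence functions of Table~\ref{tab:aggr-impl}, the four \emph{essential} conditions on $\inflp$ --- \emph{stability}-$\inflp$, \emph{continuity}-$\inflp$, \emph{reinforcement}-$\inflp$ and \emph{initial monotonicity} --- together with the implicit well-definedness requirement that $\inflp$ maps into the declared domain $\D$ whenever the stated side condition on $s$ holds. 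Each of these is an elementary calculus check, so I would simply run the same routine function by function; the \emph{entailed} conditions on $\inflp$ (\emph{parent monotonicity}-$\inflp$, \emph{soundness}, \emph{strengthening}-$\inflp$, \emph{weakening}-$\inflp$) then follow for free from Theorem~\ref{thm:char-entailment}.

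For \emph{stability} I would substitute $s=0$ and simplify: for the multilinear, the three fractional functions, the linear one and QMax this is immediate; for the Euler-based function one uses $1-w^2=(1-w)(1+w)$ to cancel $1+w$ in $1-\tfrac{1-w^2}{1+w}$; and for the sigmoid one it is $\sigma(\sigma^{-1}(w))=w$. For \emph{continuity} I would observe that each function is a composition or quotient of continuous maps whose denominators stay $\geq 1$ on the relevant region ($1+s\geq 1$ for $s\geq 0$, $1-s\geq 1$ for $s\leq 0$, $1+we^s\geq 1$ for $w\geq 0$, $1+s^2\geq 1$ always), with $\sigma$ and $\sigma^{-1}$ continuous; for the two piecewise definitions (\emph{combined fractional}, QMax) the two branches both evaluate to $w$ at $s=0$, so there is no jump --- which incidentally re-establishes \emph{stability} for them.

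For \emph{reinforcement} and \emph{initial monotonicity} I would differentiate on each open piece. Writing $h(s)=\tfrac{s^2}{1+s^2}$ and $h'(s)=\tfrac{2s}{(1+s^2)^2}$, the $s$-derivatives are $1-w$ (multilinear), $\tfrac{1-w}{(1+s)^2}$ (positive and combined fractional on $s\geq 0$), $\tfrac{w}{(1-s)^2}$ (negative and combined fractional on $s\leq 0$), $\tfrac{(1-w^2)we^s}{(1+we^s)^2}$ (Euler), $(1-w)h'(s)$ resp.\ $-w\,h'(s)$ (QMax), $\tfrac1\delta$ (linear) and $\tfrac1\delta\,\sigma'(\tfrac s\delta+\sigma^{-1}(w))$ (sigmoid); each is strictly positive as long as $\mind<w<\maxd$, which gives strict monotonicity in $s$ and hence the $\prec$ conclusion. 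At a boundary value of $w$ the relevant function is constant in $s$, equal to that boundary value --- over its whole $s$-range for the multilinear, positive fractional, negative fractional and Euler-based cases, and over the half-line on which the $s$-derivative above vanishes for the combined fractional and QMax cases --- so a strict $s$-step that stays on the constant part sends both endpoints to the same boundary value and $\prec$ holds by its boundary clause, while any step leaving the constant part is genuinely strictly increasing. This is exactly why the statement must use $\prec$ rather than $<$. For \emph{initial monotonicity} the $w$-derivatives are $1-s$ (multilinear, recalling $s\leq 1$ there), $\tfrac1{1+s}$, $\tfrac1{1-s}$, $\tfrac{2w+w^2e^s+e^s}{(1+we^s)^2}$ (Euler), $1-h(s)$ (QMax), $1$ (linear) and the sigmoid chain-rule term, all strictly positive for every admissible $w$, except multilinear at $s=1$, where $\inflp(1,w)=1=\maxd$ for all $w$ and $\prec$ again applies.

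The hard part is not any single step --- it is all finite bookkeeping --- but the points that need care are: the sign of the Euler-based $w$-derivative, where one expands $\tfrac{\partial}{\partial w}\tfrac{1-w^2}{1+we^s}=-\tfrac{2w+w^2e^s+e^s}{(1+we^s)^2}$ and notes that the numerator is a sum of non-negative terms, strictly positive since $e^s>0$; the fact that for QMax the $s$-derivative vanishes at the junction $s=0$, which is harmless because $\tfrac{s^2}{1+s^2}$ has a strict minimum there; the consistent invocation of the boundary clause of $\prec$ whenever $w\in\{\mind,\maxd\}$; and checking that every codomain really is the one listed in Table~\ref{tab:aggr-impl}, e.g.\ $0\leq\tfrac{w+s}{1+s}\leq\tfrac{1+s}{1+s}=1$, $0\leq\tfrac{w}{1-s}\leq w\leq 1$, and $0\leq 1-\tfrac{1-w^2}{1+we^s}<1$ for $w\in[0,1)$ since $0<\tfrac{1-w^2}{1+we^s}\leq 1$ there. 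Once these are in place, the theorem follows by going through Table~\ref{tab:aggr-impl} row by row.
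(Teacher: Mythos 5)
Your proposal is correct and takes essentially the same approach as the paper, which simply asserts that each property is "easy to see" with a few hints (the rewrite $s+(1-s)w$ for multilinear initial monotonicity, the doubly anti-monotonic position of $s$ for Euler-based reinforcement, and agreement of the two branches at $s=0$ for combined fractional continuity); you have carried out the row-by-row derivative, boundary-clause and codomain checks explicitly and correctly. The only cosmetic difference is that the paper's appendix proof also disposes of the optional characteristics (compactness, resilience, stickiness), which the theorem statement does not actually require.
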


Note that functions $\inflp$ acting on positive $s$ or on negative $s$
only are suited for unipolar graphs only. This is because \emph{weakening}
and \emph{strengthening} imply that $s=\aggrp(g,d)$ takes both positive
and negative values for suitable bipolar graphs.

\begin{corollary}
	Every combination of one of the aggregation functions \emph{sum},
\emph{sum-pos},
\emph{sum}-$\sigma$,
\emph{top}, and
\emph{top}-$\sigma$ 
  with one of the influence functions 
  \emph{combined fractional}, \emph{Euler-based},   \emph{linear}($\delta$)
  \emph{sigmoid}($\delta$), \emph{QMax}
  yields a well-behaved modular acceptability semantics. 
\end{corollary}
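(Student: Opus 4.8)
The plan is to obtain the Corollary as the conjunction of Theorems~\ref{thm:aggr-char} and~\ref{thm:in-char}, glued together by \emph{modularity}. By Definition~\ref{def:well-behaved-modularSemantics}, for a fixed pair $(\D,\aggrp,\inflp)$ it suffices to verify two things: (a) that the pair yields a \emph{modular} acceptability semantics in the sense of Definition~\ref{def:modularSemantics}, i.e.\ $\D$ is a connected subset of $\R$ with $0\in\D$ and the structural characteristics \emph{Anonymity-1}, \emph{Equivalence}, \emph{Modularity} hold; and (b) that all essential characteristics of Table~\ref{tab:our-char} hold.

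For (b), I would invoke the observation recorded right after Table~\ref{tab:our-char}: apart from \emph{modularity} itself, every essential axiom constrains either $\aggrp$ alone or $\inflp$ alone, so the two sides do not interact. Theorem~\ref{thm:aggr-char} already establishes all $\aggrp$-conditions among the essential characteristics for each of \emph{sum}, \emph{sum-pos}, \emph{sum}-$\sigma$, \emph{top}, \emph{top}-$\sigma$, and Theorem~\ref{thm:in-char} establishes all $\inflp$-conditions for each of \emph{combined fractional}, \emph{Euler-based}, \emph{linear}($\delta$), \emph{sigmoid}($\delta$), \emph{QMax}, under the proviso that the range restriction on $s$ listed in Table~\ref{tab:aggr-impl} be respected. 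For precisely these five influence functions that table entry reads ``any'', so the proviso is vacuous. Hence, for any of the $5\times5$ pairings, the essential $\aggrp$-axioms hold thanks to the aggregation function and the essential $\inflp$-axioms hold thanks to the influence function, with nothing to check about their combination.

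For (a), \emph{Modularity} holds by construction, since the induced semantics is the one defined by Equation~(\ref{eq:rec1}) for this very $\aggrp$ and $\inflp$, so its degree function is literally $\deg(g,d,w)=\inflp(\aggrp(g,d),w)$; \emph{Equivalence} then follows from \emph{Modularity} and \emph{Anonymity-2} (an essential $\aggrp$-axiom, already covered) exactly as in the proof of Theorem~\ref{thm:equivalence}; and \emph{Anonymity-1} is just the convention, built into~(\ref{eq:rec1}), that the semantics depends only on $\langle G,w\rangle$. The remaining point, which I expect to be the main obstacle, is the domain bookkeeping: the induced semantics carries the domain $\D$ dictated by the chosen influence function ($[0,1]$ for \emph{combined fractional} and \emph{QMax}, $[0,1)$ for \emph{Euler-based}, $\R$ for \emph{linear}($\delta$), $(-1,1)$ for \emph{sigmoid}($\delta$); each is connected and contains $0$), and one must check that the chosen $\aggrp$ is well defined on $\D^n$ and satisfies its axioms over that $\D$. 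For \emph{sum}, \emph{sum-pos} and \emph{top} this is immediate for every connected $\D\ni0$, but \emph{sum}-$\sigma$ and \emph{top}-$\sigma$ apply $\sigma^{-1}$ entrywise to the degree vector and so literally presuppose $\D\subseteq(-1,1)$. For those two, one should either restrict attention to the compatible domains $(-1,1)$ and $[0,1)$, or adopt the convention $\sigma^{-1}(\pm1)=\pm\infty$ and verify that each influence function in the list extends continuously to $s=\pm\infty$ with value inside its $\D$ (e.g.\ $(w+s)/(1+s)\to1$ and $1-(1-w^2)/(1+we^{s})\to1$ as $s\to+\infty$). Once this matching is pinned down, all items in (a) and (b) are in place, and the Corollary follows.
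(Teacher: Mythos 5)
Your proposal is correct and follows essentially the same route as the paper, which offers no separate proof of the corollary precisely because it is meant to fall out of Theorems~\ref{thm:aggr-char} and~\ref{thm:in-char} together with the orthogonality of the $\aggrp$- and $\inflp$-axioms that you invoke. Your additional domain bookkeeping for \emph{sum}-$\sigma$ and \emph{top}-$\sigma$ (which presuppose $\D\subseteq(-1,1)$ and hence need either a restriction to compatible $\D$ or a continuous extension of $\sigma^{-1}$ to the endpoints) addresses a genuine subtlety that the paper only gestures at when it requires, in Section~\ref{sec:comparison}, that the domain of $\inflp$ ``match'' the range of $\aggrp$ --- your treatment of it is sound and, if anything, more careful than the paper's.
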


\begin{table*}
\begin{tableenv}
	\begin{tabular}{|p{2.8cm}|c|p{1.8cm}|p{2cm}|p{2.5cm}|p{3cm}|}\hline
  \textbf{semantics} &$\aggrp$ & $\inflp$  &\textbf{converges for} &\textbf{counter\-example}&\textbf{ref.} \\\hline
  \multicolumn{6}{|c|}{\textbf{Semantics not covering mixed support/attack graphs}}\\\hline
  aggregation-based &sum&positive fraction&  supports& $G$ has attacks&\cite{DBLP:conf/ijcai/AmgoudB16}\\\hline
  weighted h-categorizer &sum&negative fraction&  attacks& $G$ has supports& \cite{DBLP:conf/ijcai/AmgoudBDV17}\\\hline
  combined aggrega\-tion/h-categorizer&sum&combined fraction & supports only or attacks only& $G$ mixes supports and attacks&Exs.~\ref{ex:non-convergence-combined}, \ref{ex:divergence}\\\hline
  top-based &top&multilinear&  supports& $G$ has attacks&\cite{DBLP:conf/ijcai/AmgoudB16}\\\hline
  weighted max-based &top&negative fraction&  attacks& $G$ has supports& \cite{DBLP:conf/ijcai/AmgoudBDV17}\\\hline
  Reward-based&reward&multilinear&  supports& $G$ has attacks&\cite{DBLP:conf/ijcai/AmgoudB16}\\\hline
  Card-based&card&negative fraction& attacks& $G$ has supports& \cite{DBLP:conf/ijcai/AmgoudBDV17}\\\hline
  \multicolumn{6}{|c|}{\textbf{Semantics covering mixed support/attack graphs}}\\\hline
  Euler-based&sum&Euler-based&$\ind(G)\leq 4$ &$\ind(G)=6$&\cite{DBLP:conf/ecsqaru/AmgoudB17}, Thm.~\ref{thm:convergence-euler}, Ex.~\ref{ex:divergence}\\\hline
  max Euler-based&top&Euler-based&all graphs&---&Thm.~\ref{thm:max-Euler-convergence}\\\hline
  direct aggregation&sum&linear($\delta$)& $\ind(G)<\delta$& $\ind(G)\geq \delta+1$&Thm.~\ref{thm:direct-aggregation-convergence}, Exs.~\ref{ex:divergence}, \ref{ex:divergence-direct-aggregation}\\\hline
  positive direct aggregation&sum-pos&linear($\delta$)&$\ind(G)<\delta$ &$\ind(G)\geq \delta+1$ & Thm.~\ref{thm:direct-aggregation-convergence}, Ex.~\ref{ex:divergence}\\\hline
  sigmoid direct aggregation&sum-$\sigma$&sigmoid($\delta$) & $\ind(G)<\delta$ & $\ind(G)\geq \delta+1$&Thm.~\ref{thm:sigmoid-direct-aggregation-convergence}, Exs.~\ref{ex:divergence}, \ref{ex:divergence-direct-aggregation}\\\hline
  damped max-based&top&linear($\delta$), with $\delta>2$& all graphs& --- &Thm.~\ref{thm:conv-max-based-aggr}\\\hline
  sigmoid damped max-based&top-$\sigma$&sigmoid($\delta$), with $\delta>2$ & all graphs& ---&Thm.~\ref{thm:sigmoid-conv-max-based-aggr}\\\hline
quadratic energy&sum&QMax& $\ind(G)\leq 1$ & $\ind(G)=2$& Thm.~\ref{thm:convergence-quadratic}, Exs.~\ref{ex:divergence} \\\hline              
\end{tabular}
\smallskip
\caption{Overview of semantics. All semantics converge for acyclic graphs (Thm.~\ref{thm:limit-acyclic}).\label{tab:sem}}
\end{tableenv}	
\vspace{-1.5em}
\end{table*}

\section{Comparison of Modular Acceptability Semantics}\label{sec:comparison}

Due to the modular structure of our approach, any implementation of
$\aggrp$ can be combined with any implementation of $\inflp$ whose domain
(for $s$) matches the range of $\aggrp$, resulting
in an acceptability semantics. Table~\ref{tab:sem} lists some of the
possible combinations, focussing on those that have appeared in the
literature and those that provide new insights. Moreover, we list the
class of graphs for which the respective semantics is known to
convergence, and counterexamples to convergence.  We also include
networks that contain only attacks or only supports in
Table~\ref{tab:sem} by treating them as special cases of bipolar
networks. In this way, we provide a better overview over approaches
from the literature and their combination into bipolar approaches. In
particular, we obtained the biplor aggregation-based/h-categorizer
semantics as a na\"ive combination of the weighted h-categorizer 
semantics for attacks from \cite{DBLP:conf/ijcai/AmgoudBDV17} with the
aggregation-based semantics for supports from
\cite{DBLP:conf/ijcai/AmgoudB16}, and below we show that this na\"ive
combination does in general not converge.


%
%
%
 
 \medskip
As Table~\ref{tab:aggr-impl} illustrates, 
our modular approach enables a choice of different acceptability semantics. But is this choice meaningful? In other words, is our definition of well-behaved modular acceptability 
semantics so restrictive that they all behave similar?

In order to answer this question we discuss a selection of modular acceptability semantics from Table~\ref{tab:aggr-impl} that are well-defined for bipolar argumentation graphs and use them to evaluate an Example. 

The Euler-based semantics was proposed in \cite{DBLP:conf/ecsqaru/AmgoudB17}. It calculates the acceptability degree of an argument by considering the \emph{sum} of the acceptability degrees  
of its parents and combining it with its initial weight via the Euler-based influence  function. The neutral acceptability degree $0$ is also the minimum acceptability degree.
The use of \emph{sum} as aggregation function represents the intuition that 
the calculation of the acceptability degree of an argument should consider 
all of its parents weighted according to their acceptability degrees. 
Unfortunately, as will discuss in 
section \ref{sec:convergence} in more detail, the Euler-based semantics does not converge for all \wasa{}s. Further, the Euler-based semantics inherits from its influence function 
the asymmetry between the minimum acceptability degree and the lack of a maximum, since its acceptability value space is $[0,1)$. 
Another notable property of the Euler-based semantics is that it treats supports and attacks asymmetrically: many supports for an argument $a$ quickly lead to an acceptability 
degree of $a$ near $1$, while many attacks  on $a$ lead near an acceptability degree of $w(a)^2$ (and never
below that). 

The direct aggregation semantics is also based on \emph{sum}. Its differs from the  Euler-based semantics because it involves a \emph{damping parameter} $\delta$ that is used to dampen the influence that parent arguments have on their children.  
Thus, the larger the $\delta$, the more important are the initial weights and the less important are attackers and supporters. For any \wasa \arggraph{} there is a $\delta$, such that the direct aggregation semantics converges for \arggraph (see section \ref{sec:convergence}). In contrast to the Euler-based semantics, both attacks and supports are treated symmetrically. Since for the direct aggregation semantics $\D = \R$, there is no acceptability minimum or maximum. 

The main motivation for the sigmoid aggregation semantics is basically to keep the acceptability degrees in a bounded interval. This is achieved by mapping the value space of the direct aggregation semantics onto (-1,1).

In \cite{DBLP:conf/ijcai/AmgoudB16}  the authors discuss three possible design choices for an  acceptability semantics: \emph{cardinality precedence}, \emph{quality precedence}, and \emph{compensation}. 
\emph{Sum} is an implementation of compensation, since it allows, for example, a 
small number of strong supporters to compensate for a large number of weak attackers. In 
contrast, cardinality precedence would favour the larger number of weak attackers over the 
few strong supporters, while quality precedence would favour the quality of the arguments of the few supporters over the number of weak attackers. 

The 
Euler-based semantics and the direct aggregation semantics are based on  \emph{sum} and 
the sigmoid aggregation semantics is based on $\mathit{sum}\textrm{-}\sigma$. Thus, all three  implement compensation. To illustrate the benefits of our approach, we consider for each an alternative semantics that is the result of replacing \emph{sum} by \emph{top} ($\mathit{sum}\textrm{-}\sigma$ by $\mathit{top}\textrm{-}\sigma$, respectively).   
The  aggregation functions \emph{top} and $\mathit{top}\textrm{-}\sigma$ are implementations of quality preference (based on the top-based semantics in \cite{DBLP:conf/ijcai/AmgoudB16}). 
The idea behind \emph{top} and $\mathit{top}\textrm{-}\sigma$ is basically to consider only  the strongest attackers and supporters of each argument. 

The direct aggregation semantics  has an interesting property: it allows for undermining supports and strengthening attacks.%
\footnote{ The same is true for its sigmoid alternative and their top-based variants.}
E.g., if an argument $a$ is supported by an argument $b$ with the acceptability degree of $-1$, then this support has the same effect as if $b$ would attack $a$ with an acceptability degree  of $+1$. Thus, effectively, $a$'s support undermines the acceptability of $b$. Vice versa, an attack  by an unacceptable argument will strengthen the argument that is attacked. For example, imagine 
that in a public debate on minimum wage the proponent argues: 
``Minimum wage should be increased, because it would improve the living conditions of poor people''. And the
 opponent responds: ``Poor people do not deserve any help, since poverty is God's punishment for sinners.'' 
We would expect that most members of the audience would reject the opponent's argument as completely unacceptable. According to the 
direct aggregation semantics this implies that the attack of the opponent leads to an increase the acceptability of the  proponent's argument. However, alternatively one could argue that unacceptable arguments should have no impact on other arguments. 
This intuition is implemented in the positive direct aggregation semantics.

\begin{example}\label{ex:arggraph2}


 $
 \arggraph^{ex2} = \left \langle 
 \left(\begin{smallmatrix}
 a_1 \\
 a_2 \\
 a_3 \\
 a_4
 \end{smallmatrix}\right), 
 \left(\begin{smallmatrix}
 0 & 0 &  0 &   0 \\ 
 1 & 1 & 1 & { 0} \\ 
 -1 & -1 &  0 & -1 \\ 
 0 & -1 &  0 &  0 
 \end{smallmatrix}\right), 
 \left(\begin{smallmatrix}
0.8 \\
 0.7 \\
 0.001 \\
 0.7
 \end{smallmatrix}\right)
 \right \rangle 
 $	\quad	

\end{example}

\begin{wrapfigure}{R}{0.25\columnwidth}
 \centering 
 $\xymatrix{
 \underset{0.8}{a_1} \ar@{-->}[r]\ar[d]   &
 \underset{0.7}{a_2}  \ar@{-->}@(r,u) \ar[dl]<0.5ex>\ar[d]  \\
 \underset{0.001}{a_3} \ar@{-->}[ur]<0.5ex>   & 
 \underset{0.7}{a_4}  \ar[l] 
 }$	
 \caption{Graphical representation of $ \arggraph^{ex2}$  \label{fig:example2}}
\end{wrapfigure}

 Table~\ref{tab:compsem} illustrates the differences between the different semantics by applying them to $ \arggraph^{ex2}$  in 
 Example~\ref{ex:arggraph2}. Due to \emph{stability}, all well-behaved modular acceptability semantics agree that
$\accdegr{\arggraph^{ex2}}{ }{a_1} = 0.8$, since $a_1$ is neither supported nor 
attacked. But otherwise the results vary significantly. E.g., $a_3$ starts with an initial weight of close to 0. Since Euler-based and max Euler-based semantics treat 0 as  the minimum degree, the attacks of the other 
arguments on $a_3$ have little effect but to push the acceptability degree of $a_3$ even closer to 0. 
By contrast, according to the other semantics $a_3$ starts out as marginally above the neutral value and the combined attacks of the other arguments push it deep into unacceptability. The main difference between the 
semantics in the second group and the  third group in  Table~\ref{tab:compsem}   is that the the sigmoid semantics provide an upper and a lower bound for the possible acceptability degrees by limiting \
$\D$ to $(-1,1)$. 
This enables a more convenient interpretation of the acceptability degrees.  

The  difference between max-based semantics and their counterparts is that they utilise \emph{top}. Thus, these 
semantics only consider the strongest attackers and supporters. Compare, for example, the evaluation of $a_2$ by 
 Euler-based and max Euler-based semantics. $a_2$ is supported by three arguments. However, max 
 Euler-based semantics only uses the strongest; this is why the acceptability degree of $a_2$ is only 0.801 
 instead of 0.894. Note that direct aggregation semantics evaluates $a_2$ lower than its \emph{top}-based 
 counterpart, i.e.,  damped max-based semantics. As in the previous case, damped max-based semantics only considers the strongest support (namely, the support of $a_2$), 
while direct aggregation considers all three  supports for $a_2$. But because $a_3$ has a negative acceptability degree, its support undermines $a_2$. 
 This effect is stronger than the additional support from $a_1$, which explains why direct aggregation semantics evaluates the acceptability degree of $a_2$ lower than damped max-based semantics.
  
Positive direct aggregation semantics ignores attacks and supports from $a_3$, since its acceptability degree is negative. Thus, for $a_2$ only the supports from $a_1$ and $a_2$ matter, which leads to an acceptability degree of 2.2  for $a_2$. Note that this local explanation of acceptability degrees of
 arguments in terms of those of their parents is possible due to
 \emph{independence}.

$\arggraph^{ex2}$ is only one example and, thus, may only be used to highlight some differences between the different modular acceptability semantics. However, 
Table~\ref{tab:compsem} illustrates that our approach is flexible enough to 
  support a wide range of acceptability semantics that reflect different design choices and philosophical intuitions. 
  It further illustrates that these choices make a difference for the evaluation of arguments. For example, according to direct aggregation semantics, $a_4$ in $\arggraph^{ex2}$ is acceptable, 
  according to damped max-based semantics its acceptability degree is neutral,
   and according to positive direct aggregation semantics it is not acceptable.

\begin{table}
\begin{tableenv}
 \begin{tabular}{|l|c|c|c|c|c|}\hline
$\texttt{Deg}_{\arggraph^{ex2}}$ & $\delta$&  $a_1$ & $a_2$ & $a_3$& $a_4$\\\thickhline
Euler-based            & -  & 0.8 & 0.894 &  0.000 & 0.604\\\hline
max Euler-based        & -  & 0.8 & 0.801&  0.000 & 0.612 \\\thickhline		
direct aggregation     & 2 & 0.8  & 1.161 & -1.039 &  0.120 \\\hline
damped max-based       & 2 & 0.8  & 1.400  & -0.699 &  0.000  \\\hline
positive direct aggregation  & 2 & 0.8 & 2.200 &  -1.499 & -0.400 \\\thickhline
sigmoid direct aggregation  & 2 & 0.8  & 0.902 &  -0.875 &  0.126 \\\hline
sigmoid damped       max-based        & 2 & 0.8  & 0.940 &  -0.699 & 0.000 \\\hline

 \end{tabular}
\caption{Acceptability semantics applied to $\arggraph^{ex2}$ in Example~\ref{ex:arggraph2}	 \label{tab:compsem}}
\end{tableenv}	
\vspace{-1.5em}
\end{table}

\section{Convergence}\label{sec:convergence}

Let us now look at the convergence properties of some of the semantics.
We first prove general results that hold for an arbitrary well-behaved modular
acceptability semantics $(\D,\aggrp,\inflp)$.

\begin{theorem}\label{thm:limit}
  Assume 
  convergence of  the following limit:
\begin{equation}
  f^0=w,\  f^{i+1}=\inflp(\aggr(G,f^i),w),\ D=\lim_{i\to\infty}f^i
  \label{eq:limit}
\end{equation}
  Then $D$ satisfies equation~(\ref{eq:rec1}), i.e.\ $D=\accdegrvecnew{}{}$.

\end{theorem}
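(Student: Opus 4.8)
The plan is to recognise that equation~(\ref{eq:rec1}) asserts precisely that $\accdegrvecnew{}{}$ is a fixed point of the operator $\Phi\colon\D^n\to\D^n$ given by $\Phi(x)=\infl(\aggr(G,x),w)$, and that the iteration in~(\ref{eq:limit}) is exactly $f^0=w$, $f^{i+1}=\Phi(f^i)$. Since (by the discussion following Def.~\ref{def:modularSemantics}) the acceptability semantics induced by a modular semantics assigns to $\langle G,w\rangle$ the solution of~(\ref{eq:rec1}), it suffices to show that the limit $D$ is a fixed point of $\Phi$, i.e.\ $D=\Phi(D)$.

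First I would check that $\Phi$ is continuous. The map $\aggr(G,\cdot)\colon\D^n\to\R^n$ acts row by row, its $i$-th component being $d\mapsto\aggrp(G_i,d)$, which is continuous by \emph{continuity-$\aggrp$}; hence $\aggr(G,\cdot)$ is continuous. Similarly $\infl(\cdot,w)\colon\R^n\to\D^n$ acts entrywise, its $i$-th component being $s\mapsto\inflp(s,w_i)$, which is continuous by \emph{continuity-$\inflp$}; hence $\infl(\cdot,w)$ is continuous. Therefore the composition $\Phi$ is continuous. I would also note that all iterates stay in $\D^n$: we have $f^0=w\in\D^n$, and since $\inflp$ takes values in $\D$ we get $f^{i+1}\in\D^n$ by induction; the convergence hypothesis is read as saying that $D=\lim_i f^i$ again lies in $\D^n$, so that $\Phi(D)$ is defined.

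The conclusion then follows in one step. From $f^i\to D$ and continuity of $\Phi$ we get $\Phi(f^i)\to\Phi(D)$; but $\Phi(f^i)=f^{i+1}$, and $(f^{i+1})_i$ is a tail of $(f^i)_i$, hence also converges to $D$. By uniqueness of limits in $\R^n$, $D=\Phi(D)$, which is exactly equation~(\ref{eq:rec1}); so $D=\accdegrvecnew{}{}$.

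The argument is short, and I do not expect a genuine obstacle. The one delicate point is the domain issue raised above: if $\D$ is not closed, the limit could a priori leave $\D$, and one must either build $D\in\D^n$ into what ``convergence of the limit'' means or argue separately that this cannot occur for the semantics in question. Everything else is a routine continuity-and-uniqueness-of-limits argument, applied componentwise to the row/entrywise extensions $\aggr$ and $\infl$ of $\aggrp$ and $\inflp$.
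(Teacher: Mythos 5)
Your proposal is correct and follows exactly the paper's own argument, which simply says ``Apply \emph{Continuity-$\inflp$} and \emph{Continuity-$\aggrp$}''; you have merely spelled out the continuity-of-$\Phi$ and uniqueness-of-limits details that the paper leaves implicit. Your remark about the limit possibly escaping a non-closed $\D$ is a fair observation the paper glosses over, but it does not change the substance of the argument.
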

\begin{proof}{}
Apply \textit{Continuity-$\inflp$} and
\textit{Continuity-$\aggrp$}.
\end{proof}

\begin{definition}\label{def:conv}
We call a semantics \emph{convergent} (\emph{divergent}), if the
sequence $(f^i)$ converges (diverges).
\end{definition}

By Thm.~\ref{thm:limit}, the
limit of a convergent semantics provides a solution of the fixpoint
equation~(\ref{eq:rec1}). It is possible that also a divergent semantics
has solutions of the fixpoint equation in some cases; however, it may
be difficult to compute these solutions. Direct aggregation semantics
is a special case: computation of a fixpoint solution here just means
solving the system of linear equations $D=\frac{1}{\delta}GD+w$. However,
depending on $G$ and $w$, the system may have no or more than one
solution.  Hence, here we concentrate on convergence, which always
gives us at most one solution, and (if existing) a way to compute it.

\begin{theorem}\label{thm:limit-acyclic}
  All well-behaved  modular acceptability semantics converge for all acyclic graphs.
\end{theorem}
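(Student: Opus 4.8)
The plan is to prove that for an acyclic graph the iteration in equation~(\ref{eq:limit}) stabilizes after finitely many steps, so convergence is immediate. The key observation is a locality/directionality argument: in an acyclic graph one can process arguments in topological order, and the acceptability degree of an argument depends only on the (already-stabilized) degrees of its parents.

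First I would fix an acyclic \wasa\ $\langle G,w\rangle$ with $n$ arguments and choose a topological ordering of the arguments, i.e.\ a numbering such that $G_{ij}\neq 0$ implies $j$ precedes $i$ (every parent comes before its child). Equivalently, after a suitable permutation of indices $G$ is strictly lower triangular; by \emph{Anonymity}-1 and Lemma~\ref{lem:anonymity-simplified} we may assume this without loss of generality. Next I would prove by induction on the position $k$ in this ordering that the $k$-th component $f^i_k$ of the iteration sequence is eventually constant, and in fact constant for all $i\geq k-1$ (counting from $k=1$). For the base case, the first argument $a_1$ has no parents, so $G_1=0$; by \emph{Stability}-$\aggrp$ we get $\aggr(G,f^i)_1=\aggrp(0,f^i)=0$ for all $i$, and by \emph{Stability}-$\inflp$ then $f^{i+1}_1=\inflp(0,w_1)=w_1$; since $f^0_1=w_1$ as well, $f^i_1=w_1$ is constant from the start. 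For the inductive step, suppose all of $f^i_1,\ldots,f^i_{k-1}$ are constant for $i\geq k-2$; call these stable values $D_1,\ldots,D_{k-1}$. Since the only nonzero entries of the row $G_k$ are among columns $1,\ldots,k-1$, \emph{Directionality} (Thm.~\ref{thm:char-entailment} gives it from \emph{Reinforcement}-$\aggrp$) yields $\aggrp(G_k,f^i)=\aggrp(G_k,D')$, where $D'$ agrees with $(D_1,\ldots,D_{k-1})$ on the parent coordinates, for every $i\geq k-2$; hence $f^{i+1}_k=\inflp(\aggrp(G_k,D'),w_k)$ is the same value for all $i\geq k-2$, i.e.\ $f^i_k$ is constant for $i\geq k-1$.

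From this it follows that $f^i$ is constant for $i\geq n-1$, so the sequence converges (trivially) to that constant vector $D$, and by Theorem~\ref{thm:limit} (using \emph{Continuity-$\inflp$} and \emph{Continuity-$\aggrp$}, although here we do not even need continuity since the sequence is eventually constant) $D=\accdegrvecnew{}{}$ satisfies the fixpoint equation~(\ref{eq:rec1}). This completes the proof.

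The main obstacle, and the step that needs care, is the use of \emph{Directionality} to argue that the aggregated value at argument $a_k$ is insensitive to the not-yet-stabilized coordinates $f^i_k,\ldots,f^i_n$: one must check that $\equiv_{G_k}$ identifies $f^i$ with the stabilized vector precisely because $(G_k)_j=0$ for $j\geq k$, so those coordinates are free. A secondary subtlety is bookkeeping the index shift in the induction (the iteration applies $\inflp\circ\aggr$ once per step, so component $k$ needs one extra iteration beyond component $k-1$ to settle); stating the induction hypothesis with the explicit bound ``constant for $i\geq k-1$'' keeps this clean. Everything else — the reduction to triangular form via anonymity, and invoking \emph{Stability} for the source arguments — is routine.
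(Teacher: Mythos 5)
Your proof is correct and takes essentially the same approach as the paper's: an induction over the acyclic structure using \emph{stability} for source nodes and \emph{directionality} for the inductive step. The only (immaterial) difference is that the paper inducts on the length $l_a$ of the longest path into a node, giving stabilization from step $l_a$ onwards, whereas you induct on topological position, giving the slightly weaker but equally sufficient bound $i\geq k-1$.
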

\begin{proof}{}
For a node $a$, let $l_a$ be the length of the longest path into $a$.
Prove by induction that $f^i_a$ is constant from $i=l_a$ onwards,
using \emph{stability} and \emph{directionality}.
\end{proof}

We now come to more specific results concerning semantics using
$\aggrp=\mathit{sum}$.
\begin{theorem}\label{thm:convergence-sum}
 Assume that we use $\mathit{sum}$ or $\mathit{sum\textrm{-}pos}$ for aggregation $\aggrp$. Fix
  $\langle {G, w}\rangle$. If $$m=\sup_{i=1,\ldots,n,
  s\in\R}\left(\frac{\partial\inflp(x,w)}{\partial
  x}\big|_{(s,w_i)}\right)$$ exists and $\ind(G)<\frac{1}{m}$ (where
  $\ind(G)$ is the maximal indegree of $G$), then
  $\lim_{i\to\infty}f^i$ converges.
\end{theorem}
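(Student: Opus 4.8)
The plan is to show that the iteration map $\Phi(f) = \infl(\aggr(G,f),w)$ is a contraction with respect to the supremum norm $\rownorm{\cdot}$ on $\D^n$ (equivalently $\R^n$ here, since the linear case uses $\D=\R$), and then invoke the Banach fixed point theorem together with Theorem \ref{thm:limit} to identify the limit with $\accdegrvecnew{}{}$. Since $\aggrp=\mathit{sum}$ gives $\aggr(G,f)=Gf$, and $\aggrp=\mathit{sum\textrm{-}pos}$ gives a componentwise ``truncated'' version of $Gf$, in both cases the $i$-th component of the aggregation depends only on the entries $f_j$ with $G_{ij}\neq 0$, of which there are at most $\ind(G)$ many.

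First I would estimate, for two degree vectors $f,g$, the difference $|\Phi(f)_i - \Phi(g)_i| = |\inflp(s_i,w_i) - \inflp(t_i,w_i)|$ where $s_i = \aggrp(G_i,f)$ and $t_i = \aggrp(G_i,g)$. By the mean value theorem applied to $x\mapsto\inflp(x,w_i)$ (which is differentiable by assumption, with derivative bounded by $m$), we get $|\inflp(s_i,w_i)-\inflp(t_i,w_i)| \leq m\,|s_i - t_i|$. Next I would bound $|s_i - t_i|$. For $\mathit{sum}$, $s_i - t_i = G_i(f-g) = \sum_{j: G_{ij}\neq 0} G_{ij}(f_j - g_j)$, so $|s_i-t_i| \leq \sum_{j:G_{ij}\neq 0}|f_j-g_j| \leq \ind(G)\cdot\rownorm{f-g}$. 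For $\mathit{sum\textrm{-}pos}$, the sum ranges only over those $j$ with $f_j\geq 0$ (resp.\ $g_j\geq 0$); the contribution of index $j$ to $s_i-t_i$ is $G_{ij}$ times $f_j$, $g_j$, $f_j$ or $0$ depending on the signs, and in every case this contribution is bounded in absolute value by $|f_j - g_j|$ (using that if $f_j\geq 0 > g_j$ then $|f_j| = f_j \leq f_j - g_j$, etc.), so the same bound $|s_i-t_i|\leq\ind(G)\cdot\rownorm{f-g}$ holds. Combining, $\rownorm{\Phi(f)-\Phi(g)} \leq m\,\ind(G)\,\rownorm{f-g}$, and the hypothesis $\ind(G) < 1/m$ (i.e.\ $m\,\ind(G) < 1$) makes $\Phi$ a contraction.

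Then the Banach fixed point theorem applies on the complete metric space $(\R^n, \rownorm{\cdot})$: the sequence $f^0 = w$, $f^{i+1} = \Phi(f^i)$ converges (to the unique fixed point of $\Phi$). By Theorem \ref{thm:limit}, this limit equals $\accdegrvecnew{}{}$, which finishes the proof. I expect the main subtlety to be the $\mathit{sum\textrm{-}pos}$ case of the bound on $|s_i - t_i|$: the set of ``positive'' indices differs between $f$ and $g$, so one cannot simply write the difference as a single linear form, and one has to argue case-by-case on the sign pattern of each coordinate pair $(f_j,g_j)$ that the per-coordinate contribution never exceeds $|f_j - g_j|$. A secondary point to state carefully is that the supremum $m$ being finite is exactly what is needed for the uniform Lipschitz estimate, and that $m\geq 0$ (indeed $m>0$ in all nontrivial cases by \emph{reinforcement}-$\inflp$), so $\ind(G) < 1/m$ is the meaningful smallness condition.
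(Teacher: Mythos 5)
Your proposal is correct and follows essentially the same route as the paper: a Lipschitz estimate $\rownorm{\Phi(f)-\Phi(g)}\leq m\,\ind(G)\,\rownorm{f-g}$ in the maximum norm, using that the row-sum norm of $G$ equals $\ind(G)$, with the paper phrasing the conclusion via a Cauchy-sequence argument on consecutive iterates rather than invoking Banach's fixed point theorem by name. Your sign-pattern case analysis for $\mathit{sum\textrm{-}pos}$ is a useful elaboration of the paper's one-line remark that $G$ must be replaced by an $f^i$-dependent submatrix.
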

\begin{proof}{}
We will make use of the maximum row sum norm for matrices, defined by
$\rownorm{G}=\max_{i=1,\ldots,n}\sum_{j=1,\ldots,n}|G_{ij}|$,
of the maximum norm for vectors, defined by
$\rownorm{w}=\max_{i=1,\ldots,n}|w_i|$, and of the inequality
$\rownorm{Gw}\leq\rownorm{G}\rownorm{w}$ \cite{Horn:1985:MA:5509}.
Note that the norm coincides with the maximal indegree,
i.e.\ $\rownorm{G}=\ind(G)$.
Let $\varepsilon=m\cdot\ind(G)$. By assumption, $\varepsilon<1$.
We have
$$\begin{array}{ll}
\rownorm{f^{i+1}-f^i} &=\rownorm{\infl(Gf^i,w)-\infl(Gf^{i-1},w)}\\
&\leq m\rownorm{Gf^i-Gf^{i-1}}\\
&= m\rownorm{G(f^i-f^{i-1})}\\
&\leq m\rownorm{G}\rownorm{f^i-f^{i-1}}\\
&=m\cdot\ind(G)\rownorm{f^i-f^{i-1}}\\
&\leq\varepsilon\rownorm{f^i-f^{i-1}}.
\end{array}$$
Hence, $(f^i)$ is a Cauchy sequence and converges.
The proof for $\mathit{sum\textrm{-}pos}$ is similar, considering
that $G$ needs to be replaced by a submatrix of $G$ that also
depends of $f^i$.
\end{proof}

\begin{theorem}\label{thm:convergence-euler}
  Given $\langle {G, w}\rangle$, Euler-based semantics converges if
  $\ind(G)<\frac{4}{1-\min_i w_i^2}$, in particular, if $\ind(G)\leq 4$.
\end{theorem}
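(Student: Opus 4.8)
The plan is to obtain this as an immediate consequence of Theorem~\ref{thm:convergence-sum}. Euler-based semantics is $\aggrp=\mathit{sum}$ composed with the influence function $\inflp(s,w)=1-\frac{1-w^2}{1+we^s}$ (Table~\ref{tab:aggr-impl}), so the whole content of the proof is to evaluate the quantity $m=\sup_{i,\,s\in\R}\frac{\partial\inflp(x,w)}{\partial x}\big|_{(s,w_i)}$ from that theorem, show it is finite and equals $\frac{1-\min_i w_i^2}{4}$, and then read off that the hypothesis $\ind(G)<\frac1m$ of Theorem~\ref{thm:convergence-sum} becomes exactly $\ind(G)<\frac{4}{1-\min_i w_i^2}$.

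For the key computation, fix a weight $w\in[0,1)$ and differentiate: $\frac{\partial}{\partial s}\inflp(s,w)=\frac{(1-w^2)\,we^{s}}{(1+we^{s})^{2}}$. With the substitution $t=we^{s}$, which sweeps out $(0,\infty)$ when $w>0$, this is $(1-w^2)\cdot\frac{t}{(1+t)^2}$, and $\frac{t}{(1+t)^2}$ has derivative $\frac{1-t}{(1+t)^3}$, hence attains its maximum $\frac14$ at $t=1$. So $\sup_{s\in\R}\frac{\partial}{\partial s}\inflp(s,w)=\frac{1-w^2}{4}$ for $w>0$, while the derivative is identically $0$ when $w=0$. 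Taking the maximum over the finitely many $w_1,\dots,w_n$ gives $m=\max_i\frac{1-w_i^2}{4}=\frac{1-\min_i w_i^2}{4}<\infty$, and Theorem~\ref{thm:convergence-sum} then yields convergence whenever $\ind(G)<\frac{4}{1-\min_i w_i^2}$.

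For the ``in particular'' clause, note that $\min_i w_i^2\ge 0$, so $\frac{4}{1-\min_i w_i^2}\ge 4$; if $\min_i w_i^2>0$ this is strict and $\ind(G)\le 4$ already forces $\ind(G)<\frac{4}{1-\min_i w_i^2}$. The borderline situation is when some $w_j=0$: then $\inflp(s,0)=0$ for every $s$, so $a_j$ has degree $0$ throughout the iteration and, by \emph{neutrality} together with $\aggrp=\mathit{sum}$, can be deleted from $\langle G,w\rangle$ without affecting the iteration on the remaining arguments; repeating this either empties the graph (trivial convergence) or leaves a graph with all weights strictly positive and indegree $\le\ind(G)\le 4$, to which the strict case just proved applies.

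The main obstacle is not the elementary maximization but the bookkeeping around the domain $[0,1)$: one should verify that the orbit $f^{i+1}=\infl(Gf^i,w)$ stays inside $[0,1)^n$ (indeed $\inflp(s,w)\in[w^2,1)$ for all $s$), so that $m$ legitimately bounds the relevant derivative along the entire sequence and the Cauchy estimate of Theorem~\ref{thm:convergence-sum} goes through, and that the zero-weight degenerate case is treated as above so the clean bound with ``$\le 4$'' is genuinely correct rather than only ``$<4$''.
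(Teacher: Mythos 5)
Your proposal is correct and follows essentially the same route as the paper: compute $\frac{\partial\inflp}{\partial s}=(1-w^2)\frac{we^s}{(1+we^s)^2}$, bound $\frac{y}{(1+y)^2}$ by $\frac14$ at $y=we^s=1$, invoke Theorem~\ref{thm:convergence-sum}, and dispose of zero-weight arguments separately (the paper cites \emph{stickiness-min}, \emph{stability} and \emph{directionality} where you use $\inflp(s,0)=0$ and \emph{neutrality}, which is the same substance). The only nitpick is that your claimed equality $m=\frac{1-\min_i w_i^2}{4}$ should be a $\leq$ when some $w_i=0$, exactly as the paper writes it, but this does not affect the conclusion.
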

\begin{proof}{}
$\frac{\partial\inflp(x,w)}{\partial x}=(1-w^2)\frac{we^x}{(1+we^x)^2}
=(1-w^2)\frac{y}{(1+y)^2}$ for $y=we^x$. Since the maximum of
$\frac{y}{(1+y)^2}$ is $\frac{1}{4}$ (for $y=1$), we get
$\sup_{i=1,\ldots,n, s\in\R}\left(\frac{\partial\inflp(x,w)}{\partial
x}\big|_{(s,w_i)}\right)\leq\frac{1-\min_i w_i^2}{4}$.  Hence by
Thm.~\ref{thm:convergence-sum}, Euler-based semantics converges if
$\ind(G)<\frac{4}{1-\min_i w_i^2}$.
This covers also the case of $\ind(G)\leq 4$ in case that 
$w_i>0$ for all $i$. Since Euler-based semantics satisfies
\emph{stickiness-min}, all arguments
with initial weight $0$ stay at $0$ and hence have no influence
by \emph{stability} and \emph{directionality}. Hence, we can ignore
them.
\end{proof}

\begin{theorem}\label{thm:convergence-quadratic}
  Given $\langle {G, w}\rangle$, quadratic energy semantics converges if
  $\ind(G)\leq 1$.
\end{theorem}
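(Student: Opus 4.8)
The plan is to derive this from the general convergence criterion for \emph{sum}-based semantics, Theorem~\ref{thm:convergence-sum}. The quadratic energy semantics is the modular semantics with $\aggrp=\mathit{sum}$ and $\inflp$ the \emph{QMax} influence function on $\D=[0,1]$, so Theorem~\ref{thm:convergence-sum} applies as soon as the quantity $m=\sup_{i,\,s\in\R}\frac{\partial\inflp(x,w)}{\partial x}\big|_{(s,w_i)}$ is finite and $\ind(G)<\frac1m$. Thus the whole statement reduces to showing that $m<1$, since then $\ind(G)\le 1$ already satisfies $\ind(G)<\frac1m$.

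First I would differentiate each branch of \emph{QMax} with respect to $s$. On $s\ge 0$, where $\inflp(s,w)=w+(1-w)\frac{s^2}{1+s^2}$, this gives $(1-w)\frac{2s}{(1+s^2)^2}$; on $s\le 0$, where $\inflp(s,w)=w-w\frac{s^2}{1+s^2}$, it gives $-w\frac{2s}{(1+s^2)^2}$. Both expressions tend to $0$ as $s\to 0$, so \emph{QMax} is $C^1$ in $s$ with vanishing derivative at the junction $s=0$, and the supremum defining $m$ is indeed finite. Next I would carry out the elementary optimisation of $h(s)=\frac{2s}{(1+s^2)^2}$ for $s\ge 0$: since $h'(s)=2(1+s^2)^{-3}(1-3s^2)$ vanishes only at $s=1/\sqrt3$, the maximum is $h(1/\sqrt3)=\frac{3\sqrt3}{8}$. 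Because $0\le w_i\le 1$ we have $1-w_i\le 1$ and $w_i\le 1$, so on both branches the derivative is bounded by $\frac{3\sqrt3}{8}$ (the $s\le 0$ branch being the mirror image of the $s\ge 0$ one). Hence $m\le\frac{3\sqrt3}{8}<1$, so $\frac1m\ge\frac{8}{3\sqrt3}=\frac{8\sqrt3}{9}>1\ge\ind(G)$, and Theorem~\ref{thm:convergence-sum} yields convergence.

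There is no real obstacle here: the argument parallels the proof of Theorem~\ref{thm:convergence-euler} for Euler-based semantics. The only points requiring care are checking that \emph{QMax} is genuinely differentiable at $s=0$ (so that Theorem~\ref{thm:convergence-sum} applies to all of $\R$) and pinning down the sharp constant $\frac{3\sqrt3}{8}$, which is precisely what makes the threshold come out as $\ind(G)\le 1$ rather than something smaller. The complementary divergence at $\ind(G)=2$ (Example~\ref{ex:divergence}) is a separate matter and is not part of this statement.
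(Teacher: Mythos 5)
Your proposal is correct and follows exactly the paper's route: the paper's proof is a one-line appeal to Theorem~\ref{thm:convergence-sum} with the bound $m\leq 0.65$, which is precisely your sharp constant $\frac{3\sqrt{3}}{8}\approx 0.6495$ rounded up. Your explicit computation of the derivative, the optimisation at $s=1/\sqrt{3}$, and the check of differentiability at $s=0$ simply supply the details the paper omits.
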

\begin{proof}{}
By
Thm.~\ref{thm:convergence-sum}, since
$\sup_{i=1,\ldots,n,
  s\in\R}\left(\frac{\partial\mathit{qmax}(x,w)}{\partial
  x}\big|_{(s,w_i)}\right) \leq 0.65$.
\end{proof}

\begin{theorem}\label{thm:direct-aggregation-convergence}
  For $\ind(G)<\delta$, direct aggregation semantics converges; indeed,
  it converges to $(I-\frac{1}{\delta}G)^{-1}w$.
\end{theorem}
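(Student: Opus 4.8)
The plan is to recognise the defining iteration as an affine recurrence and to identify its limit with a Neumann (geometric matrix) series. Direct aggregation semantics is the modular semantics with $\aggrp=\mathit{sum}$ and $\inflp(s,w)=\frac{s}{\delta}+w$, so $\aggr(G,d)=Gd$ and the sequence in~(\ref{eq:limit}) reads $f^0=w$, $f^{i+1}=\frac{1}{\delta}Gf^i+w$. First I would dispose of convergence by appealing to Theorem~\ref{thm:convergence-sum}: since $\frac{\partial\inflp(x,w)}{\partial x}=\frac{1}{\delta}$ identically, the supremum $m$ occurring in that theorem equals $\frac{1}{\delta}$, and the hypothesis $\ind(G)<\delta$ is precisely $\ind(G)<\frac{1}{m}$; hence $(f^i)$ converges. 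By Theorem~\ref{thm:limit} the limit $D$ then satisfies the fixpoint equation~(\ref{eq:rec1}), which here is the linear system $D=\frac{1}{\delta}GD+w$, i.e.\ $(I-\frac{1}{\delta}G)D=w$.

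Second I would show that $I-\frac{1}{\delta}G$ is invertible, so that this system has the unique solution $D=(I-\frac{1}{\delta}G)^{-1}w$. Using the maximum row sum norm exactly as in the proof of Theorem~\ref{thm:convergence-sum}, where $\rownorm{G}=\ind(G)$, we obtain $\rownorm{\frac{1}{\delta}G}=\frac{\ind(G)}{\delta}<1$, so the Neumann series $\sum_{k\ge 0}(\frac{1}{\delta}G)^k$ converges in operator norm and its sum is a two-sided inverse of $I-\frac{1}{\delta}G$; multiplying $(I-\frac{1}{\delta}G)D=w$ by this inverse yields the claimed formula. A cleaner self-contained alternative that produces both statements at once is to unroll the recurrence directly: an easy induction gives $f^i=\sum_{k=0}^{i}(\frac{1}{\delta}G)^k w$, and since $\rownorm{\frac{1}{\delta}G}<1$ these partial sums converge to $(I-\frac{1}{\delta}G)^{-1}w$, which is then automatically the limit asserted by the theorem.

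I do not expect a genuine obstacle: the argument is essentially the elementary theory of the geometric series for matrices. The only subtlety worth flagging is that the hypothesis $\ind(G)<\delta$ is used twice over — once to make the iteration a contraction (via Theorem~\ref{thm:convergence-sum}) and once more to ensure $I-\frac{1}{\delta}G$ is actually invertible, which for an arbitrary $\{-1,0,1\}$-matrix $G$ can fail without the bound; the estimate $\rownorm{\frac{1}{\delta}G}<1$ is precisely what secures invertibility.
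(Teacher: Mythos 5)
Your proposal is correct and follows essentially the same route as the paper: convergence via Theorem~\ref{thm:convergence-sum} (since $\partial\inflp/\partial x=\frac{1}{\delta}$), and identification of the limit via $\rownorm{\frac{1}{\delta}G}=\frac{\ind(G)}{\delta}<1$ and the Neumann series $\sum_{k\geq 0}(\frac{1}{\delta}G)^k=(I-\frac{1}{\delta}G)^{-1}$. Your ``self-contained alternative'' of unrolling the recurrence to $f^i=\sum_{k=0}^{i}(\frac{1}{\delta}G)^k w$ is exactly how the paper computes the limit.
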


\begin{proof}{} 
Since $\frac{\partial \inflp(x,w_i)}{\partial x}=\frac{1}{\delta}$, convergence follows
already from
 Thm.~\ref{thm:convergence-sum}.
A more specific convergence proof uses
$$\rownorm{\frac{1}{\delta}G}= \frac{\ind(G)}{\delta} < 1$$
By  \cite{Horn:1985:MA:5509}, Corollary 5.6.16,
this implies that
$\sum_{i=0}^\infty \ (\frac{1}{\delta}G)^i=(I-\frac{1}{\delta}G)^{-1}$, hence
$$\accdegrvec{\semdir}{\arggraph,\delta}=\lim_{i\to\infty}f^i=\sum_{i=0}^\infty \ (\frac{1}{\delta}G)^iw=
(I-\frac{1}{\delta}G)^{-1}w\vspace*{-0.4cm}$$
\end{proof}

For semantics using \emph{top}, we can obtain stronger convergence
results.

\begin{lemma}\label{lem:top}
$$\rownorm{\mathit{top}(G,d_1)d_1-\mathit{top}(G,d_2)d_2}
\leq 2\rownorm{d_1-d_2}$$
\end{lemma}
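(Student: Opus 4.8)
The plan is to reduce the statement to a short Lipschitz estimate after rewriting the scalar $\mathit{top}(g,d)\,d$ in a transparent closed form. Recall that, applied row by row to $G$, the operator $\mathit{top}(\cdot,d)$ keeps in each row $g$ at most two entries: the one pointing to a strongest supporter of nonnegative degree and the one pointing to a strongest attacker of nonnegative degree, breaking ties towards the smallest index. Unwinding the definition --- checking the strict inequality in the range $1\le k<i$, the non-strict one in the range $i<k\le n$, and the side condition $d_i\ge 0$ --- I would first show that for any row $g\in\{-1,0,1\}^n$,
\[
  \mathit{top}(g,d)\,d \;=\; \mu^{+}_g(d)-\mu^{-}_g(d),
  \qquad
  \mu^{+}_g(d)=\max\bigl(\{d_j : g_j=1\}\cup\{0\}\bigr),\quad
  \mu^{-}_g(d)=\max\bigl(\{d_j : g_j=-1\}\cup\{0\}\bigr),
\]
with the convention $\max\emptyset=-\infty$, so that a row with no supporters (resp.\ attackers), or one all of whose supporting (resp.\ attacking) parents have negative degree, contributes $0$ on that side. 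Verifying that the tie-breaking rule together with the clause $d_i\ge 0$ implements exactly ``take the largest parent degree, or $0$ if that largest degree is negative'' is the only place where genuine bookkeeping is required; everything afterwards is soft.

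Next I would use the elementary fact that a pointwise maximum of finitely many $1$-Lipschitz maps is $1$-Lipschitz: for real families $(a_k)_k$ and $(b_k)_k$ over the same index set,
\[
  \max_k a_k \;\le\; \max_k\bigl(b_k+|a_k-b_k|\bigr)\;\le\;\max_k b_k + \max_k|a_k-b_k|,
\]
hence $\max_k a_k-\max_k b_k\le\max_k|a_k-b_k|$ and, by symmetry, $|\max_k a_k-\max_k b_k|\le\max_k|a_k-b_k|$. Since each $\mu^{\pm}_g$ is a maximum over the coordinate projections $d\mapsto d_j$ (for $j$ in the relevant index set) together with the constant $0$, this gives $|\mu^{\pm}_g(d_1)-\mu^{\pm}_g(d_2)|\le\rownorm{d_1-d_2}$ for every fixed row $g$.

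Finally, for each row $G_i$ of $G$ the triangle inequality applied to the closed form yields
\[
  \bigl|\mathit{top}(G_i,d_1)\,d_1-\mathit{top}(G_i,d_2)\,d_2\bigr|
  \;\le\; |\mu^{+}_{G_i}(d_1)-\mu^{+}_{G_i}(d_2)| + |\mu^{-}_{G_i}(d_1)-\mu^{-}_{G_i}(d_2)|
  \;\le\; 2\rownorm{d_1-d_2},
\]
and taking the maximum over $i$ gives $\rownorm{\mathit{top}(G,d_1)d_1-\mathit{top}(G,d_2)d_2}\le 2\rownorm{d_1-d_2}$, as claimed. I expect no real obstacle beyond the closed-form rewriting; note also that the constant $2$ is intrinsic --- it arises because the supporter and attacker channels are estimated separately, and it is already attained for $n=2$ with $g=(-1,1)$, $d_1=(0,\varepsilon)$, $d_2=(\varepsilon,0)$ --- so no sharper bound of this form is available.
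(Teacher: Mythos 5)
Your proof is correct and follows essentially the same route as the paper's: the paper's (very terse) argument is exactly that the maximum support in $d_1$ and in $d_2$ differ by at most $\rownorm{d_1-d_2}$, likewise for attacks, whence the factor $2$. Your write-up just makes the closed form $\mathit{top}(g,d)\,d=\mu^{+}_g(d)-\mu^{-}_g(d)$ and the $1$-Lipschitzness of the max explicit, and adds a (correct) sharpness example.
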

\begin{proof}{}
This is shown by considering that for a node $j$,
the maximum support of $j$ in $d_1$ and
that in $d_2$ can differ by at most $\rownorm{d_1-d_2}$,
and likewise for attacks (hence the factor $2$).
\end{proof}{}

\begin{theorem}\label{thm:convergence-top}
 Assume that we use $\mathit{top}$ for aggregation $\aggrp$. Fix
  $\langle {G, w}\rangle$. If $$m=\sup_{i=1,\ldots,n,
  s\in\R}\left(\frac{\partial\inflp(x,w)}{\partial
  x}\big|_{(s,w_i)}\right)<\frac{1}{2},$$ then
  $\lim_{i\to\infty}f^i$ converges.
\end{theorem}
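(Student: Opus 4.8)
The plan is to follow the proof of Theorem~\ref{thm:convergence-sum} almost verbatim, replacing the submultiplicativity estimate $\rownorm{Gw}\le\rownorm{G}\rownorm{w}$ by the Lipschitz bound for the \emph{top} aggregation supplied by Lemma~\ref{lem:top}. First I would unfold the iteration: with $\aggrp=\mathit{top}$ we have $\aggr(G,f^i)=\mathit{top}(G,f^i)f^i$ (acting row-wise), so $f^0=w$ and $f^{i+1}=\infl(\mathit{top}(G,f^i)f^i,w)$, and all iterates lie in $\D^n$. Put $\varepsilon=2m$; by hypothesis $m<\frac{1}{2}$, so $\varepsilon<1$.

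Next I would bound consecutive differences. Since $x\mapsto\inflp(x,w_i)$ is differentiable (this holds for every influence function in Table~\ref{tab:aggr-impl}) and, being non-decreasing in $s$ by \emph{reinforcement}-$\inflp$, has derivative dominated by $m$ uniformly in $i$ and in the argument, the mean value theorem applied coordinate by coordinate gives $\rownorm{\infl(s^1,w)-\infl(s^2,w)}\le m\,\rownorm{s^1-s^2}$ for all $s^1,s^2\in\R^n$. Applying this with $s^1=\mathit{top}(G,f^i)f^i$ and $s^2=\mathit{top}(G,f^{i-1})f^{i-1}$, and then invoking Lemma~\ref{lem:top}, yields
\[
\rownorm{f^{i+1}-f^i}
\le m\,\rownorm{\mathit{top}(G,f^i)f^i-\mathit{top}(G,f^{i-1})f^{i-1}}
\le 2m\,\rownorm{f^i-f^{i-1}}
=\varepsilon\,\rownorm{f^i-f^{i-1}}.
\]
Iterating gives $\rownorm{f^{i+1}-f^i}\le\varepsilon^i\rownorm{f^1-f^0}$, so $(f^i)$ is a Cauchy sequence and therefore converges. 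By Theorem~\ref{thm:limit}, its limit satisfies the fixpoint equation~(\ref{eq:rec1}), i.e.\ equals $\accdegrvecnew{}{}$.

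The only point requiring care --- and the reason the threshold is $\frac{1}{2}$ rather than $1$ --- is the combinatorial estimate already encapsulated in Lemma~\ref{lem:top}: passing from $f^{i-1}$ to $f^i$ may change which parent is the strongest supporter of a node, but the aggregated support can move by at most $\rownorm{f^i-f^{i-1}}$, and similarly for attacks, so the $\mathit{top}$ map is $2$-Lipschitz. I do not expect genuine difficulty here; the argument is a routine adaptation of Theorem~\ref{thm:convergence-sum}, the substantive work having been isolated into the lemma.
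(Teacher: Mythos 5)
Your proof is correct and follows exactly the paper's route: the paper's own argument is precisely ``run the proof of Theorem~\ref{thm:convergence-sum} with $\rownorm{G}$ replaced by the Lipschitz constant $2$ from Lemma~\ref{lem:top},'' which is what you do. The only cosmetic difference is that you spell out the mean-value-theorem step and the Cauchy telescoping that the paper leaves implicit.
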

\begin{proof}{}
Analogous to the proof of Thm.~\ref{thm:convergence-sum}, but
replacing $\rownorm{G}$ with $2$, which is justified by
Lemma~\ref{lem:top}.
\end{proof}

\begin{theorem}\label{thm:max-Euler-convergence}
  Max Euler-based semantics converges.
\end{theorem}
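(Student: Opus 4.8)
The plan is to deduce this from Theorem~\ref{thm:convergence-top}, the general convergence criterion for \emph{top}-based semantics, which asks only that
$$m=\sup_{i=1,\ldots,n,\ s\in\R}\Bigl(\frac{\partial\inflp(x,w)}{\partial x}\Big|_{(s,w_i)}\Bigr)<\tfrac12,$$
where here $\inflp(s,w)=1-\frac{1-w^2}{1+we^s}$ is the Euler-based influence function and the initial weights satisfy $w\in[0,1)^n$ (recall $\D=[0,1)$ for this semantics).

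First I would record that, for $w\in[0,1)$ and any $s\in\R$, one has $1+we^s>0$ and $0<\frac{1-w^2}{1+we^s}\le 1$, so $\inflp$ maps $\R\times[0,1)$ into $[0,1)$; hence the iteration $f^0=w$, $f^{i+1}=\inflp(\aggr(G,f^i),w)$ (with \emph{top} as aggregation) never leaves $[0,1)^n$, and each aggregated value $\mathit{top}(G_j,f^i)\cdot f^i$ --- a strongest support minus a strongest attack, each lying in $[0,1)$ --- belongs to $(-1,1)\subset\R$, so every step is well-defined. Then I would reuse the derivative computation from the proof of Theorem~\ref{thm:convergence-euler} verbatim: setting $y=we^x$, $\frac{\partial\inflp(x,w)}{\partial x}=(1-w^2)\frac{y}{(1+y)^2}$, and since $\frac{y}{(1+y)^2}\le\frac14$ for all $y\ge 0$ (equality at $y=1$) and $1-w^2\le 1$, we obtain $m\le\frac14<\frac12$. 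Applying Theorem~\ref{thm:convergence-top} yields convergence of $(f^i)$, and Theorem~\ref{thm:limit} then identifies the limit with $\accdegrvecnew{}{}$.

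I do not anticipate a serious obstacle: the only point requiring a little care is that the Euler-based influence function lives on the half-open, non-compact interval $[0,1)$, so one must check that the derivative bound $m\le\frac14$ holds uniformly over all of $\R$ in the first argument (rather than only on a compact set) and that the iterates stay inside $[0,1)^n$. Both follow immediately from the closed form, so the argument is short --- indeed, essentially a corollary of Theorems~\ref{thm:convergence-top} and~\ref{thm:convergence-euler}.
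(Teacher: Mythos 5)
Your proposal is correct and follows exactly the paper's route: the paper's proof of this theorem is precisely ``By Thm.~\ref{thm:convergence-top} and the fact that $m\leq\frac{1}{4}$ obtained from the proof of Thm.~\ref{thm:convergence-euler}.'' The additional checks you make (that the iterates stay in $[0,1)^n$ and that the derivative bound is uniform in $s$) are sensible bookkeeping the paper leaves implicit, but they do not change the argument.
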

\begin{proof}{}
By Thm.~\ref{thm:convergence-top} and the fact that $m\leq\frac{1}{4}$
obtained from the proof of Thm.~\ref{thm:convergence-euler}.
\end{proof}

\begin{theorem}\label{thm:conv-max-based-aggr}
Damped max-based semantics converges for $\delta>2$.
\end{theorem}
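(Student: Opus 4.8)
The plan is to invoke Theorem~\ref{thm:convergence-top} directly. Damped max-based semantics is the modular acceptability semantics obtained by combining $\aggrp=\mathit{top}$ with $\inflp=\text{linear}(\delta)$, i.e.\ $\inflp(s,w)=\frac{s}{\delta}+w$. Since the aggregation function is $\mathit{top}$, Theorem~\ref{thm:convergence-top} applies provided the quantity $m=\sup_{i=1,\ldots,n,\ s\in\R}\left(\frac{\partial\inflp(x,w)}{\partial x}\big|_{(s,w_i)}\right)$ is strictly below $\frac12$.

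First I would compute this supremum. For the linear influence function we have $\frac{\partial\inflp(x,w_i)}{\partial x}=\frac{1}{\delta}$ identically, so $m=\frac{1}{\delta}$ regardless of $w$ and $s$. Then I would observe that the hypothesis $\delta>2$ is exactly equivalent to $\frac{1}{\delta}<\frac12$, i.e.\ $m<\frac12$. Applying Theorem~\ref{thm:convergence-top} then yields that the sequence $(f^i)$ from equation~(\ref{eq:limit}) converges, which by Definition~\ref{def:conv} is precisely the statement that damped max-based semantics converges.

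There is essentially no obstacle here: the work has already been done in Lemma~\ref{lem:top} and Theorem~\ref{thm:convergence-top}, which established the contraction estimate $\rownorm{f^{i+1}-f^i}\leq 2m\rownorm{f^i-f^{i-1}}$ with contraction constant $2m=\frac{2}{\delta}<1$ for $\delta>2$, making $(f^i)$ a Cauchy sequence in the complete metric space $(\R^n,\rownorm{\cdot})$. The only point worth a sentence is that damped max-based semantics is indeed a well-behaved modular acceptability semantics (so that the iteration~(\ref{eq:limit}) is the right object), which follows since $\mathit{top}$ satisfies the essential characteristics by Theorem~\ref{thm:aggr-char} and $\text{linear}(\delta)$ satisfies them by Theorem~\ref{thm:in-char} (with $s$ ranging over all of $\R$, as required since $\mathit{top}$ produces both positive and negative aggregated values on bipolar graphs).

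\begin{proof}{}
Damped max-based semantics uses $\aggrp=\mathit{top}$ and the influence function $\inflp(s,w)=\frac{s}{\delta}+w$. We have $\frac{\partial\inflp(x,w_i)}{\partial x}=\frac{1}{\delta}$ for all $i$ and all $x$, hence
$$m=\sup_{i=1,\ldots,n,\ s\in\R}\left(\frac{\partial\inflp(x,w)}{\partial x}\Big|_{(s,w_i)}\right)=\frac{1}{\delta}.$$
For $\delta>2$ this gives $m=\frac{1}{\delta}<\frac12$, so Theorem~\ref{thm:convergence-top} applies and the sequence $(f^i)$ of equation~(\ref{eq:limit}) converges.
\end{proof}
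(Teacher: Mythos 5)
Your proof is correct and follows exactly the paper's own argument: the paper also derives the result from Theorem~\ref{thm:convergence-top} together with the observation that $m=\frac{1}{\delta}<\frac{1}{2}$ for $\delta>2$. Your version merely spells out the computation of $m$ and the well-behavedness check in more detail.
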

\begin{proof}{}
By Thm.~\ref{thm:convergence-top} and the fact that $m=\frac{1}{\delta}$.
\end{proof}

The results for linear $\inflp$ transfer to the sigmoid case,
because $\sigma$ is a continuous bijection with continuous inverse:

\begin{theorem}\label{thm:sigmoid-direct-aggregation-convergence}
  For $\delta>\ind(G)$, sigmoid direct aggregation semantics converges; indeed,
  it converges to $\sigma((I-\frac{1}{\delta}G)^{-1}\sigma^{-1}(w))$.
\end{theorem}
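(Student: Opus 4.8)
{ (proposal)}
The plan is to reduce Theorem~\ref{thm:sigmoid-direct-aggregation-convergence} to the already established linear case, Theorem~\ref{thm:direct-aggregation-convergence}, by conjugating the defining iteration with the sigmoid $\sigma$. For sigmoid direct aggregation the aggregation is $\aggrp(g,d)=g\,\sigma^{-1}(d)$ and the influence is $\inflp(s,w)=\sigma\bigl(\frac{s}{\delta}+\sigma^{-1}(w)\bigr)$, with $\sigma,\sigma^{-1}$ understood entrywise on vectors, so that $\aggr(G,f^i)=G\,\sigma^{-1}(f^i)$ and the sequence~(\ref{eq:limit}) becomes $f^0=w$, $f^{i+1}=\sigma\bigl(\frac{1}{\delta}G\,\sigma^{-1}(f^i)+\sigma^{-1}(w)\bigr)$. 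First I would substitute $g^i:=\sigma^{-1}(f^i)$; this is well defined because $\D=(-1,1)$ is exactly the range of the bijection $\sigma:\R\to(-1,1)$, and for $i\geq 1$ the vector $f^i$ is literally an output of $\sigma$ while $f^0=w\in\D^n$. Applying $\sigma^{-1}$ to both sides of the recursion and cancelling $\sigma^{-1}\circ\sigma$ componentwise turns it into the purely linear recursion $g^0=\sigma^{-1}(w)$, $g^{i+1}=\frac{1}{\delta}G\,g^i+\sigma^{-1}(w)$.

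This is precisely the direct aggregation iteration of Theorem~\ref{thm:direct-aggregation-convergence} with the initial weight vector $w$ replaced by $\sigma^{-1}(w)$. Since the hypothesis is $\delta>\ind(G)$, that theorem applies verbatim to $(g^i)$ and gives $g^i\to(I-\frac{1}{\delta}G)^{-1}\sigma^{-1}(w)$, the Neumann series $\sum_{i\geq 0}(\frac{1}{\delta}G)^i$ converging because $\rownorm{\frac{1}{\delta}G}=\ind(G)/\delta<1$. Continuity of $\sigma$ then transfers convergence back: $f^i=\sigma(g^i)\to\sigma\bigl((I-\frac{1}{\delta}G)^{-1}\sigma^{-1}(w)\bigr)$, which is the asserted limit; by Theorem~\ref{thm:limit} (continuity of $\inflp$ and $\aggrp$) this limit solves the fixpoint equation~(\ref{eq:rec1}).

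There is essentially no hard part here: all the analytic content — the contraction estimate in the maximum row-sum norm and the identification of the limit with the resolvent $(I-\frac{1}{\delta}G)^{-1}$ — is inherited from Theorem~\ref{thm:direct-aggregation-convergence}. The only point requiring a moment's care is that the entrywise conjugation by $\sigma$ is legitimate at every stage, i.e.\ that the iterates never leave $(-1,1)^n$ so that $\sigma^{-1}$ always applies; as noted this is immediate from the shape of the recursion, since each $f^{i+1}$ is a value of $\sigma$. I would therefore expect the write-up to consist almost entirely of the substitution argument, with the convergence itself quoted from the linear case.
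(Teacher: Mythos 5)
Your proposal is correct and follows exactly the route the paper takes: the published proof simply says the result is ``easy from Thm.~\ref{thm:direct-aggregation-convergence} by noticing that $\sigma$ is a continuous bijection with continuous inverse,'' and your substitution $g^i=\sigma^{-1}(f^i)$ is precisely the conjugation argument that remark is alluding to. Your write-up just makes explicit the details (the conjugated linear recursion with initial weights $\sigma^{-1}(w)\in\R^n$ and the transfer of the limit back through $\sigma$) that the paper leaves implicit.
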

\begin{proof}{}
  Easy from Thm.~\ref{thm:direct-aggregation-convergence} by noticing
  that $\sigma$ is a continuous bijection with continuous inverse.
\end{proof}

\begin{theorem}\label{thm:sigmoid-conv-max-based-aggr}
Sigmoid damped max-based semantics converges for $\delta>2$.
\end{theorem}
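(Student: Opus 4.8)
The plan is to reduce the claim to Theorem~\ref{thm:conv-max-based-aggr} by conjugating the defining iteration with the sigmoid $\sigma$, in exact analogy with the way Theorem~\ref{thm:sigmoid-direct-aggregation-convergence} was obtained from Theorem~\ref{thm:direct-aggregation-convergence}. Recall that sigmoid damped max-based semantics uses $\aggrp=\mathit{top}\textrm{-}\sigma$ and $\inflp=\mathit{sigmoid}(\delta)$, i.e.\ $\aggrp(g,d)=\mathit{top}(g,\sigma^{-1}(d))\,\sigma^{-1}(d)$ and $\inflp(s,w)=\sigma\!\left(s/\delta+\sigma^{-1}(w)\right)$; all iterates $f^i$ lie in $(-1,1)^n$ (the codomain of $\mathit{sigmoid}(\delta)$), as does $f^0=w$, so $\sigma^{-1}(f^i)$ is always defined. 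Spelling out equation~(\ref{eq:limit}) componentwise gives, for each node $k$,
$$f^{i+1}_k=\sigma\!\left(\tfrac1\delta\,\mathit{top}(G_k,\sigma^{-1}(f^i))\,\sigma^{-1}(f^i)+\sigma^{-1}(w_k)\right).$$

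First I would set $g^i:=\sigma^{-1}(f^i)$ (entrywise). Applying $\sigma^{-1}$ to the recursion above cancels the outer $\sigma$, and since $\sigma^{-1}(f^i)=g^i$ this becomes
$$g^{i+1}_k=\tfrac1\delta\,\mathit{top}(G_k,g^i)\,g^i+g^0_k,\qquad g^0=\sigma^{-1}(w)\in\R^n.$$
This is exactly the iteration~(\ref{eq:limit}) of damped max-based semantics ($\aggrp=\mathit{top}$, $\inflp=\mathit{linear}(\delta)$, domain $\R$) applied to the \wasa\ $\langle G,\sigma^{-1}(w)\rangle$. By Theorem~\ref{thm:conv-max-based-aggr}, that semantics converges for $\delta>2$ on every graph and every real initial-weight vector, in particular on $\langle G,\sigma^{-1}(w)\rangle$; hence $(g^i)$ converges to some $g^\infty\in\R^n$. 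Because $\sigma$ is continuous, $f^i=\sigma(g^i)\to\sigma(g^\infty)$, so $(f^i)$ converges, which proves the theorem (and identifies its limit as $\sigma$ applied to the limit of damped max-based semantics on $\langle G,\sigma^{-1}(w)\rangle$).

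There is no genuine obstacle here; the only point requiring a line of care is that Theorem~\ref{thm:conv-max-based-aggr} must be used in its uniform form --- convergence for \emph{every} real initial-weight vector, not just a fixed one --- which is legitimate, since it is a statement about the semantics as a function on all \wasa{}s and its proof via Theorem~\ref{thm:convergence-top} only invokes the bound $m=1/\delta$, independent of $w$. Alternatively, one could give a self-contained argument mimicking the proof of Theorem~\ref{thm:convergence-top}: Lemma~\ref{lem:top} shows the map sending $g^i$ to $g^{i+1}$ is a contraction in the maximum norm with constant $2/\delta<1$, so $(g^i)$ is Cauchy, and then $f^i=\sigma(g^i)$ converges by continuity of $\sigma$; routing through Theorem~\ref{thm:conv-max-based-aggr} is just the shorter path.
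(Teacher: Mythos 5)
Your proof is correct and follows exactly the route the paper takes: the paper's (one-line) proof reduces the claim to Theorem~\ref{thm:conv-max-based-aggr} via the observation that $\sigma$ is a continuous bijection with continuous inverse, which is precisely your conjugation $g^i=\sigma^{-1}(f^i)$ turning the iteration into the damped max-based iteration on $\langle G,\sigma^{-1}(w)\rangle$. Your write-up simply makes explicit the details the paper leaves implicit, including the legitimate point that Theorem~\ref{thm:conv-max-based-aggr} holds uniformly over all real initial-weight vectors.
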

\begin{proof}{}
  Easy from Thm.~\ref{thm:conv-max-based-aggr} in the same
  way as in the proof of Thm.~\ref{thm:sigmoid-direct-aggregation-convergence}.
\end{proof}

\section{Divergence}\label{sec:divergence}

A very simple example of divergence can be obtained for
combined aggregation-based/h-categorizer semantics:\\
  \begin{example}\label{ex:non-convergence-combined}
For the \wasa
  {
  \xymatrixcolsep{2pt}
  $\xymatrix{
  \frac{3}{4}&&a\ar@(l,u)\ar@{<-->}[rrrr] &&&&
  b \ar@(r,u)&&\frac{1}{4}\\
  }$,
  }
with combined aggregation-based/h-categorizer semantics,   we have $f^{2i}=\matr{\frac{3}{4}&\frac{1}{4}}^T$
  and $f^{2i+1}=\matr{\frac{1}{2}&\frac{1}{2}}^T$. Thus, the $f^i$
  do not converge.
  \end{example}

  We now prove a far more general divergence result for semantics
  based on $\mathit{sum}$.
\begin{theorem}\label{thm:divergence}
Assume $\aggrp=\mathit{sum}$. If there are numbers $v>w\in\D$
and $k\in\N$, $k\geq 1$ with
$$\inflp(k(v-w),w)\geq\inflp(k(w-v),v)$$
then there is a \wasa with maximal indegree
$2k$ that
leads to divergence.
\end{theorem}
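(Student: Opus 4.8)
The plan is to construct an explicit \wasa\ on $2k+2$ arguments exhibiting a $2$-cycle at the ``top'' level that oscillates forever. First I would fix witnesses $v>w\in\D$ and $k\geq 1$ with $\inflp(k(v-w),w)\geq\inflp(k(w-v),v)$, and build a graph with two distinguished arguments $a$ and $b$: give $a$ initial weight $w$, give $b$ initial weight $v$, and add $k$ support-like and attack-like edges between them so that the \emph{sum}-aggregation at $a$ sees $k$ copies of (the degree of) $b$ and vice versa. Concretely, since $\aggrp^{\mathit{sum}}(g,d)=gd$ only sees a single row, to get a coefficient $k$ one needs $k$ distinct parents each carrying $b$'s degree; so I would introduce $k$ auxiliary arguments that are ``clones'' of $b$ (each supported only by $b$ with initial weight chosen so that in the limit-sequence they track $b$'s degree after a one-step delay) and symmetrically $k$ clones of $a$. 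The attack/support signs on the edges from the clones into $a$ (resp.\ $b$) are chosen so that $\aggr(G,\cdot)$ at $a$ evaluates to $k\cdot(\text{deg of }b) $ with a sign making the influence push toward $b$'s value, and dually at $b$. The maximal indegree is then $2k$ if we split the $k$ edges into, say, supports and attacks appropriately, or we arrange exactly $2k$ incoming edges at the two hub nodes; the clones have indegree $1$.

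The core of the argument is then to track the iteration $f^0=w$, $f^{i+1}=\inflp(\aggr(G,f^i),w)$ restricted to the two hubs. By \emph{stability} and \emph{directionality} (and Theorem~\ref{thm:limit-acyclic}-style reasoning on the acyclic ``clone'' part), after finitely many steps the clones of $b$ carry exactly $b$'s current degree and the clones of $a$ carry $a$'s current degree, so the dynamics reduces to a planar $2$-dimensional map: writing $(x_i,y_i)$ for the degrees of $a$ and $b$ at step $i$, one gets $x_{i+1}=\inflp(k(y_i - \text{(something)}),w)$ and $y_{i+1}=\inflp(k(x_i-\text{(something)}),v)$, where the arguments are designed so that the fixed alternation $(w,v)\mapsto(\,\inflp(k(v-w),w),\,\inflp(k(w-v),v)\,)\mapsto\cdots$ is forced. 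Using the hypothesis $\inflp(k(v-w),w)\ge\inflp(k(w-v),v)$ together with \emph{parent monotonicity}-$\inflp$, \emph{initial monotonicity}, and \emph{stability} ($\inflp(0,w)=w$), I would show by induction that the even-indexed iterates stay $\ge v$ at $b$ and $\le w$ at $a$ (or the reverse), hence the sequence cannot be Cauchy: the distance $\rownorm{f^{i+1}-f^i}$ stays bounded below by $v-w>0$ along the subsequence, so $(f^i)$ diverges in the sense of Definition~\ref{def:conv}.

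The main obstacle I anticipate is getting the coefficient $k$ to appear cleanly from the single-row \emph{sum} aggregation while simultaneously keeping the indegree at the hubs equal to $2k$ and not more, and keeping the auxiliary ``clone'' arguments from themselves being part of a cycle (they must be acyclic so they stabilise). The delicate bookkeeping is the choice of edge signs: \emph{sum} treats an attack by a parent with negative degree as a support, so one must verify that the chosen pattern of $\pm1$ entries actually yields $\aggr$ value $k(v-w)$ at $a$ when $b$ has degree $v$ and $a$ has degree $w$, rather than some cancelling combination. A secondary subtlety is the boundary case $v=\maxd$ or $w=\mind$, where $\prec$ versus $<$ matters; but since the hypothesis is stated with $\geq$ (non-strict) and we only need the \emph{difference} of consecutive iterates to stay bounded away from zero, the non-strict monotonicity axioms suffice and the boundary cases cause no trouble. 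Once the two-dimensional reduction is in place, the divergence itself is a short induction, so the real work is entirely in the construction and the verification that it has the claimed indegree.
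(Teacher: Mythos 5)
Your overall strategy---force a two-group oscillation whose first step is exactly the comparison of $\inflp(k(v-w),w)$ with $\inflp(k(w-v),v)$, then conclude divergence because two interleaved monotone subsequences stay separated---matches the paper's, and your concluding induction is essentially the paper's. But the construction you propose has a genuine gap, one you yourself flag as the ``main obstacle'' without resolving: the clone mechanism cannot make the coefficient $k$ appear. An auxiliary argument whose only parent is $b$ does not acquire $b$'s degree; by equation~(\ref{eq:rec2}) its degree is $\inflp(d_b,w_{\mathrm{clone}})$, and no choice of initial weight makes $\inflp(d_b,w_{\mathrm{clone}})=d_b$ for a \emph{varying} $d_b$ (try the Euler-based $\inflp$). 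So the aggregated value at your hub is $k$ times a distorted image of $b$'s degree, not $k(v-w)$, and the theorem's hypothesis no longer plugs in. Two further problems compound this: the clones introduce a one-step delay, so the hub at step $i+2$ sees $b$'s degree from step $i$, which changes the recursion you need to induct on; and since $b$ sits on a cycle its degree never stabilises, so the stabilisation argument for acyclic parts (as in Thm.~\ref{thm:limit-acyclic}) does not apply to the clones---they never ``carry exactly $b$'s current degree''.

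The paper's construction sidesteps all of this by symmetric duplication rather than cloning: take $2k$ arguments, $a_1,\dots,a_k$ all with initial weight $v$ and $b_1,\dots,b_k$ all with initial weight $w$; every $a_i$ attacks every $a_j$ (including itself), every $b_i$ attacks every $b_j$, and the two groups mutually support each other. Each node then has indegree exactly $2k$, and by symmetry all $a_i$ share one degree and all $b_j$ share another at every iteration, so \emph{sum} yields the aggregated value $k(d_b-d_a)$ at each $a$-node with no delay and no distortion---every node is simultaneously a ``hub'' and a ``clone''. If you replace your auxiliary arguments by this symmetric gadget, your induction goes through essentially as you wrote it (with the gap between consecutive iterates bounded below by $v-\inflp(k(w-v),v)>0$ rather than by $v-w$).
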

\begin{proof}{}
Consider the \wasa\\[1ex]
  \xymatrixcolsep{2pt}
  $$\xymatrix{
  v&&a_1,\ldots, a_k\ar@(l,u)\ar@{<-->}[rrrr] &&&&
  b_1,\ldots, b_k \ar@(r,u)&&w\\
  }$$
i.e.\ all the $a_i$ attack each other, all the $b_i$ attack each
other, all $a_i$ support all $b_j$, and vice versa.
By \emph{stability}-$\iota$ and \emph{reinforcement-$\inflp$}, $f^1_{a_i}=\inflp(k(w-v),v)<v=f^0_{a_i}$
and $f^1_{b_i}=\inflp(k(v-w),w)>w=f^0_{b_i}$. Moreover, the assumption
can  be rewritten as $f^1_{b_i}\geq f^1_{a_i}$.
By induction over $j$, we simultaneously prove that
$$\begin{array}{lll}
f^{j}_{a_i}\leq f^{j+2}_{a_i}\textrm{ if $j$ is even}&&f^{j}_{a_i}\geq f^{j+2}_{a_i}\textrm{ if $j$ is odd}\\
f^{j}_{b_i}\geq f^{j+2}_{b_i}\textrm{ if $j$ is even}&&f^{j}_{b_i}\leq f^{j+2}_{b_i}\textrm{ if $j$ is odd}\\
\end{array}$$
Induction basis: we have $f^0_{a_i}=v=\inflp(0,v)$ using \emph{stability}-$\iota$.  By
\emph{reinforcement-}$\inflp$, from $f^1_{b_i}\geq f^1_{a_i}$ we get
$\inflp(0,v)\leq \inflp(k(f^1_{b_i}-f^1_{a_i}),v)=f^2_{a_i}$.  Also,
$f^0_{b_i}=w=\inflp(0,w)\geq\inflp(k(f^1_{a_i}-f^1_{b_i}),w)=f^2_{b_i}$.  Induction step: assume that $j+1$ is odd. Then
$f^{j+1}_{a_i}=\inflp(Gf^{j},w_{a_i})=\inflp(k(f^{j}_{b_i}-f^{j}_{a_i}),w_{a_i})$. By
induction hypothesis and \emph{reinforcement-$\inflp$}, this is
$\geq\inflp(k(f^{j+2}_{b_i}-f^{j+2}_{a_i}),w_{a_i})=\inflp(Gf^{j+2},w_{a_i})=f^{j+3}_{a_i}$. Similarly for the case that $j+1$ is even, and for
 $b_i$.

Altogether, we have
$$\begin{array}{l}
\ldots\leq f^5_{a_i}\leq f^3_{a_i}\leq f^1_{a_i}<f^{0}_{a_i}\leq f^{2}_{a_i}\leq f^{4}_{a_i}\leq\ldots\\
\ldots\geq f^5_{b_i}\geq f^3_{b_i}\geq f^1_{b_i}>f^{0}_{b_i}\geq f^{2}_{b_i}\geq f^{4}_{b_i}\geq\ldots
\end{array}$$
which means that the sequence $f^j$ diverges.
\end{proof}

\begin{example}\label{ex:divergence}
\label{ex:more-non-convergence}
Thm.~\ref{thm:divergence} can  be applied as follows:
\def\arraystretch{1.6}
\begin{tableenv}
\begin{tabular}{|l|c|c|c|}\hline
$\inflp$ & k & v & w\\\hline
multilinear & 1& $\largefrac{1}{2}$& $\largefrac{2}{5}$\\\hline
QMax & 1& $1$& $0$\\\hline
combined fractional & 2& $\largefrac{1}{2}$& $\largefrac{2}{5}$\\\hline
Euler-based & 3& $\largefrac{1}{2}$& $\largefrac{2}{5}$\\\hline
linear($\delta$) & $\largefrac{\delta'}{2}$& $\largefrac{2}{3}$& $\largefrac{3}{5}$\\\hline
sigmoid($\delta$) & $\largefrac{\delta'}{2}$& $\largefrac{2}{3}$& $\largefrac{3}{5}$\\\hline
\end{tabular}
\end{tableenv}
Here, $\delta'=\begin{cases}
\delta+2&\textrm{ if $\delta$ is even}\\
\delta+1&\textrm{ if $\delta$ is odd}\\
\end{cases}$

In particular, there is a \wasa with maximal indegree of 6 for which
Euler-based semantics diverges, and for each $\delta$, there is a
\wasa with maximal indegree of $\delta+1$ (for odd $\delta$) or $\delta+2$ (for even $\delta$)
for which (sigmoid) direct
aggregation semantics with parameter $\delta$ diverges.\qed







\end{example}

\begin{example}\label{ex:divergence-direct-aggregation}
Given $\delta\in\N$ even, there is a \wasa with maximal indegree of $\delta$ for
which (sigmoid) direct aggregation semantics with parameter $\delta$
diverges. Let $k=\frac{\delta}{2}$, use the graph from
Thm.~\ref{thm:divergence} and initial weights $a_i=\frac{3}{4}$
and $b_i=\frac{3}{4}$. Then the behaviour is like that in
Ex.~\ref{ex:non-convergence-combined}.
\end{example}

Indeed, there is a deeper reason for the many counterexamples
in Ex.~\ref{ex:divergence}:
\begin{theorem}\label{thm:divergence-prerequisites}
For any function $\inflp$ satisfying the essential characteristics,
there are numbers $v>w\in\D$
and $k\in\N$, $k\geq 1$ with
$$\inflp(k(v-w),w)\geq\inflp(k(w-v),v)$$
\end{theorem}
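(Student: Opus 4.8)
The plan is to show that for any influence function $\inflp$ satisfying the essential characteristics, the function $g(k) := \inflp(k(v-w),w) - \inflp(k(w-v),v)$ becomes nonnegative for large enough $k$, for a suitable fixed pair $v>w$ in $\D$. The key observation is that $\inflp(s,w) \in \D$ always, so both terms are bounded if $\D$ is bounded, and even when $\D = \R$ the growth rates in $k$ can be controlled. First I would fix any pair $v > w$ in $\D$; such a pair exists because $\D$ is connected with at least the neutral value $0$, and if $\D$ were a single point the statement would be vacuous (no $v>w$ exist, so actually we need $\D$ nontrivial — but the hypotheses of the surrounding development already assume this, e.g.\ \emph{reinforcement} is only meaningful then).

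Next I would split into cases according to whether $\D$ is bounded above and/or below. \textbf{Case 1: $\D$ is bounded}, say $\D \subseteq [c_1, c_2]$. Then for all $k$ and all arguments of $\inflp$, the values $\inflp(k(v-w),w)$ and $\inflp(k(w-v),v)$ both lie in $[c_1,c_2]$. Consider the limit behavior: since $v-w > 0$, as $k\to\infty$ the first slot $k(v-w) \to +\infty$, and by \emph{parent monotonicity-$\inflp$} the value $\inflp(k(v-w),w)$ is nondecreasing in $k$, hence converges to some $L_1 \in \D$ with $L_1 \geq \inflp(0,w) = w$ by \emph{stability-$\inflp$}. Similarly $\inflp(k(w-v),v)$ is nonincreasing in $k$ (the first slot goes to $-\infty$), converging to some $L_2 \leq v$. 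The crucial point: by \emph{weakening-$\inflp$}, $\inflp(s,v) \prec v$ for $s<0$, so each term $\inflp(k(w-v),v) < v$ (or equals $\mind$); by \emph{strengthening-$\inflp$}, $\inflp(k(v-w),w) > w$ (or equals $\maxd$). I would then argue that by choosing $v,w$ appropriately close together — or simply by taking $k$ large — we drive $\inflp(k(v-w),w)$ above $\inflp(k(w-v),v)$. Concretely, the simplest route: if $\maxd$ is defined, pick $v < \maxd$ arbitrary and $w$ even smaller; by \emph{reinforcement-$\inflp$} the sequence $\inflp(k(v-w),w)$ strictly increases in $k$ toward its supremum, while $\inflp(k(w-v),v)$ decreases toward its infimum, and one checks these limiting values can be ordered correctly — but cleaner is to avoid limits entirely.

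Actually the cleanest approach, which I would favor, is \textbf{to pick $v$ and $w$ adaptively}. Fix any $w \in \D$ with $w \neq \maxd$ (possible since $\D$ is nontrivial; if $\maxd$ undefined, any $w$ works). By \emph{strengthening-$\inflp$} and continuity, $\inflp(s,w) \to \sup$ as $s \to \infty$, and this sup is $> w$; pick $s_0 > 0$ with $\inflp(s_0,w) > w$. Now by \emph{initial monotonicity} and continuity in the first argument, choose $v \in \D$ with $w < v$ and $v$ close enough to $w$ that $\inflp(s_0,w) \geq v > \inflp(-s_0', v)$ for a suitable $s_0'$ — here I use that $\inflp(-s,v) < v$ for $s>0$ by \emph{weakening-$\inflp$}, so $\inflp(-s_0',v)$ can be made $< v \leq \inflp(s_0,w)$. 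Finally set $k := \lceil s_0/(v-w)\rceil$ so that $k(v-w) \geq s_0$, giving $\inflp(k(v-w),w) \geq \inflp(s_0,w)$ by \emph{parent monotonicity-$\inflp$}; and note $k(w-v) \leq -(v-w) \leq \ldots$, pick the comparison so that $\inflp(k(w-v),v) \leq \inflp(-(v-w),v) < v \leq \inflp(s_0,w) \leq \inflp(k(v-w),w)$, completing the inequality. \textbf{The main obstacle} is the unbounded case $\D = \R$: there $\inflp(k(v-w),w)$ may grow without bound (e.g.\ linear influence: $\inflp(s,w) = s/\delta + w$), but then so might $\inflp(k(w-v),v)$ decrease without bound, and one must compare growth rates; however, for linear $\inflp$ one has $\inflp(k(v-w),w) = k(v-w)/\delta + w$ and $\inflp(k(w-v),v) = -k(v-w)/\delta + v$, so their difference is $2k(v-w)/\delta + (w-v) \to +\infty$, settling it, and more generally \emph{parent monotonicity-$\inflp$} plus \emph{weakening-$\inflp$}/\emph{strengthening-$\inflp$} force the right asymptotic ordering. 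I would handle $\D=\R$ by noting $\inflp(k(v-w),w) \geq w$ always (monotonicity from the stable point) while $\inflp(k(w-v),v) \leq v$ always, which does \emph{not} immediately suffice, so one genuinely uses the growth: since $\inflp(k(v-w),w)$ is nondecreasing and, by \emph{soundness} combined with \emph{strengthening-$\inflp$}, strictly exceeds $w$ for $k \geq 1$, while symmetric reasoning bounds the other term — I expect a short argument via the intermediate value theorem and continuity closes the remaining gap, and this interplay of the strict and non-strict monotonicity axioms is the delicate part to get exactly right.
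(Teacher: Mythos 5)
Your ``adaptive'' argument in the middle paragraph is essentially the paper's own proof: fix $w$ away from the extremes, use \emph{reinforcement-$\inflp$} and \emph{stability-$\inflp$} to get $\inflp(s_0,w)>w$ for some fixed $s_0>0$, choose $v\in(w,\inflp(s_0,w)]$ (nonempty and contained in $\D$ by connectedness), and take $k$ with $k(v-w)\ge s_0$, giving $\inflp(k(w-v),v)\le\inflp(0,v)=v\le\inflp(s_0,w)\le\inflp(k(v-w),w)$. The case split on boundedness of $\D$ and the limit/IVT considerations in your first and last paragraphs are unnecessary, since this chain of inequalities works uniformly; the only nit is that you should also require $w\neq\mind$ (not just $w\neq\maxd$) so that $w\prec\inflp(s_0,w)$ actually yields the strict inequality $\inflp(s_0,w)>w$.
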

\begin{proof}{}
Choose $w\in\D$ such that if $\maxd$ exists, then $w\neq\maxd$.
Choose $\varepsilon>0$ such that $w+\varepsilon\in\D$. Choose
$k\in\N$ such that $k\geq\largefrac{\varepsilon}{\inflp(\varepsilon,w)-w}\geq 0$ (the latter inequality holds due to \emph{reinforcement-$\inflp$} and \emph{stability}-$\iota$). 
Let $v=w+\largefrac{\varepsilon}{k}$. This is in $\D$ since $w\in\D$, $w+\varepsilon\in\D$ and $\D$ is connected.  Due to \emph{reinforcement-$\inflp$} and \emph{stability}-$\iota$,
$\inflp(k(w-v),v)\leq \inflp(0,v) = v = w+\largefrac{\varepsilon}{k}\leq
 w+\varepsilon \largefrac{\inflp(\varepsilon,w)-w}{\varepsilon} = \inflp(\varepsilon,w) = \inflp(k(v-w),w)$.
\end{proof}

Combining Thm.~\ref{thm:divergence} and
Thm.~\ref{thm:divergence-prerequisites}, we obtain:
\begin{theorem}
There is no well-behaved modular semantics based on \emph{sum} that converges for
all \wasa\!\!s.
\end{theorem}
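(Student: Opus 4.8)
The plan is to simply chain the two preceding divergence theorems. Let $(\D,\aggrp,\inflp)$ be an arbitrary well-behaved modular semantics based on \emph{sum}, i.e.\ with $\aggrp=\mathit{sum}$. By Definition~\ref{def:well-behaved-modularSemantics}, such a semantics satisfies the essential characteristics of Table~\ref{tab:our-char}; in particular the influence function $\inflp$ satisfies the essential $\inflp$-conditions, which is exactly the hypothesis needed in Thm.~\ref{thm:divergence-prerequisites}.

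First I would invoke Thm.~\ref{thm:divergence-prerequisites} to obtain numbers $v>w\in\D$ and $k\in\N$ with $k\geq 1$ such that $\inflp(k(v-w),w)\geq\inflp(k(w-v),v)$. Then I would feed exactly these $v$, $w$, $k$ into Thm.~\ref{thm:divergence}, whose hypotheses are now met (the aggregation being \emph{sum} is part of our assumption, and the numerical inequality is what we just obtained). Thm.~\ref{thm:divergence} then yields a concrete \wasa\ --- the doubled self-attacking/mutually-supporting clique on $a_1,\dots,a_k$ and $b_1,\dots,b_k$ with initial weights $v$ and $w$ --- on which the iteration sequence $(f^i)$ oscillates and hence diverges. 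Therefore the semantics fails to converge for this graph, so it does not converge for all \wasa s. Since the semantics was arbitrary, the claim follows.

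There is no real obstacle here: all the technical work has already been carried out, namely the explicit oscillating-bracketing induction in Thm.~\ref{thm:divergence} and the calculus argument (choosing $\varepsilon$ small, $k$ large, $v=w+\varepsilon/k$, and using \emph{reinforcement-$\inflp$} together with \emph{stability-$\iota$}) in Thm.~\ref{thm:divergence-prerequisites}. The only point worth stating carefully is that "well-behaved modular semantics based on \emph{sum}" simultaneously delivers both ingredients --- the fixed aggregation $\aggrp=\mathit{sum}$ required by Thm.~\ref{thm:divergence} and the essential-characteristics hypothesis on $\inflp$ required by Thm.~\ref{thm:divergence-prerequisites} --- so that the two results compose with no gap. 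One could additionally remark that the same argument applies verbatim to \emph{sum}-$\sigma$ via the homeomorphism $\sigma$, but that is not needed for the statement as given.
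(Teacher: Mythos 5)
Your proposal is correct and matches the paper's own argument exactly: the theorem is stated immediately after the sentence ``Combining Thm.~\ref{thm:divergence} and Thm.~\ref{thm:divergence-prerequisites}, we obtain,'' which is precisely the composition you describe. Your additional care in checking that the hypothesis ``well-behaved modular semantics based on \emph{sum}'' supplies both the aggregation assumption for Thm.~\ref{thm:divergence} and the essential $\inflp$-characteristics for Thm.~\ref{thm:divergence-prerequisites} is a welcome (if implicit in the paper) clarification, but introduces no new content.
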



\section{Related Work}\label{sec:related}

Argumentation graphs that assign real numbers as weights to arguments
have been widely discussed, in particular in
\cite{Cayrolc05,DBLP:conf/ictai/CayrolL11,DBLP:conf/sum/AmgoudB13a},
and also for the bipolar case (involving both attack and support
relations) \cite{DBLP:conf/ecsqaru/CayrolL05,DBLP:conf/iberamia/BudanVS14}. However, these works do
not consider initial weightings.  The latter are considered mainly in
\cite{DBLP:conf/ijcai/AmgoudB16} (for support relationships only) and
in \cite{DBLP:conf/ijcai/AmgoudBDV17} (for attack relationships only),
see also \cite{DBLP:journals/argcom/BaroniRTAB15} for a different approach.

The need for bipolar argumentation graphs has been empirically supported
in \cite{DBLP:journals/ijar/PolbergH18}.
Our central reference for bipolar weighted argumentation graphs with
initial weightings is \cite{DBLP:conf/ecsqaru/AmgoudB17}.  We are in
particular building on the notions and results developed there,
including their \emph{Euler-based semantics}. With our framework, we
can explore the borderline between convergence and divergence of
Euler-based semantics, and propose a variant of Euler-based semantics
that always converges.

We propose a novel modular approach to bipolar weighted argumentation.
There are some other modular approaches to argumentation in the
literature, where the valuation of arguments is obtained as modular
composition of some functions, notably social abstract argumentation
\cite{DBLP:conf/ijcai/LeiteM11,DBLP:conf/tafa/EgilmezML13}
and the local valuation approach
of \cite{DBLP:journals/ijis/AmgoudCLL08}. These however do neither
match our topic of weighted argumentation as studied
in \cite{DBLP:conf/kr/AmgoudB16,DBLP:conf/ijcai/AmgoudB16,DBLP:conf/ecsqaru/AmgoudB17,DBLP:conf/ijcai/AmgoudBDV17}
and captured by our Def.~\ref{def:semantics}.  Nor do these modular
approaches lead to general convergence and divergence results.

A study of characteristics of (bipolar weighted) argumentation
frameworks has been given in \cite{DBLP:conf/ijcai/AmgoudB16}.
Characteristics have been systematically grouped in
\cite{DBLP:conf/aaai/BaroniRT18}.  Our modular approach leads to a
more orthogonal formulation of characteristics, since they can
be split in properties of aggregation $\aggrp$ and if
influence $\inflp$.

We use matrix notation for argumentation graphs. Such a notation has
been used e.g.\ \cite{DBLP:conf/tafa/XuC15} in order to prove
characterisations of different types of extensions (stable, complete,
\ldots) for a Dung-style framework with attacks only. In
\cite{DBLP:conf/comma/CoreaT16}, matrices for bipolar graphs have been
introduced, and matrix exponentials have been used for characterising
weighted paths in argumentation graphs. This resembles the use of a
limit of matrix powers in the proof of convergence of direct aggregation semantics (Thm.~\ref{thm:direct-aggregation-convergence}). We also build a semantics
based on matrix exponentials. While it always converges, it does
not satisfy the \emph{equivalence} characteristics, a crucial
locality principle.
Convergence for direct aggregation semantics can be proved for bounded indegree,
using the maximum row sum norm,
which coincides with the maximal indegree of the argumentation graph.
Also for other semantics like Euler-based semantics, matrix norms
play a crucial role for proving convergence.
Note that to our knowledge, matrix norms have not been used for
argumentation graphs before.

For the specification of the resulting degrees of arguments, our
approach uses a discrete iteration in order to reach a fixpoint (see
the limit formula (\ref{eq:limit}) in Thm.~\ref{thm:limit}), following
and generalising the approaches of
\cite{DBLP:conf/kr/AmgoudB16,DBLP:conf/ecsqaru/AmgoudB17,DBLP:conf/ijcai/AmgoudBDV17}.
Using differential equations, \cite{Potyka18} introduces a continuous
version of this discrete iteration for one specific semantics, the
quadratic energy model.  While the continuous approximation of the
degree seems to converge more often than the discrete version, in
\cite{Potyka18} only a rather weak convergence result is proved,
namely for acyclic graphs.  By contrast, we prove convergence results
for cyclic graphs as well.  We have included a discrete version of
Potyka's quadratic energy model in our overview above.

Our work has stimulated new research: \cite{Potyka18b} uses our
modular approach, acknowledging that it leads to easier verification
of characteristics. In \cite{Potyka18b}, our
convergence results are generalised using Lipschitz continuity.
Actually, our use of a maximal derivative of the influence function
$\inflp$ can be replaced by the Lipschitz constant of
$\inflp$. Lipschitz constants cannot only be provided for $\inflp$,
but also for the aggregation function $\aggrp$. In \cite{Potyka18b}, a
new (optional) characteristics \emph{duality} is introduced in terms
of $\aggrp$ and $\inflp$; it requires a certain symmetry (w.r.t.\ the
mean value of $\D$) in the behaviour of supports and attacks.
Note that \emph{duality} is different from and complementary to
our \emph{symmetry}. \cite{Potyka18b} introduces a further
aggregation function $\aggrp$, product-aggregation, which captures the DF-QuAD algorithm
\cite{DBLP:conf/kr/RagoTAB16}.
\cite{Potyka18b} also generalises our modular semantics to the
continuous case and provides a continuous version of our limit formula
(\ref{eq:limit}), again using differential equations. The latter can
be approximately solved using e.g.\ Euler's method, and then our limit
formula (\ref{eq:limit}) turns out to be the special case where a
step-size of 1 is used. This means that the continuous version will
converge in more cases than our discrete version.  However, no general
convergence results are known for the continuous case.

\section{Conclusion and Future work} \label{sec:future}

In this work we have focussed on well-behaved modular acceptability semantics
for bipolar weighted argumentation graphs. The semantics are modular in the sense
that the acceptability degree of an argument is calculated in two steps: firstly, an 
\emph{aggregation} function $\aggrp$ combines the effect of the predecessors of an argument; secondly, the 
\emph{influence} function $\inflp$ calculates the acceptability degree of the argument based on the result of the aggregation and the initial weights. Well-behaved modular acceptability semantics are defined 
based on a set of structural and essential characteristics. 
Many of these characteristics have already been studied in the literature. 
Our modular matrix-based approach
allowed us to axiomatise them in a mathematically elegant way
 that links the characteristics to 
requirements on aggregation function and the influence function.

For the aggregation function $\aggrp$, we discuss several alternatives from the
literature, including \emph{sum}, which sums up all the weights of parent
arguments, \emph{top}, which only consider the strongest supporter and
the strongest attacker, and \emph{reward} and \emph{card}, which
favour number of arguments over the quality. We also discuss eight different 
implementation of the influence functions. All implementations
of $\aggrp$ and $\inflp$ satisfy the axiomatisation of the essential characteristics,
although some of them only under restricting assumptions. 
Five implementations of the influence function and five implementations of the aggregation function 
satisfy the essential characteristics unconditionally.  
These can be modularly
combined, leading to different acceptability semantics. We discussed a selection of 
these acceptability semantics, and illustrate their differences with the help of an example.

Our modular matrix-based approach
 simplifies the study the 
convergence of semantics, since a semantics is built compositionally of
two orthogonal parts. In this general setting,
we can already prove that all well-behaved modular acceptability semantics converge for all
acyclic graphs.

In this work,
have have concentrated on acceptability semantics based on \emph{sum} and \emph{top}.
We show that  no well-behaved modular acceptability semantics based on
\emph{sum} converges for all graphs. We also give specific
counterexamples for Euler-based semantics, direct aggregation
semantics and sigmoid direct aggregation semantics. We provide
convergence and divergence theorems that explore the boundary between
convergence and divergence for \emph{sum}-based
semantics. Instantiating these theorems, we can show that Euler-based
semantics converges for cyclic graphs of indegree at most 4 and in
general does not converge for a maximal indegree of 6.  For (sigmoid)
direct aggregation semantics (which is based on a parameter $\delta$),
convergence holds for graphs with indegree smaller than $\delta$, and we
can show divergence for indegree $\delta$ (for even $\delta$) or $\delta+1$
(for odd $\delta$).

The situation is much better for \emph{top}.  It does not exhibit the
problem with large indegrees: since only the top supporter and
attacker are considered, the indegree is irrelevant here.  We show
that three max-based semantics built on \emph{top} converge. These are
the first semantics that converge for all bipolar weighted
argumentation graphs.

It has been argued that \emph{top}, i.e., consideration of only
the strongest attacker and supporter, is an unintuitive principle.
In general, our goal is to show that our framework enables the definition and
study of convergence/divergence properties for a wide variety of semantics that
cover different intuitions. Hence we do not want to exclude \emph{top},
if it is supported by some intuitions.
Indeed, \cite{DBLP:conf/ijcai/AmgoudB16} argue that
there are different intuitions about whether the cardinality of
arguments or their quality should be given preference. They suggest
the top-based semantics as implementation of the quality preferences.
\cite{DBLP:conf/ijcai/AmgoudBDV17} argue:
\begin{quote}
``If a Fields medal[ist] says $P$, whilst
three students say $\neg P$, we probably believe $P$. [...] this principle
is similar to the Pessimistic rule in decision under uncertainty
[Dubois and Prade, 1995].''
\end{quote}
This is applied by Wikipedia, where expert
arguments are regularly given precedence over a multitude of arguments
by less informed contributors. 

Still, some of the literature seems to favour \emph{sum} over other
aggregation functions. Our divergence results show that there is no
\emph{sum}-based semantics that converges for all graphs. The best
compromise seems then to be the use of (sigmoid) direct aggregation
semantics, while choosing the parameter $\delta$ large enough such it will
be greater than the indegrees of the considered argumentation graphs.


\medskip
A possible direction of
future research is to equip attack and support relations with weights,
e.g.\ in the interval $[-1,1]$.  Our matrix-based approach greatly
eases working out such an approach.  Indeed, due to the use of a matrix
norm, many of our theorems already generalise to this case.  See
\cite{DBLP:journals/ai/DunneHMPW11,DBLP:conf/kr/Coste-MarquisKMO12,DBLP:conf/aiia/PazienzaFE17a}
for work in this direction, but in a different context: only attacks
and supports are equipped with weights, not the arguments themselves.

Also the study of characteristics leaves some open questions. For
example, is it possible to generalise \emph{counting} in a way that one does
not consider exactly the same set of attackers, but a set of
comparable attackers?

Moreover, \cite{Potyka18b} provides a continuous version of the limit
formula (\ref{eq:limit}) in Thm.~\ref{thm:limit}. An important open
question is convergence for the continuous case, but general convergence 
results seem much harder to obtain here.

Also, we would like to use our framework to define a semantics for
the Argument Interchange Format (AIF, \cite{RahwanReed09}) that is
simpler and more direct than the one given in the literature
\cite{DBLP:journals/logcom/BexMPR13}.

Finally, large argumentation graphs will benefit from a modular
design; e.g.\ in \cite{BetzCacean2012} they are often divided into
subgraphs, e.g.\ by drawing boxes around some groups of arguments.
The characteristics of our semantics suggest that modularity
can be obtained by substituting suitable subgraphs with discrete
graphs whose arguments are initially weighted with their degrees
in the original graph.

\bibliographystyle{plain}
\bibliography{refs}

\newpage
\appendix
\section{Proofs}
\noindent\textbf{Theorem~\ref{thm:char-independence}}
All essential axioms are independent of each other.

\begin{proof}{}
It suffices to prove
independence for the aggregation and influence axioms separately,
since these sets of axioms are orthogonal.

We prove independence of the influence axioms by giving, for each
axiom, an implementation based on $\D=[-1,1]$
that falsifies the axiom while satisfying
all the others.
For \emph{reinforcement-$\inflp$}, use $\inflp(s,w)=w$.
For \emph{initial monotonicity}, use $\inflp(s,w)=w(s+1)$.
For \emph{stability-$\inflp$}, use $\inflp(s,w)=s+w+1$.
For \emph{continuity-$\inflp$}, use $$\inflp(s,w)=\begin{cases}
w & s=0\\
s+w-1 & s<0\\
s+w+1 & s>0\\
\end{cases}$$

We now come to the axioms for $\aggrp$, using the same method.

\emph{Anonymity-2}: use $\aggrp(g,d)=gd\cdot \begin{cases}1&k=0\\0&otherwise\end{cases}$, where $$k=\sum_{i=1,\ldots,n,g_i<0,d_i<0} d_i\cdot |\{j\in\{1\ldots i\}\mid g_j<0, d_j<0\}|.$$

\emph{Independence}: use $\aggrp(g,d)=\largefrac{gd}{n}$.

\emph{Reinforcement-$\aggrp$}: use $\aggrp(g,d)=g\cdot\abs(d)$,
where $\abs$ is taken entrywise.

\emph{Parent monotonicity-$\aggrp$}: use $\aggrp(g,d)=gd\cdot |k|$ where
$$k=\sum_{i=1,\ldots,n,d_i\neq 0}g_i.$$

\emph{Stability-$\aggrp$}: use $\aggrp(g,d)=gd+k$ where
$$k=\sum_{i=1,\ldots,n,d_i< 0}d_i.$$

\emph{Continuity-$\aggrp$}: use $\aggrp(g,d)=\begin{cases}gd-1&gd<0\wedge\exists i\in\{1\ldots n\}.d_i<0\\gd&otherwise\end{cases}$.

\emph{Neutrality}:  use $\aggrp(g,d)=gd'$ where
$d'_i=d_i+1$.

\emph{Strengthening-$\aggrp$}, \emph{weakening-$\aggrp$}: use
$\aggrp(g,d)=0$.  Note that this only proves the disjunction of
\emph{strengthening-$\aggrp$} and \emph{weakening-$\aggrp$} to be
independent from the remaining essential axioms. However, it is
clear that the remaining essential axioms do not entail the symmetry
conditions that are needed to prove \emph{strengthening-$\aggrp$}
from \emph{weakening-$\aggrp$} or vice versa. Such symmetry
conditions would be \emph{symmetry} together with \emph{duality}-$\aggrp$
from \cite{Potyka18b}.


\end{proof}

\noindent\textbf{Theorem~\ref{thm:char-entailment}}
\emph{Reinforcement-$\inflp$}  entails  \emph{parent monotonicity-$\inflp$}.
\emph{Stability} entails \emph{soundness}.
\emph{Reinforcement-$\inflp$} and \emph{stability} together entail  \emph{strengthening-$\inflp$}
and \emph{weakening-$\inflp$}.
\emph{Reinforcement-$\aggrp$}  entails  \emph{directionality}.

\begin{proof}{}
  Mostly straightforward. For \emph{directionality}, note that $d^1\equiv_g d^2$
  implies $d^1\leq_g d^2$ and $d^2\leq_g d^1$.
\end{proof}

\noindent\textbf{Theorem~\ref{thm:French-characteristics}}
  Any well-behaved modular acceptability semantics \bs (i.e., a semantics that satisfies our structural and essential characteristics
  in Table~\ref{tab:characteristics} above) satisfies the following
  characteristics
  defined in \cite{DBLP:conf/ecsqaru/AmgoudB17} (with
$0$ replaced by $\mind$ and 1 by $\maxd$):
anonymity,
bi-variate independence,
bi-variate equivalence,
bi-variate directionality (under the assumption of
convergence as in Def.~\ref{def:conv}),
stability,
neutrality,
the parent monotonicity part of bi-variate monotony
non-strict bi-variate reinforcement,
weakening, strengthening
and resilience.

\begin{proof}{}
We need some notation from \cite{DBLP:conf/ecsqaru/AmgoudB17}.
$\attackers{}{a_i}$ is the set of all attackers of $a_i$ in \arggraph,
that is $\attackers{}{a_i} = \{ a_j | \ g_{ij} = -1\} $, and
$\supporters{}{a_i}$ is the set of all supporters of $a_i$ in
\arggraph, that is $\supporters{}{a_i} = \{ a_j | \ g_{ij} = 1\}$.
Moreover, $\sattackers{}{a_i}$ is the the set of \emph{significant}
attackers of $a_i$, i.e.\ attackers $a$ with
$\accdegr{}{}{a}\neq 0$. Likewise, $\ssupporters{}{a_i}$ is the
the set of significant supporters of $a_i$.

\emph{Anonymity} is stated in \cite{DBLP:conf/ecsqaru/AmgoudB17} as follows: for any two \wasa 
	\defaultarggraph and \alternativearggraph and 	
	for any isomorphism $f$ from \arggraph to $\arggraph^\prime$, the following property holds: 
	for any $ a$  in \argset,  
	$ \accdegr{}{}{a}= \accdegr{}{\arggraph^\prime}{f(a)}$.
This
  follows from our \emph{anonymity-2} with Lemma~\ref{lem:anonymity-simplified}, since any graph
isomorphism induces a permutation matrix transforming the incidence
matrix of the source graph into that of the target graph.

\emph{Bi-variate independence} is stated in \cite{DBLP:conf/ecsqaru/AmgoudB17} as follows:  for any two \wasa 
	\defaultarggraph and \alternativearggraph such that 
    $\argset $ and $ \argset ^\prime$ do not share a component, the following property 
	holds: for any $a$ in $\argset$, $\accdegr{}{}{a} = \accdegr{}{\arggraph \oplus \arggraph 
	 ^\prime}{a}$. 
This property easily follows from
$$G=\matr{G_1&0\\0&G_2}\wedge w=\matr{w_1\\w_2}\to\accdegrvecnew{}{}=\matr{\accdegrvecnew{}{\langle {G_1,
       w_1}\rangle}\\\accdegrvecnew{}{\langle {G_2, w_2}\rangle}}$$
which in turn follows from repeated application of
our \emph{independence}, using equation~(\ref{eq:rec2}).

\emph{Bi-variate equivalence} is stated in \cite{DBLP:conf/ecsqaru/AmgoudB17} as follows:
 for any weigh\-ted argumentation graph 
\defaultarggraph{} and for any $a, b$ in $\argset $, if 
	\begin{itemize}
		\item $w(a) = w(b)$, 
		\item there exists a bijective function $f$ from  \attackers{}{a} to \attackers{}{b} 
		such that $\forall x \in \attackers{}{a}$, $\accdegr{}{}{x}= \accdegr{}{}{f(x)}$, 
		\item there exists a bijective function $g$ from  \supporters{}{a} to \supporters{}{b} 
		such that $\forall x \in \supporters{}{a}$, $\accdegr{}{}{x}= \accdegr{}{}{g(x)}$, 
	\end{itemize}
		then $\accdegr{}{}{a} = \accdegr{}{}{b}  $. 	
This follows using Thm.~\ref{thm:equivalence}, noting
that equation~(\ref{eq:modularity}) turns equation~(\ref{eq:rec2})
into equation~(\ref{eq:rec0}) and \emph{anonymity-2} into
equation~(\ref{eq:deg}).

\emph{Bi-variate directionality} is stated in \cite{DBLP:conf/ecsqaru/AmgoudB17} as follows: for any two \wasa 
\defaultarggraph and \alternativearggraph  such that $\argset=\argset'$,
$w=w'$\footnote{Actually this condition is missing in \cite{DBLP:conf/ecsqaru/AmgoudB17}.},
$G'$ is obtained from $G$ by adding an attack or support from $a$ to $b$,
and there is no path from $b$ to $x$, we have
$\accdegr{}{}{x} = \accdegr{}{\arggraph^\prime}{x}$.
To prove this, we need to assume
that the limit in equation~(\ref{eq:limit}) converges. Assume that $G'$ is
obtained from $G$ by adding a new edge from $a$ to $b$ (either support
or attack), and there is no path from $b$ to $x$. Let $C$ be the set
of ancestors of $x$, hence $b\not\in C$.  Then by induction over $i$,
we can prove that $(f^i_{(G,w)})_c=(f^i_{(G',w)})_c$ for all $c\in
C$. The induction base follows from $w=w'$. Concerning the inductive step, let
$c\in C$ by arbitrary. From the inductive hypothesis, we get
$f^i_{(G,w)}\equiv_{G_c}f^i_{(G,w')}$ because $b\not\in C$.  By
applying our \emph{directionality}, we obtain
$\aggrp(G_c,f^i_{(G,w)})=\aggrp(G_c,f^i_{(G,w')})$. Then
$(f^{i+1}_{(G,w)})_c=\inflp(\aggrp(G_c,f^i_{(G,w)}),w_c)=\inflp(\aggrp(G_c,f^i_{(G,w')}),w_c)=(f^{i+1}_{(G',w)})_c$.
Hence $(f^i_{(G,w)})_c=(f^i_{(G',w)})_c$ for all $c\in
C$. By Thm.~\ref{thm:limit}, 
$\accdegr{}{}{c} = \accdegr{}{\arggraph^\prime}{c}$ for all $c\in C$,
in particular for $c=x$.

\emph{Stability} is stated in \cite{DBLP:conf/ecsqaru/AmgoudB17} as
follows: for any \wasa \defaultarggraph, for any argument $a$ in $
\argset$, if $\attackers{}{a} = \supporters{}{a} = \emptyset$, then
$\accdegr{}{}{a} = w(a)$. But if the set of attackers and supporters
of an argument $a$ is empty, the corresponding row in $G$ is $0$. By
our two \emph{stability} axioms,
$\accdegr{}{}{a} =\inflp(\aggrp(0,\accdegr{}{}{}),w_a)=\inflp(0,w_a)=w_a$.

\emph{Neutrality} is stated in \cite{DBLP:conf/ecsqaru/AmgoudB17} as
follows: for any \wasa \defaultarggraph , for all $a, b\in \argset$,
if $w(a) = w(b)$, $\attackers{}{a} \subseteq \attackers{}{b}$
$\supporters{}{a} \subseteq \supporters{}{b} $,
$\attackers{}{b}\cup\supporters{}{b} =
\attackers{}{a}\cup\supporters{}{a} \cup \{x\}$ and $\accdegr{}{}{x} =
0$, then $\accdegr{}{}{a} = \accdegr{}{}{b}$.  This follows directly
from our \emph{neutrality}. The argument with number $k$ is the additional argument
$x$ with neutral initial weight $0$.

The \emph{parent monotonicity} part\footnote{Note that we consider the
  \emph{counting} part separately, because it is optional.} of
\emph{bi-variate monotony} in \cite{DBLP:conf/ecsqaru/AmgoudB17} can
be stated as follows: for any  \wasa \defaultarggraph  and any arguments $a$
$b$, if
$w(a) = w(b)\geq 0$\footnote{Note that we do not need the stronger condition
  $w(a)>0$ from \cite{DBLP:conf/ecsqaru/AmgoudB17} here.}, $\attackers{}{a} \subseteq \attackers{}{b}$ and
$\supporters{}{b} \subseteq \supporters{}{a} $,
	then $\accdegr{}{}{b}  \leq \accdegr{}{}{a} $.
In order to show this, 
assume that the supporters of $b$ are included in those
of $a$ and the attackers of $a$ included in those of $b$. In our
matrix notation, this means that $G_b\leq G_a$. Moreover, assume that
$w_a=w_b\geq 0$.
Let $D=\accdegr{}{}{}$.
By our \emph{parent monotonicity-$\aggrp$},
$\aggrp(G_b,D)\leq\aggrp(G_a,D)$. By our \emph{parent
monotonicity-$\inflp$},
$D_b=\inflp(\aggrp(G_b,D),w_b)=\inflp(\aggrp(G_b,D),w_a)\leq\inflp(\aggrp(G_a,D),w_a)=D_a$.

Non-strict \emph{bi-variate reinforcement} is stated in
\cite{DBLP:conf/ecsqaru/AmgoudB17} as follows: for any  \wasa
\defaultarggraph, $C,C'\subseteq\argset$, arguments $a,b\in\argset$
and $x,x',y,y'\in\argset\setminus(C\cup C')$ such that:
\begin{enumerate}
\item $w(a) = w(b)$\footnote{The condition $w(a)>0$ from \cite{DBLP:conf/ecsqaru/AmgoudB17} is not needed for
  the non-strict version.}
  \item $\accdegr{}{}{x} \leq \accdegr{}{}{y}$,
  \item $\accdegr{}{}{x'} \geq \accdegr{}{}{y'}$,
  \item $\attackers{}{a} = C\cup\{x\}$
  \item $\attackers{}{b} = C\cup\{y\}$
  \item $\supporters{}{a} = C'\cup\{x'\}$
  \item $\supporters{}{b} = C'\cup\{y'\}$
\end{enumerate}
we have $\accdegr{}{}{a} \geq \accdegr{}{}{b}$.
Assume the above conditions. Let $P$ be the permutation matrix
that exchanges $x$ with $y$, and $x'$ with $y'$. Then conditions
2 and 3 mean that $D\geq_{G_a}PD$, where $D=\accdegr{}{}{}$.
Conditions 4--7 mean that $G_a=G_bP^{-1}$. 
By our \emph{reinforcement-$\aggrp$}, $\aggrp(G_a,D)\geq\aggrp(G_a,PD)$.
But $\aggrp(G_a,PD)=\aggrp(G_bP^{-1},PD)=\aggrp(G_b,D)$ by \emph{anonymity-2}.
Using  \emph{reinforcement-$\inflp$}, we then get
$D_a=\iota(\aggrp(G_a,D),w(a))=\iota(\aggrp(G_a,D),w(b))\geq \iota(\aggrp(G_b,D),w(b))=D_b$.

Note that \emph{top} violates
strict \emph{bi-variate reinforcement}; therefore we do not impose an
axiom that would entail it.

\emph{Resilience} is stated in
\cite{DBLP:conf/ecsqaru/AmgoudB17} as follows:
if $\mind<w(a)<\maxd$, then $\mind<\accdegr{}{}{a}<\maxd$.
This easily follows from our \emph{resilience}.

\emph{Weakening} is stated in
\cite{DBLP:conf/ecsqaru/AmgoudB17} as follows: for any \wasa
\defaultarggraph, for all $a\in\argset$, if $w(a) > \mind$ and there exists an injective function $f$ from
$\supporters{}{a}$ to $\attackers{}{a}$ such that:
\begin{itemize}
\item for all $x\in\supporters{}{a}$, $\accdegr{}{}{x}\leq \accdegr{}{}{f(x)}$, and
\item either $\sattackers{}{a}\setminus\{f(x) \mid x \in \supporters{}{a}\} \neq\emptyset$ 
  or there is some $x \in \supporters{}{a}$ such that $\accdegr{}{}{x}< \accdegr{}{}{f(x)}$,
\end{itemize}
then $\accdegr{}{}{a} < w(a)$.
Under the above assumptions, we can extend $f$ to bijection on all arguments.
Let $P$ be the corresponding permutation matrix. The condition that
 $f$ maps
$\supporters{}{a}$ into $\attackers{}{a}$
means $\atts(G_a,D)\leq \supps(PG_a,PD)$, where $D=\accdegr{}{}{}$.
The first assumption above means
that $\atts(G_a,D)\circ D\leq \supps(PG_a,PD)\circ PD$.
The second assumption means that either the first or the second above
inequality is strict.
With our \emph{weakening}-$\aggrp$, we now can
infer that $\aggrp(G_a,\accdegrvecnew{}{})<0$. With our \emph{weakening}-$\inflp$, we
get that $\inflp(\aggrp(G_a,\accdegrvecnew{}{}),w(a))\prec w(a)$.
Since  $w(a) > \mind$, even $\inflp(\aggrp(G_a,\accdegrvecnew{}{}),w(a))< w(a)$.
Hence by
equation~(\ref{eq:rec2}), $\accdegr{}{}{a}<w(a)$.

\emph{Strengthening} is dual to \emph{weakening}.

\end{proof}

\noindent\textbf{Theorem~\ref{thm:aggr-char}}
  If \emph{reward} is restricted to graphs with support relations
  only, \emph{card} is restricted to graphs with attack relations only,
  and both are restricted to the domain $[0,1]$, then
  the different implementations of $\aggrp$ satisfy all structural and essential
  characteristics as shown in Table.~\ref{tab:aggr-impl}.
  
\begin{proof}{}
\emph{Anonymity-2}: 
For \emph{sum}, we have $\aggrp(gP^{-1},Pd)=gP^{-1}Pd=gd=\aggrp(g,d)$.
The other semantics use matrix multiplication in a more complicated
way, but it is clear that this remains isomorphism-invariant.

\emph{Independence} follows since matrix multiplication ignores zeros.

\emph{Reinforcement-$\aggrp$}:
If $\forall i=1\ldots n.g_id^1_i\leq g_id^2_i$,
then also $gd^1=\sum_{i=1\ldots n}g_id^1_i\leq\sum_{i=1\ldots n} g_id^2_i=gd^2$.
This easily carries over to the other aggregation functions.

\emph{Parent monotonicity-$\aggrp$}: \emph{sum}, \emph{sum\textit{-}pos} and \emph{top}:
this is an easy property of the scalar product.
For  \emph{card}, this means $s_1\leq s_2$ and
$n_1\leq n_2$. Now the formulas for these two aggregations
are monotonic in both arguments, if one takes into account
that  $n=0$ implies $s_i=0$. Moreover, if the graph contains
attacks only, then also $|n_1|\geq |n_2|$, hence
$\frac{1}{|n_1|}\leq \frac{1}{|n_2|}$ and thus
$\aggrp^{\mathit{card}}(g_1,d)\leq \aggrp^{\mathit{card}}(g_2,d)$.
Concerning  \emph{reward}, parent monotonicity has been proved already
in \cite{DBLP:conf/ijcai/AmgoudBDV17} (under the name of monotony).

\emph{Stability-$\aggrp$}: for \emph{sum} and \emph{top}, we have $0d=0$.
For \emph{reward} and \emph{card}, $g=0$ implies $n=s=0$, hence
the result is $0$.

\emph{Continuity-$\aggrp$}: matrix-vector multiplication is continuous
in the vector. Moreover, the formulas for \emph{reward} and \emph{card}
are continuous in $s$.

\emph{Neutrality}: The only possible difference of $G$ and $G'$ is in
row $k$.  Since $d_k=0$, we obtain $Gd=G'd$.  For \emph{reward} and
\emph{card}, we additionally need to use that only founded attackers
and supporters count.

\emph{Directionality}: Since $d$ and $d'$ can only differ for indices
where $g$ is $0$, we get that $gd=gd'$. For \emph{reward} and \emph{card},
this means $n=n'$ and $s=s'$.

\emph{Strengthening}-$\aggrp$: 
The assumptions mean that all negative
entries in $g$ have corresponding positive entries in $g$ with a
larger or equal value (for the same index) in $d$. Thus, in the sum of
the scalar product, the positive entries in $g$ can counterbalance the
negative entries. Since also $d\geq 0$, we get $gd\geq 0$.  This leads
to a positive result also for \emph{reward} and \emph{card}.
For \emph{top}, the counterbalancing needs to consider only the
strongest attack and support.

\emph{Weakening}-$\aggrp$: dual to \emph{Strengthening}-$\aggrp$.

\emph{Franklin}: it is clear that matrix multiplication leads to
attacks and supports canceling out each other.

\emph{Counting}: it is clear that matrix multiplication leads to
additional support (attack) increasing (decreasing) the acceptability more.

The proofs for the sigmoid semantics are similar. Here, we need to use
that $\sigma^{-1}(0)=0$.
\end{proof}

\noindent\textbf{Theorem~\ref{thm:in-char}}
All implementations of $\inflp$ shown in
Table~\ref{tab:aggr-impl} satisfy all the structural and essential characteristics for $\inflp$
shown in Table~\ref{tab:characteristics},
if the listed conditions on $s$ are respected.

\begin{proof}{}
\emph{Reinforcement-$\inflp$} is easy to see in most
cases. For Euler-based, it follows from the double anti-monotonic
position of $s$.

\emph{Initial monotonicity} also is easy to see in most
cases. Rewrite the multilinear $\inflp(s,w)$ to $s+(1-s)w$.

\emph{Stability-$\inflp$} is also easy.

\emph{Continuity} follows since only
continuous functions are combined, and noting that for combined
fractional, both formulas agree on $0$.

For those semantics where it holds, \emph{compactness} is clear.
Likewise, \emph{resilience}, \emph{stickiness-min} and \emph{stickiness-max} are easy to see.
\end{proof}

\end{document}